%% file: arxiv/main.tex
\documentclass[11pt]{article}
\usepackage[left=1in,top=1in,right=1in,bottom=1in,letterpaper]{geometry}
\usepackage[colorlinks,linkcolor=red,filecolor=blue,citecolor=blue,urlcolor=blue]{hyperref}
\usepackage{mathtools}
\usepackage{url}
\usepackage{amsthm,amsmath,amssymb,amscd}
\usepackage{mathrsfs}
\usepackage{amsfonts}
\usepackage[utf8]{inputenc}
\usepackage{graphicx}
\usepackage{float}
\usepackage{bbding}
\everymath{\displaystyle}
\usepackage{indentfirst}
\usepackage{subcaption}
\usepackage{enumerate}
\usepackage{bm,bbm}
\usepackage[shortlabels]{enumitem}
\usepackage{algorithm,algorithmic}
\usepackage{dsfont}
\usepackage{booktabs}
\usepackage{array,siunitx}
\usepackage{xspace}
\usepackage{pifont}
\usepackage{xcolor}
\usepackage{colortbl}
\usepackage{cleveref}
\usepackage{mathpazo}
\usepackage{natbib}
\numberwithin{equation}{section}

\let\amsEqref\eqref
\input{math_commands}

\input{def}
\let\eqref\amsEqref

\renewcommand{\KL}{\operatorname{KL}}

\allowdisplaybreaks
\usepackage{afterpage}
\definecolor{fitink}{HTML}{1F5F99}
\definecolor{fitmetric}{HTML}{DDF1EA}
\definecolor{fitmetricink}{HTML}{176B57}

\title{How RL Reshapes LLM Reasoning: Transferability, Coverage, and Scaling Laws}

\author{Ziheng Cheng\thanks{Equal contribution.}, Yixiao Huang\footnotemark[1], Hanlin Zhu, Somayeh Sojoudi\\[0.5ex]
\normalsize University of California, Berkeley\\
\normalsize \texttt{\char123 ziheng\_cheng,yixiaoh,hanlinzhu,sojoudi\char125 @berkeley.edu}}
\date{}

\begin{document}

\maketitle

\begin{abstract}
Recent studies on reinforcement learning (RL) report seemingly conflicting evidence about large language model (LLM) reasoning. Training on mathematics can improve performance in other domains, yet gains in Pass@1 can coincide with lower Pass@$N$ than the base model. This raises a fundamental question: does RL expand an LLM's reasoning boundary, or merely reweight its existing reasoning space? We revisit these phenomena across Qwen and Gemma model families, showing both cross-domain gains and forgetting, while coverage at large sampling budgets increases on some tasks and decreases on others.
Detailed analysis of solution traces before and after RL indicates a shift in the reasoning strategies the model employs, motivating a two-stage autoregressive policy model that separates \emph{strategy selection} from problem-specific execution.
Within this framework, we prove how RL's implicit bias reshapes strategy preferences, allowing gains on some tasks while suppressing strategies required by others. This mechanism can also broaden or narrow coverage at a given sampling budget even without expanding strategy support.
We further provide theoretical justifications for log-sigmoid and log-linear scaling laws in RL compute, and evaluate their predictive power. 
Together, these results connect changes in strategy selection to cross-domain transfer, reasoning coverage, and compute scaling.
\end{abstract}

% Keep the introduction on page 1 and insert the teaser at the start of page 2.
\afterpage{\clearpage\input{arxiv/figures/how_rl_reshapes_teaser_figure}}

\input{arxiv/sections/introduction.tex}

\input{arxiv/sections/empirical_puzzles.tex}

\input{arxiv/sections/two_stage_model.tex}

\input{arxiv/sections/transfer_forgetting.tex}

\input{arxiv/sections/passn_boundary.tex}

\input{arxiv/sections/scaling_laws.tex}

\input{arxiv/sections/related_work.tex}

\input{arxiv/sections/conclusion.tex}
\input{arxiv/sections/ack.tex}

\bibliographystyle{plainnat}
\bibliography{ref,arxiv/related_work_refs}

\newpage
\appendix
\crefalias{section}{appendix}
\crefalias{subsection}{appendix}
\input{arxiv/appendix/transfer_forgetting_proofs.tex}

\input{arxiv/appendix/passn_boundary_proof.tex}

\input{arxiv/appendix/scaling_laws_proofs.tex}

\input{arxiv/appendix/experiment_details.tex}

\input{arxiv/appendix/reasoning_route_analysis.tex}

\input{arxiv/appendix/empirical_case_studies.tex}

\end{document}

%% file: math_commands.tex
\usepackage{amsmath,amsfonts,bm}

\def\eqref#1{equation~\ref{#1}}
\def\1{\bm{1}}

\DeclareMathAlphabet{\mathsfit}{\encodingdefault}{\sfdefault}{m}{sl}
\SetMathAlphabet{\mathsfit}{bold}{\encodingdefault}{\sfdefault}{bx}{n}

\newcommand{\E}{\mathbb{E}}

\newcommand{\R}{\mathbb{R}}

\newcommand{\KL}{D_{\mathrm{KL}}}

\DeclareMathOperator*{\argmin}{arg\,min}

%% file: def.tex
\newtheorem{thm}{Theorem}[section]

\newtheorem{Assumption}{Assumption}
\newtheorem{lemma}[thm]{Lemma}
\newtheorem{coro}[thm]{Corollary}

\theoremstyle{definition}

\crefname{equation}{eq.}{eqs.}
\Crefname{equation}{Eq.}{Eqs.}
\crefname{Assumption}{Assumption}{Assumptions}
\Crefname{Assumption}{Assumption}{Assumptions}
\crefname{thm}{Theorem}{Theorems}
\Crefname{thm}{Theorem}{Theorems}
\crefname{prop}{Proposition}{Propositions}
\Crefname{prop}{Proposition}{Propositions}
\crefname{definition}{def.}{defs.}
\Crefname{definition}{Def.}{Defs.}
\crefname{algorithm}{alg.}{algs.}
\Crefname{algorithm}{Alg.}{Algs.}
\crefname{figure}{Figure}{Figures}
\Crefname{figure}{Figure}{Figures}
\crefname{section}{Section}{Sections}
\Crefname{section}{Section}{Sections}

%% file: arxiv/figures/how_rl_reshapes_teaser_figure.tex
% Inserted after page 1; [H] keeps the teaser at that exact position.
\begin{figure}[H]
\centering
\includegraphics[width=\linewidth]{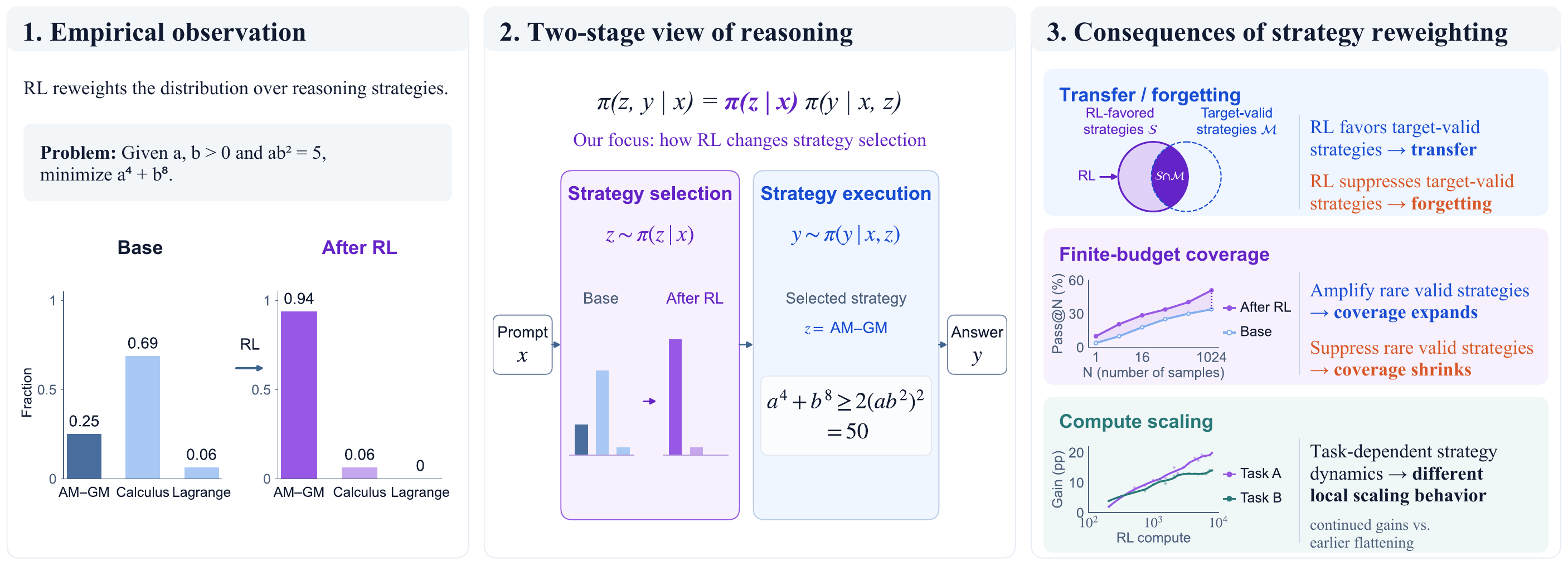}
\caption{\textbf{RL reweights the distribution over reasoning strategies.} Left: for the same problem, RL changes the relative frequencies of existing reasoning strategies. Center: a two-stage factorization separates strategy selection from execution, allowing us to isolate how RL reshapes strategy preferences. Right: the theory connects strategy reweighting to transfer and forgetting, finite-budget coverage, and compute scaling.}
\label{fig:teaser}
\end{figure}

%% file: arxiv/sections/introduction.tex
\section{Introduction}
\label{sec:introduction}

Reinforcement learning (RL) has improved the performance of large language models (LLMs) on reasoning tasks~\citep{shao2024deepseekmath,deepseekai2025r1,yu2025dapo}. RL updates a model using reward feedback on sampled responses, rather than training it to imitate demonstrations. Its success on mathematics and coding raises a broader question about how RL reshapes LLM reasoning.

What accounts for these gains remains debated. RL may teach a model new ways to reason, or make it more likely to use reasoning it can already produce. For example, RL training on mathematical problems can improve performance in other domains~\citep{huan2025transferability}, but such gains are compatible with either interpretation. Sampling provides another perspective. Gains in one-shot accuracy (Pass@1) can accompany lower Pass@$N$, the fraction of problems solved within $N$ sampled attempts~\citep{yue2504does}, while prolonged training has been reported to broaden coverage~\citep{liu2025prorl}. These findings motivate the central question. Does RL expand an LLM's reasoning boundary, or merely reweight its existing reasoning space?

Our experiments show that training can improve performance in its own domain while reducing performance on other tasks or lowering Pass@1024 (\Cref{fig:transfer-forgetting,fig:passn-qwen-gemma}).

Comparing Qwen trajectories before and after RL indicates a shift in the model's preferences among reasoning strategies (\Cref{sec:reasoning-mode-analysis}). This motivates a two-stage policy model that separates \emph{strategy selection} from problem-specific execution. Within this framework, we characterize how the implicit bias of RL reshapes strategy preferences, improving performance on some tasks while suppressing strategies required by others. We also contrast this mechanism with supervised fine-tuning (SFT). Our focus on how strategy preferences generalize across tasks complements the analyses of training-prompt sharpening and coverage~\citep{chen2026reshaping,chen2025coverage}.
Crucially, this reweighting mechanism explains coverage shifts without introducing new strategies. Pass@$N$ at large $N$ is especially sensitive to problems with low success probabilities. Amplifying or suppressing rare valid strategies can broaden or narrow benchmark coverage, even as Pass@1 improves.

Recent studies also examine how performance scales with RL compute~\citep{khatri2025scalerl,meta2026musespark}. Our framework connects strategy selection to scaling laws, showing that selection dynamics admit log-sigmoid and log-linear approximations.

Our contributions are summarized as follows:
% \paragraph{Contributions.}
\begin{itemize}
    \item We document task-specific RL transfer and forgetting alongside divergent coverage shifts, with outcomes dictated by the training data, evaluation benchmark, and base model (\Cref{sec:empirical}).
    \item We analyze Qwen solution trajectories and find changes in which strategies are sampled for a question, motivating a separation of strategy selection from execution (\Cref{sec:reasoning-mode-analysis,sec:two-stage}).
    \item Theoretically, we characterize RL's implicit bias in strategy selection and derive conditions for transfer, forgetting, and expansion or shrinkage of coverage at a given sampling budget (\Cref{sec:transfer,sec:passn}).
    \item We derive local approximation bounds for log-sigmoid and log-linear scaling laws and evaluate how well the fitted curves predict performance later in training, showing that predictive reliability varies with the training mixture and evaluation domain (\Cref{sec:scaling}).
\end{itemize}
% RL gains should therefore be judged by which strategies remain useful and accessible on target tasks, not by training-domain accuracy or Pass@1 alone.

%% file: arxiv/sections/empirical_puzzles.tex
\section{Empirical Evidence on Transfer and Reasoning Coverage}
\label{sec:empirical}

The extent to which RL transfers beyond its training domain and expands the reasoning boundary remains actively debated. In this section, we revisit these phenomena across various model families and task domains to show that these seemingly contradictory findings can be reconciled. 
% within different training and evaluation regimes.

\subsection{Experimental setup}
\label{sec:experimental-setup}

\paragraph{Models and datasets.}
Our experiments use Qwen3-14B-Base~\citep{yang2025qwen3} and Gemma-3-12B-IT~\citep{gemmateam2025gemma3} as base models. We train both models on 48K mathematics problems from UniReason~\citep{huan2025transferability} (Math RL).
For Qwen, we also train a variant on this dataset augmented with 5K curated coding problems from Eurus-2-RL~\citep{cui2025process} (Math-Code RL).
For comparison, we also train both models with SFT using 40K mathematics demonstrations obtained by rejection sampling on Qwen3-32B responses~\citep{huan2025transferability}.

\paragraph{Training setup.}
For all RL experiments, we train with GRPO~\citep{shao2024deepseekmath} using
verl~\citep{sheng2024hybridflow} for 200 steps at a learning rate of $10^{-6}$. Each step samples 512 prompts and generates 16 responses per prompt (up to 16,384 tokens), followed by one epoch of actor updates with a minibatch size of 128. For SFT, we train for 1.5 epochs using a batch size of 512 and a learning rate of $5\times 10^{-5}$. Further training details are given in \Cref{app:experiment-details}.

\paragraph{Evaluation protocol.}
We evaluate transfer using benchmarks of mathematics, coding, knowledge and planning,
and general capabilities.
Our default inference settings use temperature 0.6, top-$p$ 0.95, and a maximum generation length of 32,768 tokens, though specific benchmarks retain their original protocols such as greedy decoding. \Cref{app:evaluation-protocols} gives the benchmark list and complete task-specific sampling and scoring details.

\subsection{Cross-domain transfer and forgetting}
\label{sec:transfer-forgetting}

\input{arxiv/figures/transfer_forgetting_figure}

\Cref{fig:transfer-forgetting} illustrates the transfer and forgetting results. While RL consistently improves in-domain math performance, out-of-domain (OOD) behavior diverges significantly.
Incorporating coding data into training (Qwen3-Math-Code-RL) improves coding tasks but fails to yield universal transfer.
Moreover, despite identical training data, Qwen3-Math-RL and Gemma-3-Math-RL shift in opposite directions on many OOD tasks. 
SFT results exhibit similar model dependence: Qwen3-SFT shows selective transfer on knowledge and planning despite broad forgetting elsewhere, whereas Gemma-3-SFT suffers universal OOD degradation.

Together, these results indicate that training-domain success does not reliably predict OOD outcomes. Instead, \textbf{transferability emerges from the complex interaction among training data, evaluation benchmark, and base model}. Complete scores are reported in \Cref{tab:transfer-forgetting}. Although prior work suggests that RL can generalize better to OOD tasks than SFT \citep{huan2025transferability,chu2025sft}, our findings highlight that RL remains susceptible to forgetting, with the direction and magnitude of OOD performance changes varying substantially across settings \citep{chen2026retaining,shenfeld2026razor}.
% Our results additionally demonstrate that transfer performance varies across training and evaluation domains \citep{cheng2025crossdomain}, highlighting a complex interplay that standard accuracy metrics 
% alone cannot explain.

\subsection{Expansion and shrinkage of finite-budget coverage}
\label{sec:empirical-passn}
\input{arxiv/figures/qwen_gemma_passn_side_by_side_figure}

Task-level gains and losses show how training changes performance across domains. To examine how it changes the range of problems solved within each task, we compare coverage at different sampling budgets using Pass@$N$ (\Cref{fig:passn-qwen-gemma}).
For a fixed prompt, Pass@$N$ increases with Pass@1; however, benchmark-level comparisons do not necessarily preserve the Pass@1 ordering at large $N$.

\paragraph{Reduced coverage at large sampling budgets.}
The most prevalent pattern we observe is higher Pass@1 alongside lower Pass@1024 relative to the base model, especially in mathematics. This agrees with the coverage contrast reported by \citet{yue2504does}.

\paragraph{Broader coverage when base success is rare.}
On the Reasoning Gym task BoxNet~\citep{stojanovski2026reasoning}, where the base model's success is exceptionally rare, Qwen3-Math-RL raises Pass@1 from 3.8\% to 9.9\% and Pass@1024 from 34.0\% to 51.0\%. Gemma-3-Math-RL also raises Pass@1024 coverage, although its Pass@1 falls. Qwen's joint gains show that higher one-shot accuracy can accompany higher Pass@1024, consistent with \citet{liu2025prorl,wen2026reinforcement}.

Our observations are broadly consistent with prior work~\citep{liu2025prorl}: shifts in Pass@$N$ are largely dictated by the benchmark distribution and the base model's initial ability, with substantial gains generally emerging where initial coverage is low. We further find that the training data itself can alter this regime: Qwen3-Math-RL and Qwen3-Math-Code-RL move CodeContests coverage in opposite directions. Consequently, whether RL expands or shrinks finite-budget coverage is context-dependent, rather than a universal RL effect or a trend that can be inferred from Pass@1.

We examine the solution trajectories underlying these contrasts in \Cref{sec:reasoning-mode-analysis}.

%% file: arxiv/figures/transfer_forgetting_figure.tex
\begin{figure}[t]
\centering
\includegraphics[width=\linewidth]{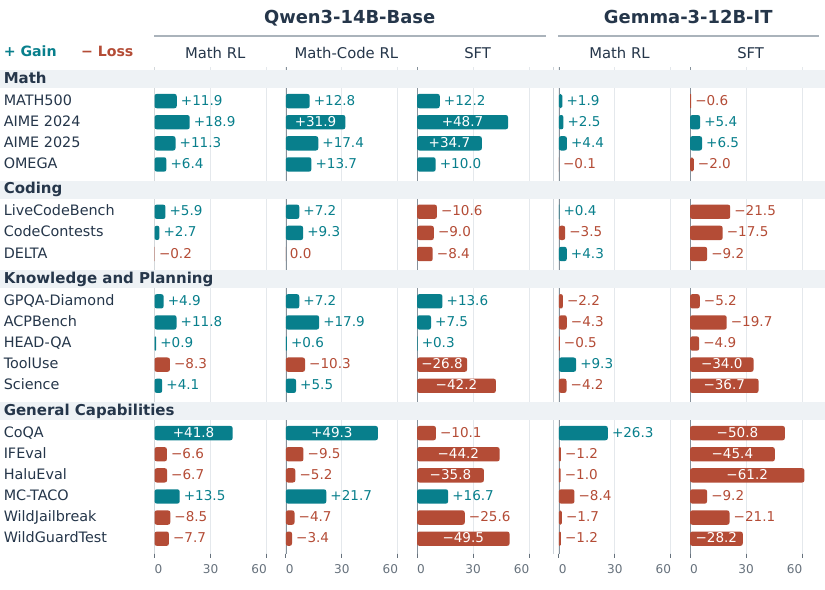}
\caption{\textbf{Transfer and forgetting depend on the benchmark, training data,
and base model.}
Each column reports changes in performance from the base model after training. RL consistently improves in-domain performance, but transfer behavior beyond the mathematics and coding domains varies significantly. In contrast, SFT exhibits catastrophic forgetting across a broad range of domains.}
\label{fig:transfer-forgetting}
\end{figure}

%% file: arxiv/figures/qwen_gemma_passn_side_by_side_figure.tex
\begin{figure}[t]
\centering
\includegraphics[width=\linewidth]
{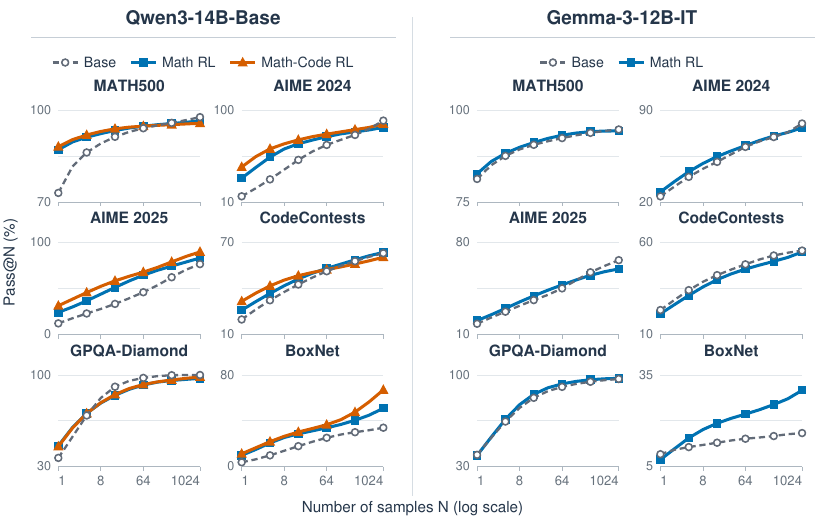}
\caption{\textbf{RL can either expand or contract reasoning coverage at large sampling budgets.} Higher Pass@1 can accompany either lower or higher Pass@1024 relative to the base model. The direction of the coverage change varies across training mixtures, benchmarks, and base models.}
\label{fig:passn-qwen-gemma}
\end{figure}

%% file: arxiv/sections/two_stage_model.tex
\section{From Empirical Observations to Reasoning Strategy Selection}
\label{sec:model}

To understand the performance and coverage changes in \Cref{sec:transfer-forgetting,sec:empirical-passn}, we compare solution trajectories before and after RL. The comparison indicates a shift in the strategies used to solve a question. We describe these strategies as \emph{reasoning routes} and examine how their use differs between the base and RL models. The findings motivate a model that separates strategy selection from conditional answer generation (\Cref{sec:two-stage}).

\subsection{Reasoning-route analysis}
\label{sec:reasoning-mode-analysis}

\subsubsection{Analysis setup}

We compare Qwen3-14B-Base with Qwen3-Math-RL on 240 questions from six benchmarks, with 16 responses per model and question. A \emph{reasoning route} is the strategy used to reach a response's final conclusion, when identifiable. Route identity reflects the problem representation and key mechanism, not notation, arithmetic, or final answers. For example, analytic reduction and exhaustive search are different routes. For each question, we combine both models' strategy descriptions extracted by an LLM from their reasoning traces and group them by route.
% , with model identity and correctness withheld.
We average the number of distinct routes over equally sized samples of identifiable responses from both models, using the smaller number of identifiable responses available for each question, and then average equally over questions.
Separately, we check which RL routes appear among the question's original 16 base responses. 
\Cref{app:reasoning-mode-analysis} gives classification and statistical details.

\subsubsection{Empirical findings}

\input{arxiv/figures/reasoning_routes_figure.tex}

\paragraph{RL concentrates on fewer routes per question.}
\Cref{fig:reasoning-routes}(a) shows lower mean route counts for Qwen3-Math-RL on all six benchmarks, with the largest decreases on coding benchmarks. Across questions, the mean falls from 3.24 for the base model to 2.66 for RL, with a paired difference (RL minus base) of $-0.58$ (95\% confidence interval $[-0.78,-0.38]$). For example, a LiveCodeBench question asks for the minimum absolute difference between array elements at least $x$ indices apart. Its 16 base responses use three routes, including comparison of neighboring values found by binary search, which all 16 RL responses use.

\paragraph{RL also uses routes absent from the base samples.}
Of 3,840 Qwen3-Math-RL responses, 487 (12.7\%) follow a route absent from the base responses to the same question (\Cref{fig:reasoning-routes}(b)). For example, on a LiveCodeBench question finding pattern occurrences within distance $k$ of another pattern's occurrence, all 16 base responses compare occurrence pairs directly, whereas 11 RL responses use binary search over occurrence lists.

% \noindent\textcolor{red}{\textbf{TODO 1.} Trim the route-analysis appendix to the trajectory-first framing.}

\subsection{A two-stage autoregressive model of chain-of-thought reasoning}
\label{sec:two-stage}

For analysis, we assume a shared library of reasoning strategies and represent a chain-of-thought trajectory as a strategy $z$ followed by an answer $y$. Both $z$ and $y$ may correspond to sequences of tokens rather than single tokens, yielding an autoregressive factorization at the block level. This abstraction separates strategy selection from execution, while the shared library provides a common space for studying transfer, coverage, and compute scaling in \Cref{sec:transfer,sec:passn,sec:scaling}.

Let $[M]=\{1,\ldots,M\}$ be a finite library of reasoning strategies and let
$\mathcal Y$ be a finite answer space.  For any prompt $x\in\mathcal{X}$, we consider a two-stage policy that first
generates a reasoning strategy $z\in[M]$ and then an answer $y\in\mathcal{Y}$ conditional on $z$. The joint policy is
\begin{equation}
\pi_{\theta,w}(z,y|x)
=\pi_\theta(z|x)\pi_w(y|x,z),
\label{eq:model}
\end{equation}
where both stages are linear softmax policies,
\begin{equation}
\pi_\theta(z|x)
=\frac{\exp(\theta_z^\top\phi(x))}
{\sum_{k=1}^M\exp(\theta_k^\top\phi(x))},
\qquad
\pi_w(y|x,z)
=\frac{\exp(w_y^\top\phi(x,z))}
{\sum_{v\in\mathcal Y}\exp(w_v^\top\phi(x,z))}.
\label{eq:softmax}
\end{equation}
Here $\theta_z,w_y\in\R^d$ are trainable parameters for strategy selection and answer generation, respectively. The feature maps $\phi(x)$ and $\phi(x,z)$ are frozen.
For a given prompt $x$, let $\mathcal M(x)\subseteq[M]$ be its non-empty set of
valid reasoning strategies and let $\mathcal A(x)\subseteq\mathcal Y$ be its non-empty
correct-answer set. We define the conditional and end-to-end scores as
\begin{equation}
R_w(x,z)
=\sum_{y\in\mathcal A(x)}\pi_w(y|x,z),
\qquad
R_{\theta,w}(x)
=\sum_{z\in\mathcal M(x)}
\pi_\theta(z|x)R_w(x,z).
\label{eq:reward}
\end{equation}

\subsection{Optimization dynamics}
Let $\{x_1,\ldots,x_n\}$ be the finite training set.
For $\beta>0$, define the following KL-regularized training objectives:
\begin{equation}
J_{\mathrm{strategy}}(\theta,w)
=\frac1n\sum_{i=1}^n\left[
R_{\theta,w}(x_i)
-\beta\KL\!\left(
\pi_\theta(\cdot|x_i)\,\middle\|\,
\pi_{\theta_0}(\cdot|x_i)
\right)\right],
\label{eq:mode-objective}
\end{equation}
\begin{equation}
J_{\mathrm{ans}}(\theta,w)
=\frac1n\sum_{i=1}^n
\Big[R_{\theta,w}(x_i)-\beta\mathbb E_{\pi_\theta(\cdot|x_i)}\big[
\mathbbm{1}_{z\in\mathcal M(x_i)}
\KL\!\big(
\pi_w(\cdot|x_i,z)\,\|\,
\pi_{w_0}(\cdot|x_i,z)
\big)
\big]\Big].
\label{eq:answer-objective}
\end{equation}
% Following the two-stage factorization, the strategy policy $\pi_\theta$ uses end-to-end success rate as its return, while the answer policy $\pi_w$ learns from visited valid strategies. Each KL term anchors the corresponding policy to the base policy.
Motivated by the small learning rates used in prominent RL pipelines
\citep{yu2025dapo,deepseekai2025r1}, we analyze the policy gradient flow given by
\begin{equation}
\dot\theta_t=\nabla_\theta J_{\mathrm{strategy}}(\theta_t,w_t),
\qquad
\dot w_t=\nabla_w J_{\mathrm{ans}}(\theta_t,w_t),
\qquad
(\theta_t,w_t)\big|_{t=0}=(\theta_0,w_0).
\label{eq:pg-flow}
\end{equation}
Along the RL dynamics, define $R_t(x,z):=R_{w_t}(x,z), R_t(x):=R_{\theta_t,w_t}(x)$.
Throughout our theoretical analysis, we consider a vanishing but strictly positive KL coefficient, $\beta\downarrow 0$. Without KL regularization, the policy collapses to the valid strategies with the largest initial probability.

%% file: arxiv/figures/reasoning_routes_figure.tex
\begin{figure}[!t]
\centering
\includegraphics[width=\linewidth]{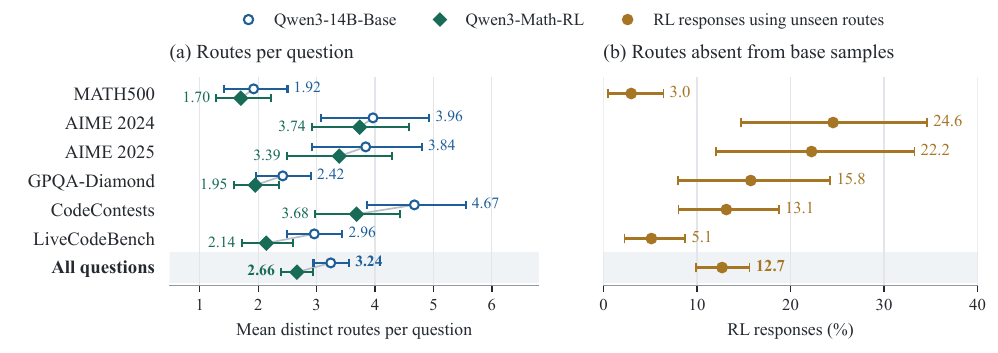}
\caption{\textbf{RL uses fewer routes, but also routes absent from base samples.}
(a) Mean numbers of distinct routes, using equal numbers of identifiable responses from both models for each question.
(b) Percentage of RL responses whose route is absent from the same question's 16 base responses.}
\label{fig:reasoning-routes}
\end{figure}

%% file: arxiv/sections/transfer_forgetting.tex
\section{Reasoning Transfer and Forgetting}
\label{sec:transfer}

We now use the two-stage model to analyze how RL-induced shifts in strategy selection transfer across tasks and influence target performance.

\begin{Assumption}[Realizability on training data]
\label{ass:source-realizability}
The training-prompt features are orthogonal:
\begin{equation}
\left\langle\phi(x_i),\phi(x_j)\right\rangle=0,\qquad \forall~i,j\in[n],\ i\ne j.
\label{eq:source-orthogonality}
\end{equation}
Moreover, for every $y\in\mathcal Y$, there exists $u_y$ such that
\begin{equation}
\phi(x_i,z)^\top u_y
=\mathbbm{1}_{y\in\mathcal A(x_i)}
-|\mathcal A(x_i)|/|\mathcal Y|,
\qquad \forall~i\in[n],\quad z\in\mathcal M(x_i).
\label{eq:source-realizability}
\end{equation}
\end{Assumption}
This is an overparameterized setting. We assume the feature orthogonality in
\Cref{eq:source-orthogonality} for clarity; the analysis extends to linearly
independent features. \Cref{eq:source-realizability} ensures linear separation
of the training data within the answer space.

\subsection{RL transferability from implicit bias}

For a target prompt $x'$, define its feature overlap with each
training prompt by
\begin{equation}
c(x',x_i)
=\left\langle\phi(x'),\phi(x_i)\right\rangle/\|\phi(x_i)\|^2.
\label{eq:transfer-coefficients}
\end{equation}
% Its sign and magnitude determine how the gate update learned at $x_i$ is transported to $x'$.
Define the strategies with largest aggregate training support by
\begin{equation}
\mathcal S(x')
=\operatorname*{arg\,max}_{z\in[M]}
\sum_{i=1}^n c(x',x_i)\mathbbm{1}_{z\in\mathcal M(x_i)}.
\label{eq:selected-modes}
\end{equation}
Let $\pi_\infty^\beta$ denote the convergent policy at fixed $\beta$ and
define the vanishing-KL limit $\pi_{\rm RL}:=\lim_{\beta\downarrow0}\pi_\infty^\beta$.
\begin{thm}[Transferability of RL]
\label{thm:pg-transfer}
Under \Cref{ass:source-realizability}, for any
finite $t$ and $z\in[M]$,
\begin{equation}
\pi_{\theta_t}(z|x')
\propto \pi_{\theta_0}(z|x')
\prod_{i=1}^n
\left[
\frac{\pi_{\theta_t}(z|x_i)}
{\pi_{\theta_0}(z|x_i)}
\right]^{c(x',x_i)}.
% \log\pi_{\theta_t}(z|x')
% = \log\pi_{\theta_0}(z|x')
% +\sum_{i=1}^n
% c(x',x_i)[\log
% \pi_{\theta_t}(z|x_i)-\log\pi_{\theta_0}(z|x_i)]+\rm{const}.
\label{eq:pg-transfer-identity}
\end{equation}
In the vanishing KL limit, the RL strategy policy is the base policy conditioned on
$\mathcal S(x')$,
% \begin{equation}
% \pi_{\rm RL}(z|x')=
% \frac{\pi_0(z|x')\mathbbm{1}_{z\in\mathcal S(x')}}
% {\pi_0(\mathcal S(x')|x')},
% \label{eq:pg-target-low-beta}
% \end{equation}
and the probability of selecting a valid strategy is
\begin{equation}
\pi_{\rm RL}(\mathcal M(x')|x')
=\frac{\pi_0(\mathcal S(x')\cap\mathcal M(x')|x')}
{\pi_0(\mathcal S(x')|x')}.
\label{eq:pg-target-correct-mass}
\end{equation}
\end{thm}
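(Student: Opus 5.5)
The plan rests on one structural fact: the gradient of $J_{\mathrm{strategy}}$ with respect to $\theta_z$ lies in $\mathrm{span}\{\phi(x_i)\}_{i=1}^n$. Each summand in \Cref{eq:mode-objective} depends on $\theta$ only through the logits $\theta_k^\top\phi(x_i)$. Writing $\ell_{i,k}(\theta)=\theta_k^\top\phi(x_i)$, the chain rule gives
\[
\dot\theta_{t,z}=\frac1n\sum_{i=1}^n g_{i,z}(t)\,\phi(x_i)
\]
for scalar coefficients $g_{i,z}(t)$. These are the derivatives of $R_{\theta,w}(x_i)-\beta\KL$ with respect to the logit $\ell_{i,z}$. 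Integrating from $0$ to $t$ yields
\[
\theta_{t,z}-\theta_{0,z}=\sum_i a_{i,z}(t)\,\phi(x_i).
\]
The $w$-dynamics never enter except through these scalar coefficients, so no assumption on $w$ is needed for the first claim.

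\textbf{Step 1 (identity \eqref{eq:pg-transfer-identity}).} By orthogonality \eqref{eq:source-orthogonality},
\[
\Delta\ell_{j,z}:=(\theta_{t,z}-\theta_{0,z})^\top\phi(x_j)=a_{j,z}(t)\|\phi(x_j)\|^2 .
\]
Hence $a_{j,z}=\Delta\ell_{j,z}/\|\phi(x_j)\|^2$. Since $\log\pi_{\theta_t}(z|x_j)-\log\pi_{\theta_0}(z|x_j)=\Delta\ell_{j,z}-C_j$, where $C_j$ does not depend on $z$, we get
\[
(\theta_{t,z}-\theta_{0,z})^\top\phi(x')=\sum_j c(x',x_j)\big[\log\tfrac{\pi_{\theta_t}(z|x_j)}{\pi_{\theta_0}(z|x_j)}+C_j\big].
\]
The term $\sum_j c(x',x_j)C_j$ is independent of $z$ and is absorbed into the softmax normalization at $x'$. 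This gives the product formula \eqref{eq:pg-transfer-identity}.

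\textbf{Step 2 (limit policy on training prompts).} Orthogonality decouples the strategy dynamics across training prompts. The logits at $x_i$ move freely; $\theta$ can realize any logit changes on the $x_i$ directions. So the stationary point at fixed $\beta$ maximizes, prompt by prompt, $R(x_i)-\beta\KL$. This holds once $w_t$ has converged, and by \eqref{eq:source-realizability} $R_w(x_i,z)\to1$ for all valid $z$.

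The maximizer is a Gibbs tilt:
\[
\pi^\beta_\infty(z|x_i)\propto\pi_{\theta_0}(z|x_i)\exp\big(\mathbbm 1_{z\in\mathcal M(x_i)}R_\infty(x_i,z)/\beta\big).
\]
In the limit $\beta\downarrow 0$ this gives
\[
\log\frac{\pi_\infty(z|x_i)}{\pi_{\theta_0}(z|x_i)}=\frac{1}{\beta}\mathbbm 1_{z\in\mathcal M(x_i)}+O(1)+\text{const}_i .
\]
The $O(1)$ correction comes from $1-R_\infty$. I need to show $R_\infty(x_i,z)=1-o(1)$ uniformly, or at least show that its deviation is negligible relative to $1/\beta$. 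This uses the realizability of the answer stage: the KL-regularized answer optimum has $R_\infty\ge 1-O(\beta\log)$.

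\textbf{Step 3 (transfer to $x'$).} Substituting into Step 1 gives target logits
\[
\ell_{0}(z|x')+\frac1\beta\sum_i c(x',x_i)\mathbbm 1_{z\in\mathcal M(x_i)}+O(1).
\]
As $\beta\downarrow0$, the softmax concentrates on the argmax set $\mathcal S(x')$ from \eqref{eq:selected-modes}. Within $\mathcal S(x')$ the $1/\beta$ terms tie, so the relative weights are $\pi_0(z|x')$ times the $O(1)$ residual factors. Summing over $\mathcal M(x')$ then gives \eqref{eq:pg-target-correct-mass}.

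\textbf{Main obstacle.} The main difficulty is controlling these $O(1)$ residuals. Concretely, I must show $\sum_i c(x',x_i)\log R_\infty(x_i,z)$-type corrections vanish as $\beta\downarrow0$. I would first try to prove $1-R^\beta_\infty(x_i,z)\to0$ for every valid $z$. The answer-stage objective is a KL-regularized logistic-type problem whose realizable direction $u_y$ drives all valid-strategy rewards to $1$, and the required residual terms are $\log R_\infty\to0$.

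If the answer stage converges only along a max-margin direction, with the gap shrinking like $\beta$, then the residual is $o(1)$ and the limit is the exact conditioned base policy. One also needs existence of the convergent policy $\pi^\beta_\infty$, which I would take as given by the definition preceding the theorem.
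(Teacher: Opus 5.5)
Your Step 1 is correct and is the paper's argument: span invariance, orthogonality giving $\phi(x')^\top(\theta_{t,z}-\theta_{0,z})=\sum_i c(x',x_i)\,\phi(x_i)^\top(\theta_{t,z}-\theta_{0,z})$, and absorption of the $z$-independent log-partition shifts. Steps 2--3 also follow the paper's overall route. The gap is in the step you flag as the main obstacle, and your proposed fix does not close it.

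Substitute the Gibbs tilt on training prompts into \eqref{eq:pg-transfer-identity}. The per-prompt normalizers are $z$-independent and drop out, so the limiting target logit is exactly
\[
\log\pi_0(z|x')+\frac1\beta\sum_{i}c(x',x_i)\mathbbm{1}_{z\in\mathcal M(x_i)}-\sum_{i}c(x',x_i)\mathbbm{1}_{z\in\mathcal M(x_i)}\frac{1-R_\infty^\beta(x_i,z)}{\beta}+\text{const}.
\]
The residual is therefore $(1-R_\infty^\beta)/\beta$, not a $\sum_i c(x',x_i)\log R_\infty$ term. Because $R_\infty^\beta(x_i,z)$ depends on $z$ through $R_0(x_i,z)$, keeping the base ratios inside $\mathcal S(x')$ requires $1-R_\infty^\beta(x_i,z)=o(\beta)$. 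Your candidate bounds ($1-R_\infty=o(1)$, $R_\infty\ge 1-O(\beta\log)$, or a gap ``shrinking like $\beta$'') leave a $z$-dependent tilt of order $O(1)$ or larger. For example, if $(1-R_\infty^\beta(x_i,z))/\beta\to\kappa_{i,z}$, the limit is $\pi_0(z|x')\exp\bigl(-\sum_i c(x',x_i)\mathbbm{1}_{z\in\mathcal M(x_i)}\kappa_{i,z}\bigr)$ renormalized on $\mathcal S(x')$, and \eqref{eq:pg-target-correct-mass} fails in general. Two related errors: at fixed $\beta$ you do not have $R_w(x_i,z)\to 1$, and there is no max-margin divergence, since the KL term keeps the answer parameters bounded.

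The missing ingredient is the exact fixed-$\beta$ answer limit. For each valid $(x_i,z)$,
\[
R_w(x_i,z)-\beta\KL(\pi_w\|\pi_{w_0})=-\beta\KL(\pi_w\|\pi^\star)+\text{const},
\qquad
\pi^\star(y)\propto\pi_0(y|x_i,z)e^{\mathbbm{1}_{y\in\mathcal A(x_i)}/\beta}.
\]
By \eqref{eq:source-realizability}, the single parameter $w^\star_y=w_{0,y}+u_y/\beta$ realizes all these tilts simultaneously. Then
\[
\frac{d}{dt}\tfrac12\|w_t-w^\star\|^2=-\frac\beta n\sum_i\sum_{z\in\mathcal M(x_i)}\pi_{\theta_t}(z|x_i)\operatorname{Var}_{\pi_{w_t}(\cdot|x_i,z)}\bigl(\phi(x_i,z)^\top(w_t-w^\star)\bigr),
\]
and this is how the paper proves convergence to $\pi^\star$. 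It combines this with a coupled Lyapunov argument for $\theta$, which also supplies the existence of $\pi_\infty^\beta$ that you take for granted. The result is $R_\infty^\beta=R_0/\bigl(R_0+(1-R_0)e^{-1/\beta}\bigr)$, so $1-R_\infty^\beta=O(e^{-1/\beta})=o(\beta)$. The residual then vanishes, and the conditioned-base-policy limit follows.
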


\begin{Assumption}[Target answer-feature orthogonality]
\label{ass:answer-orthogonality}
The answer features of $x'$ are orthogonal to source-valid answer features:
$
\left\langle\phi(x',z'),\phi(x_i,z)\right\rangle=0, \forall\ z\in\mathcal{M}(x_i),z'\in[M].
$
\end{Assumption}

\begin{coro}
\label{cor:pg-target-reward}
Under \Cref{ass:source-realizability,ass:answer-orthogonality}, in the
vanishing-KL limit the reward on the target prompt is
\begin{equation}
R_{\pi_{\rm RL}}(x')
=
\frac{
\sum_{z\in\mathcal S(x')\cap\mathcal M(x')}
\pi_0(z|x')R_0(x',z)}
{\pi_0(\mathcal S(x')|x')}.
\label{eq:pg-target-reward}
\end{equation}
\end{coro}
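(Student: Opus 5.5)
The plan is to write the target reward as $R_{\pi_{\rm RL}}(x')=\sum_{z\in\mathcal M(x')}\pi_{\rm RL}(z|x')\,R_{\rm RL}(x',z)$, where $R_{\rm RL}(x',z)$ is the limiting conditional score. The strategy factor is already given by \Cref{thm:pg-transfer}. In the vanishing-KL limit, $\pi_{\rm RL}(z|x')=\pi_0(z|x')\mathbbm{1}_{z\in\mathcal S(x')}/\pi_0(\mathcal S(x')|x')$. Substituting this immediately restricts the sum to $\mathcal S(x')\cap\mathcal M(x')$ and produces the normalization $\pi_0(\mathcal S(x')|x')$ in \Cref{eq:pg-target-reward}. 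So the remaining task is to show that the answer policy on the target prompt never moves: $R_t(x',z)=R_0(x',z)$ for all $t$ and all $z\in[M]$, and hence also in the limit.

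For the answer stage, I will compute $\nabla_w J_{\mathrm{ans}}$ explicitly. Since $\pi_w(\cdot|x_i,z)$ is a linear softmax in $w$, the gradient of both the reward term and the KL term with respect to $w_y$ is a linear combination of the features $\phi(x_i,z)$. Each feature carries a scalar coefficient depending on $(\theta,w)$. The relevant pairs have $z\in\mathcal M(x_i)$. For the reward term $R_{\theta,w}(x_i)=\sum_{z\in\mathcal M(x_i)}\pi_\theta(z|x_i)R_w(x_i,z)$ this holds by definition. For the KL term it holds because of the indicator $\mathbbm{1}_{z\in\mathcal M(x_i)}$ in \Cref{eq:answer-objective}. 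The factor $\pi_\theta(z|x_i)$ does not depend on $w$, so it contributes only a coefficient. Consequently $w_{y,t}-w_{y,0}\in\operatorname{span}\{\phi(x_i,z):i\in[n],\,z\in\mathcal M(x_i)\}$ for every $y$ and every finite $t$, and the same holds along the flow at any fixed $\beta$.

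By \Cref{ass:answer-orthogonality}, every $\phi(x',z')$ is orthogonal to this span. Therefore $w_{y,t}^\top\phi(x',z')=w_{y,0}^\top\phi(x',z')$ for all $y,z'$, so $\pi_{w_t}(\cdot|x',z')=\pi_{w_0}(\cdot|x',z')$ and $R_t(x',z')=R_0(x',z')$. This invariance is exact for every $t$ and $\beta$, so it survives both the $t\to\infty$ and the $\beta\downarrow0$ limits, which justifies using $R_0(x',z)$ in the limiting formula. Combining it with the strategy limit from \Cref{thm:pg-transfer} gives \Cref{eq:pg-target-reward}.

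The main obstacle is making the limits rigorous. The limit of the product $\pi_\theta(z|x')R_w(x',z)$ factorizes only because the second factor is constant in $t$ and $\beta$. That reduces everything to the convergence $\pi_\infty^\beta\to\pi_{\rm RL}$ already asserted by \Cref{thm:pg-transfer}. I would also note that \Cref{ass:source-realizability} is used only through that theorem. The answer-stage argument needs nothing beyond the span property and \Cref{ass:answer-orthogonality}.
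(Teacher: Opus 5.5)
Your proposal is correct and follows essentially the same route as the paper. Answer-parameter increments stay in the span of $\phi(x_i,z)$ with $z\in\mathcal M(x_i)$, so \Cref{ass:answer-orthogonality} freezes every target answer logit and gives $R_t(x',z)=R_0(x',z)$; substituting the vanishing-KL strategy limit from \Cref{thm:pg-transfer} then yields the formula, with the paper citing the fixed-$\beta$ convergence rate of the target strategy policy where you use plain convergence.
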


Detailed proofs are deferred to \Cref{app:transfer-proofs}.
\Cref{thm:pg-transfer} shows how changes in strategy preferences on training prompts transfer to target prompts through feature overlap. Under answer-feature orthogonality, \Cref{cor:pg-target-reward} translates this strategy selection into end-to-end reward.

Our results show that the inductive bias of RL preserves the base model's preference among the surviving strategies \citep{shenfeld2026razor}.
More crucially, this preservation does not determine target performance. \textbf{Feature overlap governs which strategy preferences transfer, but cannot alone predict whether the target task benefits}. If $\mathcal S(x')\cap\mathcal M(x')=\emptyset$, all valid strategies are suppressed, thereby inducing forgetting. Therefore, RL transferability is jointly determined by the reasoning strategies favored in the training data, the valid strategies required by the target task, and the base model's bias.
The heterogeneous shifts in \Cref{fig:transfer-forgetting} are compatible with these selective effects and match the qualitative predictions.
% Math-RL training improves ACPBench but degrades ToolUse for Qwen, yet causes the opposite effect for Gemma.
% The training mixture also matters: Qwen3-Math-Code-RL gains substantially more on coding domain than Qwen3-Math-RL. 
% Together, these observations match the theorem's qualitative prediction. 
Further prompt-level analyses are given in \Cref{app:transfer-output-analysis}.

\subsection{When does SFT exhibit catastrophic forgetting}

To compare with RL, we analyze SFT under the same theoretical setup.
For each training prompt $x_i$, we sample an expert demonstration
$z_{E,i}\sim\pi_E(\cdot|x_i),y_{E,i}\sim\pi_E(\cdot|x_i,z_{E,i})$, conditioning on
$z_{E,i}\in\mathcal M(x_i)$ and $y_{E,i}\in\mathcal A(x_i)$.
The training objective is
\begin{equation}
J_{\rm SFT}(\theta,w)
=\frac1n\sum_{i=1}^n
\left[
\log\pi_\theta(z_{E,i}|x_i)
+\log\pi_w(y_{E,i}|x_i,z_{E,i})
\right].
\label{eq:sft-objective}
\end{equation}
Consider the gradient flow from
$(\theta_0,w_0)$ on \Cref{eq:sft-objective} and let $\pi_{\rm SFT}$ be the convergent policy.
As in the RL analysis, define the strategies with largest support from
the realized demonstrations:
\begin{equation}
\mathcal S_E(x')
=\operatorname*{arg\,max}_{z\in[M]}
\sum_{i=1}^n c(x',x_i)\mathbbm{1}_{z=z_{E,i}}.
\label{eq:sft-selected-modes}
\end{equation}

\begin{thm}[Transferability of SFT]
\label{thm:sft-transfer}
Under \Cref{ass:source-realizability}, for any realized expert traces,
the convergent SFT strategy policy satisfies
\begin{equation}
\pi_{\rm SFT}(z|x')>0
\quad\Longleftrightarrow\quad z\in\mathcal S_E(x').
\label{eq:sft-target-support}
\end{equation}
The probability of valid strategies tends to one if
$\mathcal S_E(x')\subseteq\mathcal M(x')$ and to zero if
$\mathcal S_E(x')\cap\mathcal M(x')=\varnothing$.
\end{thm}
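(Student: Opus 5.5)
The proof will follow the same route as \Cref{thm:pg-transfer}: characterize the SFT update to $\theta$ on the span of the training features, then transport it to $x'$ through the coefficients $c(x',x_i)$.

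\textbf{Step 1: the update stays in the training span.} The gradient of $J_{\rm SFT}$ with respect to $\theta_z$ is
\[
\frac1n\sum_{i=1}^n\big(\mathbbm{1}_{z=z_{E,i}}-\pi_\theta(z|x_i)\big)\phi(x_i).
\]
So $\theta_{z,t}-\theta_{z,0}=\sum_i a_{z,i}(t)\phi(x_i)$ for some scalar coefficients $a_{z,i}(t)$. Orthogonality (\Cref{eq:source-orthogonality}) decouples the prompts. For each $i$, the logits $\ell_{z,i}=\theta_z^\top\phi(x_i)$ evolve as
\[
\dot\ell_{z,i}=\tfrac{\|\phi(x_i)\|^2}{n}\big(\mathbbm{1}_{z=z_{E,i}}-\pi(z|x_i)\big).
\]
This is a cross-entropy gradient flow on a single softmax with a single target. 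Two facts follow.
\begin{itemize}
\item[(a)] The sum $\sum_z \dot\ell_{z,i}$ vanishes, so the centered logit gaps are the only things that move.
\item[(b)] For $z\neq z_{E,i}$, $\dot\ell_{z,i}=-\tfrac{\|\phi(x_i)\|^2}{n}\pi(z|x_i)<0$, while $\ell_{z_{E,i},i}$ increases.
\end{itemize}
Let $g_i=\ell_{z_{E,i},i}-\ell_{z,i}$ be the gap. It satisfies $\dot g_i\gtrsim \|\phi(x_i)\|^2(1-\pi(z_{E,i}|x_i))/n$, which is of order $e^{-g_i}$, so $g_i\sim\log t\to\infty$. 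The gaps among the non-demonstrated strategies stay bounded: each of them is pushed down at a rate proportional to its own probability, and those probabilities are all $O(e^{-g_i})$ with bounded ratios. Hence
\[
\ell_{z,i}(t)-\ell_{z,i}(0)=\big(\mathbbm{1}_{z=z_{E,i}}-\tfrac1M\big)\,h_i(t)+O(1),
\]
with $h_i(t)\to\infty$ at a common logarithmic rate, up to constants depending on $\|\phi(x_i)\|^2/n$. Concretely, the demonstrated strategy climbs and the rest descend together. Multiplying by $\|\phi(x_i)\|^2$ normalizes the per-prompt rate, so the effective divergence at $x_i$ is $\log t$ times a universal constant.

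\textbf{Step 2: transport to $x'$.} Using Step 1,
\[
\theta_{z,t}^\top\phi(x')=\theta_{z,0}^\top\phi(x')+\sum_i c(x',x_i)\big(\ell_{z,i}(t)-\ell_{z,i}(0)\big).
\]
This is the SFT analogue of \Cref{eq:pg-transfer-identity}. Substituting the asymptotics, the logit of $z$ at $x'$ equals
\[
\log t\cdot\sum_i c(x',x_i)\mathbbm{1}_{z=z_{E,i}}+\text{(a $z$-independent term)}+O(1).
\]
Normalizing, $\pi_{\theta_t}(z|x')$ therefore converges to a positive limit exactly when $z$ maximizes $\sum_i c(x',x_i)\mathbbm{1}_{z=z_{E,i}}$, i.e.\ $z\in\mathcal S_E(x')$, and decays like a negative power of $t$ otherwise. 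This gives \Cref{eq:sft-target-support}. The valid-mass statements then follow immediately. If $\mathcal S_E(x')\subseteq\mathcal M(x')$, all limiting mass lies in $\mathcal M(x')$. If $\mathcal S_E(x')\cap\mathcal M(x')=\varnothing$, the mass on $\mathcal M(x')$ decays to zero.

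\textbf{Main obstacle.} The delicate point is Step 1: showing that the divergence rate $h_i$ has the same leading order across prompts. The rates differ by constants ($\|\phi(x_i)\|^2/n$), and these enter only as additive $O(\log\log t)$ or $O(1)$ corrections. I would handle this by solving the one-dimensional ODE for the gap explicitly: $\dot g\approx C_i(1+(M-1)^{-1})e^{-g}$ gives $g_i=\log t+\log C_i+o(1)$. The $\log C_i$ term is bounded and so cannot reorder the strict argmax. The non-demonstrated logits also need care: their differences converge because their drift rates are $O(1/t)$ with ratios converging, so the integrated differences are $O(\log t)$ times a vanishing factor plus a constant. I would control this by writing the ODE for differences of non-demonstrated logits, which are driven by differences of exponentially small probabilities, and showing it is integrable.
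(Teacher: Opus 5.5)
Your proposal is correct and follows essentially the same route as the paper. Orthogonality decouples the per-prompt logit flows, and each training logit moves as $(\mathbbm{1}_{z=z_{E,i}}-1/M)\log t+O(1)$ with a convergent remainder. Transporting through $c(x',x_i)$ then identifies $\mathcal S_E(x')$ as the maximizers of the $\log t$ coefficient. Where you differ is your ``main obstacle'': the paper handles it with an exact identity instead of gap asymptotics. Writing $a_{i,k}=\ell_{k,i}$, for every $k\neq z_{E,i}$ one has $\frac{d}{dt}e^{-a_{i,k}}=\frac{\|\phi(x_i)\|^2/n}{\sum_z e^{a_{i,z}}}$, which is the same for all such $k$. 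Hence $e^{-a_{i,k}(t)}=e^{-a_{i,k}(0)}+u_i(t)$ for a common increasing $u_i$, and the conserved logit sum gives $u_i(t)^M\sim M\|\phi(x_i)\|^2t/\bigl(n e^{\sum_z a_{i,z}(0)}\bigr)$, which establishes convergence of all the bounded terms at once.
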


The definition of supporting strategies in \Cref{eq:selected-modes,eq:sft-selected-modes} exposes a fundamental distinction: RL amplifies all source-valid strategies, whereas SFT reinforces only the demonstrated strategies.
Consequently, \textbf{the RL policy broadly preserves the base model's prior, whereas SFT behavior depends heavily on the characteristics of the teacher policy}. SFT is therefore more prone to catastrophic forgetting, as observed in \Cref{fig:transfer-forgetting}. However, this vulnerability is not absolute; Qwen3-SFT, for instance, outperforms the RL models on the OOD benchmark GPQA-Diamond.

More broadly, the transferability of SFT is highly sensitive to compatibility between the student and teacher policies. Since the SFT demonstrations originate from Qwen3-32B, they naturally align better with Qwen3-14B-Base than with Gemma-3-12B-IT, accounting for the substantially broader forgetting of Gemma-3-SFT in \Cref{fig:transfer-forgetting}. Further examples and analysis are provided
in \Cref{app:transfer-output-analysis}.

%% file: arxiv/sections/passn_boundary.tex
\section{Expansion and Shrinkage of Finite-Budget Coverage}
\label{sec:passn}
We now examine how these prompt-level shifts in strategy preference aggregate across an evaluation benchmark $\mathcal{B}$ to alter overall coverage.
% For any policy $\pi$, let $R_\pi(x')$ denote its success rate. 
Define the Pass@$N$ metric as
\begin{equation}
\operatorname{Pass@}N(\pi)
=\E_{x'\sim\mathcal B}\left[1-\bigl(1-R_\pi(x')\bigr)^N\right].
\end{equation}
To isolate strategy selection, we evaluate each strategy policy with an ideal answer decoder: a response succeeds exactly when its strategy is valid. Consequently, \Cref{thm:pg-transfer} gives:
\begin{equation}
R_{\pi_0}(x')=\pi_0(\mathcal M(x')|x'),
\qquad
R_{\pi_{\rm RL}}(x')
=\frac{\pi_0(\mathcal S(x')\cap\mathcal M(x')|x')}
{\pi_0(\mathcal S(x')|x')}.
\end{equation}

\begin{thm}
\label{thm:passn-regimes}
Under \Cref{ass:source-realizability}, fix $\alpha\in(0,1]$ and an integer $N>\log(4/\alpha)$.
\begin{enumerate}[label=(\Roman*), leftmargin=*]
    \item \textbf{Strategy loss (higher base-model coverage):}
    If
    \begin{equation*}
    \Pr_{x'\sim\mathcal B}\left[
    \mathcal S(x')\cap\mathcal M(x')=\emptyset
    \right]\geq\alpha,
    \quad
    \Pr_{x'\sim\mathcal B}\left[
    \pi_0(\mathcal M(x')|x')
    \leq\frac{\log(4/\alpha)}{N}
    \right]\leq\frac{\alpha}{4},
    \end{equation*}
    then
    \begin{equation*}
    \operatorname{Pass@}N(\pi_0)
    -\operatorname{Pass@}N(\pi_{\mathrm{RL}})
    >\frac{\alpha}{2}.
    \end{equation*}

    \item \textbf{Rare-strategy amplification (higher RL coverage):} If
    \begin{equation*}
    \Pr_{x'\sim\mathcal B}\left[
    \pi_0(\mathcal M(x')|x')\leq\frac{1}{4N}
    \right]\geq\alpha,
    \quad 
    \Pr_{x'\sim\mathcal B}\left[
    \frac{\pi_0(\mathcal S(x')\cap\mathcal M(x')|x')}
    {\pi_0(\mathcal S(x')|x')}
    \leq\frac{\log(4/\alpha)}{N}
    \right]\leq\frac{\alpha}{4},
    \end{equation*}
    then
    \begin{equation*}
    \operatorname{Pass@}N(\pi_{\mathrm{RL}})
    -\operatorname{Pass@}N(\pi_0)
    >\frac{\alpha}{4}.
    \end{equation*}
\end{enumerate}
\end{thm}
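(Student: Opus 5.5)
The plan is to work prompt by prompt with the map $f_N(p):=1-(1-p)^N$, so that $\operatorname{Pass@}N(\pi)=\E_{x'\sim\mathcal B}[f_N(R_\pi(x'))]$. By the display preceding the theorem (from \Cref{thm:pg-transfer} with the ideal decoder), $R_{\pi_0}(x')=\pi_0(\mathcal M(x')|x')$ and $R_{\pi_{\rm RL}}(x')=\pi_0(\mathcal S(x')\cap\mathcal M(x')|x')/\pi_0(\mathcal S(x')|x')$.

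Two elementary bounds on $f_N$ will do all the work. The first is a lower bound: $f_N(p)\ge 1-e^{-Np}$, so $p>\log(4/\alpha)/N$ implies $f_N(p)>1-\alpha/4$. The hypothesis $N>\log(4/\alpha)$ only guarantees that this threshold is below $1$, so the conditions are not vacuous. The second is an upper bound from Bernoulli's inequality: $f_N(p)\le Np$, so $p\le 1/(4N)$ implies $f_N(p)\le 1/4$. We also use the trivial facts $0\le f_N\le 1$ and $f_N(0)=0$.

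The natural first attempt is to compare the two policies only on the distinguished event. This fails, because on the remaining prompts the other policy could be better by up to $1$. Instead, I bound the two Pass@$N$ values separately: a global lower bound for the "good" policy and a global upper bound for the "bad" one.

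For (I), let $E=\{\mathcal S(x')\cap\mathcal M(x')=\emptyset\}$, so $\Pr[E]\ge\alpha$. On $E$ we have $R_{\pi_{\rm RL}}(x')=0$, hence $f_N=0$, and $f_N\le1$ elsewhere. This gives
\[
\operatorname{Pass@}N(\pi_{\rm RL})\le 1-\alpha .
\]
For the base model, the second hypothesis says that outside a set of measure at most $\alpha/4$ we have $\pi_0(\mathcal M(x')|x')>\log(4/\alpha)/N$, and hence $f_N>1-\alpha/4$ there. Therefore
\[
\operatorname{Pass@}N(\pi_0)>(1-\alpha/4)^2\ge 1-\alpha/2 .
\]
Subtracting the two bounds gives a gap strictly greater than $\alpha/2$.

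For (II) the roles are swapped. On the event $\{\pi_0(\mathcal M(x')|x')\le 1/(4N)\}$, which has probability at least $\alpha$, we have $f_N(R_{\pi_0})\le1/4$, and $f_N\le1$ elsewhere. This gives
\[
\operatorname{Pass@}N(\pi_0)\le 1-\tfrac{3\alpha}{4}.
\]
By the second hypothesis, $R_{\pi_{\rm RL}}>\log(4/\alpha)/N$ off a set of measure at most $\alpha/4$, so
\[
\operatorname{Pass@}N(\pi_{\rm RL})>(1-\alpha/4)^2=1-\alpha/2+\alpha^2/16 .
\]
The difference therefore exceeds $\alpha/4+\alpha^2/16>\alpha/4$.

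There is no real obstacle once one avoids the pointwise comparison. The only delicate points are getting the strict inequalities right and using $\Pr[\cdot]\le\alpha/4$ correctly for the exceptional sets.
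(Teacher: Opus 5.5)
Your proof is correct and is essentially the paper's argument. The paper likewise bounds the two failure probabilities $\E[(1-R_\pi(x'))^N]$ separately, using $\Pr[U\le a](1-a)^N\le\E[(1-U)^N]\le\Pr[U\le b]+(1-b)^N$ with $(1-\tfrac{1}{4N})^N\ge\tfrac34$ and $(1-\tfrac{\log(4/\alpha)}{N})^N<\tfrac{\alpha}{4}$; these are exactly your Bernoulli and exponential estimates, and your constants come out marginally sharper. The only addition in the paper is a closing remark that, by bounded convergence as $\beta\downarrow0$, the strict gaps also hold for the fixed-$\beta$ limiting policies once $\beta$ is small; the statement as posed does not need this.
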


Detailed proofs are provided in
\Cref{app:passn}.
The theorem shows that Pass@$N$ is governed by the low-success tail, particularly prompts whose one-shot success rate is of order $1/N$~\citep{chen2025coverage}, rather than by average Pass@1.
% In case~\textnormal{(I)}, the RL policy selects no valid strategy on an $\alpha$ fraction of prompts, whereas the base policy leaves few questions below the $1/N$ threshold.
% Conversely, in case~\textnormal{(II)}, the base policy leaves a substantial fraction of the benchmark below this scale, whereas the RL policy amplifies these rare strategies, thereby expanding coverage.
Coverage decreases when RL suppresses strategies on such prompts and increases when it amplifies rare successes above this scale. Both directions arise from reweighting an existing strategy library, so broader sampled coverage need not imply new strategies.

Consistent with our transferability results, shifts in finite-budget coverage depend jointly on the \textbf{training data}, \textbf{benchmark distribution} and \textbf{base model}.
\Cref{fig:passn-qwen-gemma} illustrates this interaction empirically: Qwen3-Math-RL expands coverage on BoxNet but shrinks it on AIME 2024 despite Pass@1 gains.
Altering the training mixture with coding data completely inverts Qwen's performance on CodeContests, while Math RL yields opposite AIME 2025 coverage shifts for Qwen and Gemma.
More prompt-level analysis and evaluation caveats are in \Cref{app:passn-question-analysis}.

%% file: arxiv/sections/scaling_laws.tex
\section{Scaling Laws of RL Compute}
\label{sec:scaling}

Beyond transfer and reasoning coverage, recent studies ask how performance
scales with RL compute.
\citet{khatri2025scalerl} introduce a \textbf{log-sigmoid} law:
\begin{equation}
R(C)=R_0+\frac{A-R_0}{1+(C_{\rm mid}/C)^B}.
% R(C)=R_0+(A-R_0)\sigma(B\log(C/C_{\rm mid})).
\label{eq:empirical-log-sigmoid}
\end{equation}
% where $A$, $B$, and $C_{\rm mid}$ represent the log-sigmoid asymptote, exponent, and midpoint, respectively.
However, the sigmoidal plateau typically emerges only during a late saturation stage, far beyond the reach of most compute-constrained RL training. Alternatively, several works \citep{meta2026musespark,tan2026scaling,shen2026understanding} apply a first-order approximation to derive a \textbf{log-linear} law:
\begin{equation}
R(C)=R_0+B_{\rm lin}\log(C/C_{\rm ref}).
\label{eq:empirical-log-linear}
\end{equation}
We examine these empirical laws and provide a rigorous theoretical account
of RL scaling behavior based on strategy-selection dynamics.

\subsection{Empirical scaling laws in RL}
\label{sec:empirical-scaling}

\input{arxiv/figures/qwen3_14b_pre200_gpu_hour_extrapolation_figure}

We fit \Cref{eq:empirical-log-sigmoid,eq:empirical-log-linear} to
Qwen3-Math-RL and Qwen3-Math-Code-RL evaluations through step 200, using
cumulative GPU-hours as compute. 
We extend both RL runs to step 350 to test predictions beyond the range of
compute used for fitting, where the cumulative compute is nearly $2\times$ the fitting budget.
\Cref{fig:qwen-scaling-extrapolation-gpu-hours} shows the observed scores and
fitted curves during fitting and extrapolation. Detailed fitting procedures,
coefficient estimates, and results using other compute metrics are given in
\Cref{app:scaling-fit-procedure}.

As shown in \Cref{fig:qwen-scaling-extrapolation-gpu-hours}, the log-sigmoid
and log-linear fits are often nearly indistinguishable over the range of
compute used for fitting. This is expected because the log-linear law in
\cref{eq:empirical-log-linear} locally approximates the log-sigmoid law in
\cref{eq:empirical-log-sigmoid}.
The laws diverge only during extrapolation: the log-linear law maintains a
constant rate of improvement with log compute and is therefore unbounded, whereas
the log-sigmoid law bends toward a finite plateau.

During the additional training, Qwen3-Math-RL follows the log-linear
prediction more closely, yet Qwen3-Math-Code-RL exhibits saturation: the
log-sigmoid curve better tracks late-stage performance on MATH500, AIME, and
ACPBench, as well as the average scores. Meanwhile, the log-linear model
increasingly overpredicts performance at higher compute budgets.

The reliability of scaling-law predictions also varies by evaluation domain.
In-domain mathematical benchmarks consistently follow the fitted predictions,
whereas extrapolation to coding and planning benchmarks is more brittle.
Individual benchmark trajectories are also less stable than their average,
reflecting the noisy dynamics of practical RL training.
Notably, while \citet{khatri2025scalerl,tan2026scaling} also report extrapolative scaling laws, their evaluations are restricted to held-out data drawn from the training distribution. In contrast, our analysis incorporates out-of-distribution and cross-domain evaluations. Our setting therefore tests not only whether performance scales predictably with RL compute, but also whether this scaling behavior transfers to new domains.

\subsection{Theoretical justification for scaling laws}

We now investigate why these simple laws capture RL dynamics via reasoning-strategy selection.
For ease of exposition, we take $n=1$, i.e., there is only one training prompt denoted by $x$.
For an evaluation benchmark $\mathcal B$, define its reward during training
as $R(t):=\E_{x'\sim\mathcal B}[R_t(x')]$.

\begin{Assumption}[Strategy-dominated regime]
\label{ass:scaling-balance}
There exists $T_0$ such that $\pi_0(\cdot|x)$ and
$\pi_{T_0}(\cdot|x)$ are uniform within $\mathcal M(x)$ and within
$[M]\backslash\mathcal M(x)$. From $T_0$ onward, the answer policy saturates and remains frozen: $R_t(x,z)=R_\infty^\beta(x,z)=\rho_\beta$ for all $t\geq T_0,z\in\mathcal{M}(x)$.
\end{Assumption}
After $T_0$, training reduces to the evolution of the probability of valid
strategies. We use elapsed time $C=t-T_0$ as a proxy for compute and fit the
scaling laws over an interval $(a,b)$ of this probability. For any
$u\in[a,b]$, let $C_u$ be the first time such that
$\pi_{T_0+C_u}(\mathcal M(x)|x)=u$.
For any $x'\sim\mathcal B$, we assume feature overlap $c(x',x)>0$ and positive transfer.

% \paragraph{Log-sigmoid transfer.}
% % Let $\mathsf{logit}(u):=\log(u/(1-u))$,
% % $H(u):=(2u-1)/[u(1-u)]+2\mathsf{logit}(u)$, and
% Let $\mathsf{logit}(u):=\log(u/(1-u))$,
% $H(u):=\frac{2u-1}{u(1-u)}+2\mathsf{logit}(u)$, and
% \begin{equation}
% \delta_{a,b}:=\mathsf{logit}(b)-\mathsf{logit}(a),\qquad
% \Gamma_{a,b}^{(p)}:=\frac12\inf_{\lambda>0}
% \operatorname{osc}_{u\in[a,b]}
% \left\{\mathsf{logit}(u)-\lambda\log[H(u)-H(p)]\right\},
% \label{eq:benchmark-scaling-window}
% \end{equation}
% where $\operatorname{osc}$ denotes the difference between the maximum and minimum values.
% \begin{thm}
% \label{thm:benchmark-sigmoid-scaling}
% Let $c_{\max}:=\sup_{x'\sim \mathcal{B}}c(x',x)<\infty$. Under \Cref{ass:source-realizability,ass:answer-orthogonality,ass:scaling-balance},
% for any $\varepsilon_{\rm fit}>0$ and sufficiently
% small $\beta$, there exist $R_0,A,B,C_{\rm mid}>0$ such that
% \begin{equation}
% \sup_{C\in[C_a,C_b]}
% \left|\frac{R(T_0+C)-R_0}{A-R_0}
% -\frac{1}{1+(C_{\rm mid}/C)^B}\right|
% \leq\tanh\left(\frac{c_{\max}^2\delta_{a,b}^2}{256}
% +\frac{c_{\max}\Gamma_{a,b}^{(p)}}4\right)+\varepsilon_{\rm fit}.
% \label{eq:benchmark-sigmoid-main}
% \end{equation}
% \end{thm}

\begin{thm}[Informal]
\label{thm:benchmark-sigmoid-scaling}
Under \Cref{ass:source-realizability,ass:answer-orthogonality,ass:scaling-balance}, there exist $R_0,A,B,C_{\rm mid}$ such that
\begin{equation}
\sup_{C\in[C_a,C_b]}
\left|R(T_0+C)-R_0
-\frac{A-R_0}{1+(C_{\rm mid}/C)^B}\right|
\leq \mathcal{O}((b-a)^2).
\label{eq:benchmark-sigmoid-main}
\end{equation}
\end{thm}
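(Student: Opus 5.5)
We reduce the dynamics after $T_0$ to a scalar ODE for $p_t:=\pi_t(\mathcal M(x)|x)$. We then transport it to each target prompt via \Cref{thm:pg-transfer} and compare the resulting curve with a log-sigmoid in the coordinate $\log C$.

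\textbf{Step 1: scalar dynamics on the training prompt.} With $n=1$ and \Cref{ass:scaling-balance}, the answer stage is frozen and $R_t(x,z)=\rho_\beta$ on $\mathcal M(x)$. Since $\pi_{T_0}(\cdot|x)$ is uniform within $\mathcal M(x)$ and within its complement, and the gradient of $J_{\rm strategy}$ treats all members of each block identically, this block structure is preserved for $t\ge T_0$. The softmax gradient flow restricted to $\theta^\top\phi(x)$ therefore reduces to a two-block logit $\ell_t:=\mathsf{logit}(p_t)$. Writing $\ell_0$ for the base-policy logit, I expect an ODE of the form
\[
\dot\ell_t=\|\phi(x)\|^2\bigl[\rho_\beta-\beta(\ell_t-\ell_0)\bigr]\cdot g(p_t),
\]
where $g(p)$ collects the Jacobian factors: $p(1-p)$ times the block-size normalization, up to an explicit rational function of $p$. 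Integrating gives $C_u=\int_{p_{T_0}}^{u}\frac{dp}{F_\beta(p)}$ in closed form, for an explicit $F_\beta$. For small $\beta$ the KL term is a perturbation of order $\beta$ on the window $[a,b]$, which stays away from the limiting plateau.

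\textbf{Step 2: transport to targets.} By \Cref{thm:pg-transfer} with $n=1$, we have $\log\pi_t(z|x')=\log\pi_0(z|x')+c(x',x)\log[\pi_t(z|x)/\pi_0(z|x)]+\mathrm{const}$. Under block uniformity, the ratio depends on $z$ only through membership in $\mathcal M(x)$. Hence the target odds of the $\mathcal M(x)$-block shift by $c(x',x)(\ell_t-\ell_{T_0})$ relative to the base. By \Cref{ass:answer-orthogonality} the target answer policy is unchanged, as in \Cref{cor:pg-target-reward}. It follows that $R_t(x')=\Psi_{x'}(\ell_t)$ for an explicit smooth function $\Psi_{x'}$. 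This function is a ratio of affine combinations of $e^{c(x',x)\ell_t}$, and it is monotone because of positive transfer. Averaging over $\mathcal B$ gives $R(T_0+C)=\Psi(\ell(C))$, where $\Psi:=\E_{x'\sim\mathcal B}\Psi_{x'}$ is smooth on $[\mathsf{logit}(a),\mathsf{logit}(b)]$ with derivatives bounded in terms of $\sup_{x'}c(x',x)$.

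\textbf{Step 3: matching the log-sigmoid.} Reparametrize by $s=\log C$. We get $R=\Psi\circ\ell\circ\exp(s)$ on $s\in[\log C_a,\log C_b]$, while the target family is $f(s)=R_0+(A-R_0)\sigma(B(s-\log C_{\rm mid}))$. The family has enough free parameters to match the value, slope and curvature of the true curve at an interior point $u^\star$; the fourth parameter is spare. I will choose $R_0,A,B,C_{\rm mid}$ by second-order Taylor matching at $s^\star=\log C_{u^\star}$. Taylor's theorem with remainder then bounds the error by $\frac16\sup|(\Psi\circ\ell\circ\exp)'''-f'''|\,(\Delta s)^3$. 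Since $\Delta s$ and $\Delta\ell$ are both $O(b-a)$ on a window bounded away from $0$ and $1$, this gives $O((b-a)^3)\subseteq O((b-a)^2)$. As a fallback, if curvature matching fails to produce admissible parameters, I will match only the value and the slope; this still gives the claimed $O((b-a)^2)$.

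\textbf{Main obstacle.} The hard part is showing that the matching equations have a solution with admissible parameters, in particular $B>0$, $C_{\rm mid}>0$ and $A\ne R_0$, and that the constants are uniform. This requires controlling the derivatives of $C\mapsto\ell(C)$ near $u^\star$ from the closed form of $C_u$. I would first try to show that, for small $\beta$, $\ell$ is approximately affine in $\log C$ on the window. Integrating the $\beta=0$ ODE for $\ell$ should give $e^{\kappa\ell}\propto C+\mathrm{const}$, hence $\ell\approx\kappa^{-1}\log C$ plus a correction. Once this is established, the logistic shape in $s$ follows, with $B$ essentially $c\kappa^{-1}$ and $A,R_0$ fixed by the asymptotes of $\Psi$.
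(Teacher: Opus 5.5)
Your Steps 1--2 coincide with the paper's reduction: the two-block scalar flow for $\pi_x$, then the exact transfer $\mathsf{logit}\,\pi_{x'}(C)=\mathsf{logit}\,\pi_{x'}(0)+c(x',x)[\mathsf{logit}\,\pi_x(C)-\mathsf{logit}\,p]$, which makes $R(T_0+C)$ a $d$-weighted mixture of logistics in $z(C)$. Your Step 3 is a genuinely different argument, and it works.

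\textbf{How the two routes differ.} The paper pins $R_0=\E_{x'\sim\mathcal B}[r_-(x')]$ and $A=R_0+D$ to the model's true lower and upper reward levels and fits only $(B,C_{\rm mid})$. It then bounds the curvature of $L=\mathsf{logit}$ of the logistic mixture by $|L''|\le c_0^2/4$, interpolates $L$ linearly in $z$, and approximates $z$ by an affine function of $\log C$ up to $\Gamma^{(p)}_{a,b}$. You instead treat all four parameters as free and match the $2$-jet of $g(s):=R(T_0+e^s)$ at an interior point.

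\textbf{Your obstacle is routine.} At the matching point, $f'=(A-R_0)B\sigma^\star(1-\sigma^\star)$ and $f''/f'=B(1-2\sigma^\star)$. Taking $B=\max\{1,2|g''/g'|\}$ forces $\sigma^\star\in[1/4,3/4]$ and $A-R_0\le 16g'/(3B)$. Hence $\sup|f'''|\le (A-R_0)B^3/8$ is uniformly bounded, provided $g'>0$ and $g'$, $g''/g'$, $g'''$ are bounded on the window. These bounds follow from $|F''|\le c_{\max}F'$ for a positive mixture of logistics with slopes at most $c_{\max}$. They also use the explicit clock, which keeps $z_s=C\,dz/dC$ bounded above and below. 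For that you need $a$ bounded away from $p=\pi_x(0)$, not only from $0$ and $1$; the same condition gives $\log(C_b/C_a)=O(b-a)$. Your fallback already proves the claim with no admissibility issue: take $B=1$, $\sigma^\star=1/2$, $A-R_0=4g'(s^\star)$.

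\textbf{Drop the heuristic in your last paragraph.} As $\beta\downarrow0$ the clock is $C\propto H(u)-H(p)$ with $H(\sigma(\ell))=e^{\ell}-e^{-\ell}+2\ell$, not $e^{\kappa\ell}$. Also, with heterogeneous $c(x',x)$ the curve $\Psi$ is not a single logistic. Reading off $A,R_0$ from its asymptotes with $B\approx c/\kappa$ is exactly the step that needs the paper's mixture-curvature bound.

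\textbf{What each route buys.} Your jet-matching is more elementary and gives $O((b-a)^3)$. It needs no $\varepsilon_{\rm fit}$, since it only uses derivative bounds that are uniform in small $\beta$. But it is generic: it shows that any smooth curve increasing in $\log C$ is locally log-sigmoid, and the fitted $A,R_0$ have no relation to the true plateau. The informal statement is weak enough to allow this. The paper gives up one order of accuracy, because only $(B,C_{\rm mid})$ are free. In exchange it gets an explicit, non-asymptotic bound $\tanh(c_0^2\delta_{a,b}^2/256+c_0\Gamma^{(p)}_{a,b}/4)$ in which $A$ and $R_0$ are the actual asymptotic levels. The error there is expressed through interpretable quantities: overlap, log-odds span, and the nonlinearity of the compute clock. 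That is what makes it meaningful for windows of finite width.
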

% This theorem shows that the log-sigmoid law can approximate the RL dynamics uniformly up to an error depending on the interval $(a,b)$ and initial accuracy $p$. Here we adopt a uniform upper bound on feature overlap; \Cref{thm:benchmark-bounded-overlap-scaling} gives a sharper distribution-dependent bound.

% \paragraph{Log-linear transfer.}
% Define the reward-gain-weighted feature overlap and the log-compute span:
% \begin{equation}
% \bar c:=\frac{\E_{x'\sim\mathcal B}\left[c(x',x)
% \bigl(R_{T_0+C_b}(x')-R_{T_0+C_a}(x')\bigr)\right]}
% {R(T_0+C_b)-R(T_0+C_a)},\qquad
% \Delta_{a,b}^{(p)}:=\log\frac{H(b)-H(p)}{H(a)-H(p)}.
% \label{eq:benchmark-mean-overlap}
% \end{equation}
% \begin{thm}
% \label{thm:benchmark-linear-scaling}
% Under the same assumptions,
% there exist $R_0,B_{\rm lin},C_{\rm ref}>0$ such that
% \begin{equation}
% \sup_{C\in[C_a,C_b]}
% \frac{\left|R(T_0+C)-R_0-B_{\rm lin}\log(C/C_{\rm ref})\right|}
% {R(T_0+C_b)-R(T_0+C_a)}
% \leq\frac{(1+\bar c)\delta_{a,b}+\Delta_{a,b}^{(p)}}{16}
% +\varepsilon_{\rm fit}.
% \label{eq:benchmark-linear-scaling}
% \end{equation}
% \end{thm}
\begin{thm}[Informal]
\label{thm:benchmark-linear-scaling}
Under the same assumptions,
there exist $R_0,B_{\rm lin},C_{\rm ref}$ such that
\begin{equation}
\sup_{C\in[C_a,C_b]}
\left|R(T_0+C)-R_0-B_{\rm lin}\log(C/C_{\rm ref})\right|
\leq \mathcal{O}((b-a)^2).
\label{eq:benchmark-linear-scaling}
\end{equation}
\end{thm}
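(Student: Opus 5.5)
Since the informal statement only asks for an $\mathcal O((b-a)^2)$ error, I would obtain \Cref{thm:benchmark-linear-scaling} as a first-order Taylor expansion of the exact dynamics in a log-compute variable.

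\textbf{Step 1: exact source ODE.} For $t\ge T_0$, \Cref{ass:scaling-balance} gives uniformity within $\mathcal M(x)$ and within its complement, and this uniformity is preserved by the flow. So the strategy dynamics on $x$ reduce to a scalar ODE for $u_t:=\pi_t(\mathcal M(x)|x)$. With $R_t(x)=\rho_\beta u_t$ and the KL term contributing only $\mathcal O(\beta)$, I expect it to take the form $\dot u=\kappa\,u(1-u)\cdot g_\beta(u)$, where $\kappa=\|\phi(x)\|^2$ and $g_\beta\to\rho_\beta u(1-u)$ as $\beta\downarrow0$ (the softmax Jacobian contributes a second factor $u(1-u)$). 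I will then define $F(u):=\int_a^u \frac{ds}{\kappa s(1-s)g_\beta(s)}$, so that $C_u=C_a+F(u)$. $F$ is smooth and strictly increasing on $[a,b]$.

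\textbf{Step 2: transfer the source trajectory to the benchmark.} By \Cref{thm:pg-transfer} with $n=1$,
\[
\log\frac{\pi_t(z|x')}{\pi_0(z|x')}=c(x',x)\log\frac{\pi_t(z|x)}{\pi_0(z|x)}+\text{const}.
\]
Under uniformity, the log-ratio between valid and invalid strategies on $x$ is $\ell_t-\ell_0$, where $\ell_t:=\mathsf{logit}(u_t)-\log\frac{|\mathcal M(x)|}{M-|\mathcal M(x)|}$. By \Cref{ass:answer-orthogonality} and \Cref{cor:pg-target-reward}, the target answer policy stays at its base value. Hence $R_t(x')=G_{x'}(\mathsf{logit}(u_t))$ for a smooth function $G_{x'}$, which is a finite mixture of sigmoids in $c(x',x)\,\mathsf{logit}(u_t)$ with base-policy weights. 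Averaging over $\mathcal B$ gives $R(T_0+C_u)=\bar G(u)$ with $\bar G$ smooth. Its derivatives on $[a,b]$ are bounded uniformly, provided $c(x',x)$ has bounded moments (I would assume $\sup c<\infty$, as the commented formal version does). Positive transfer makes $\bar G$ increasing.

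\textbf{Step 3: local fit.} Composing the two steps, $R(T_0+C)=\bar G(F^{-1}(C-C_a))$ is a smooth increasing function of $s:=\log C$ on $[\log C_a,\log C_b]$. Expanding to first order in $s$ around $s_a=\log C_a$ gives $R_0:=R(T_0+C_a)$, $C_{\rm ref}:=C_a$, and $B_{\rm lin}:=\frac{d}{ds}R\big|_{s_a}$. The remainder is bounded by $\tfrac12\sup|d^2R/ds^2|\,(\log C_b-\log C_a)^2$. Finally, $\log C_b-\log C_a=\int_a^b \frac{F'(u)}{C_a+F(u)}\,du\le L(b-a)$, where $L$ depends on $F'$ on $[a,b]$ and on $C_a$. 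This gives the stated $\mathcal O((b-a)^2)$ bound. Alternatively, one can take $B_{\rm lin}$ as the secant slope so that the fit is exact at both endpoints, which halves the constant.

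\textbf{Main obstacle.} The delicate step is Step 1: deriving the exact scalar ODE, including the $\beta$-dependent KL drift, and checking that $C_a$ is bounded away from $0$ so that $d^2R/ds^2$ is controlled. If $C_a\to0$, the log-compute variable blows up and the constant degenerates. I would handle this by noting that $u_{T_0}<a$ strictly, which is implicit in $C_a$ being a first hitting time in $(a,b)$. That gives $C_a\ge F_{u_{T_0}}(a)>0$, uniformly as $\beta\downarrow0$ because $\rho_\beta\to\rho_0>0$.
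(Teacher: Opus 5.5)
Your argument is correct, and it reaches the rate by a more elementary route than the paper's. The paper proves the informal statement through \Cref{thm:benchmark-linear-scaling-formal}, and its first step is the same reduction you use: the exact scalar ODE and the logit transfer identity of \Cref{lem:scaling-scalar-flow}. From there the routes differ.

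The paper passes to the vanishing-$\beta$ limit to get an explicit clock, $C_u\propto[H(u)-H(p)](1+o(1))$ with $H'(u)=[u(1-u)]^{-2}$ (\Cref{lem:scaling-clock}). It then applies the rearrangement-based secant lemma (\Cref{lem:linear-log-secant}) prompt by prompt. This gives a gain-normalized error of $[(1+c)\delta_{a,b}+\Delta_{a,b}^{(p)}]/16+\varepsilon_{\rm fit}$, and the promptwise fits are averaged using the gain-weighted overlap $\bar c$. Only at the end does $\mathcal O((b-a)^2)$ appear, as a normalized error of order $b-a$ times a benchmark gain of order $b-a$. The gain bound uses $c\le c_{\max}$, and $\varepsilon_{\rm fit}=b-a$ is obtained by taking $\beta$ small.

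You instead average first and Taylor-expand the composite map $s=\log C\mapsto\bar G(u(s))$, so your rate comes from $\sup|d^2R/ds^2|\cdot\log^2(C_b/C_a)$. This needs only that the post-$T_0$ dynamics is an autonomous ODE $\dot u=V(u)$ with $V$ smooth and positive on $[p,b]$. The paper's $V$ carries an extra factor $1/m+1/(M-m)$, and $\rho_\beta\to1$ rather than some $\rho_0$, but neither matters to you. Your argument also works at any fixed $\beta$ for which $u$ crosses $[a,b]$, and it avoids $\varepsilon_{\rm fit}$ entirely. The real role of $C_a>0$ is to give $\log(C_b/C_a)\le(C_b-C_a)/C_a=\mathcal O(b-a)$, and $p<a$ together with an upper bound on $V$ supplies it.

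What the paper's route buys is an explicit, scale-free bound that holds for finite fitting windows, not only as $b-a\to0$. It depends only on $\delta_{a,b}$, $\Delta_{a,b}^{(p)}$ and $\bar c$, so the normalized statement needs only $\bar c<\infty$ rather than $\sup c<\infty$. That explicitness is what makes the numerical bound maps in \Cref{app:scaling-bound-magnitudes} possible. Your constant depends on $C_a$, $\beta$ and second derivatives, and is purely asymptotic.

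A minor correction: switching to the secant slope reduces the Taylor constant from $\tfrac12$ to $\tfrac18$, a factor of four rather than two.
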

The results show that both log-sigmoid and log-linear laws closely approximate RL dynamics over a local compute range.
Formal statements and error bounds are deferred to \Cref{thm:benchmark-bounded-overlap-scaling,thm:benchmark-linear-scaling-formal} in the appendix.
Direct computation verifies that our error bounds remain below 1\% for fitting intervals spanning up to 30\%, as visualized in \Cref{app:scaling-bound-magnitudes}.
\Cref{app:scaling-fit-interpretation} examines the empirical implications of this local approximation and the interpretation of fitted scaling coefficients.

% The fitted slopes and
% saturation scales consequently depend on both the training mixture and the
% evaluation distribution, consistent with target-specific feature overlap in
% our theory.

%% file: arxiv/figures/qwen3_14b_pre200_gpu_hour_extrapolation_figure.tex
\begin{figure}[t]
\centering
\includegraphics[width=\linewidth]
{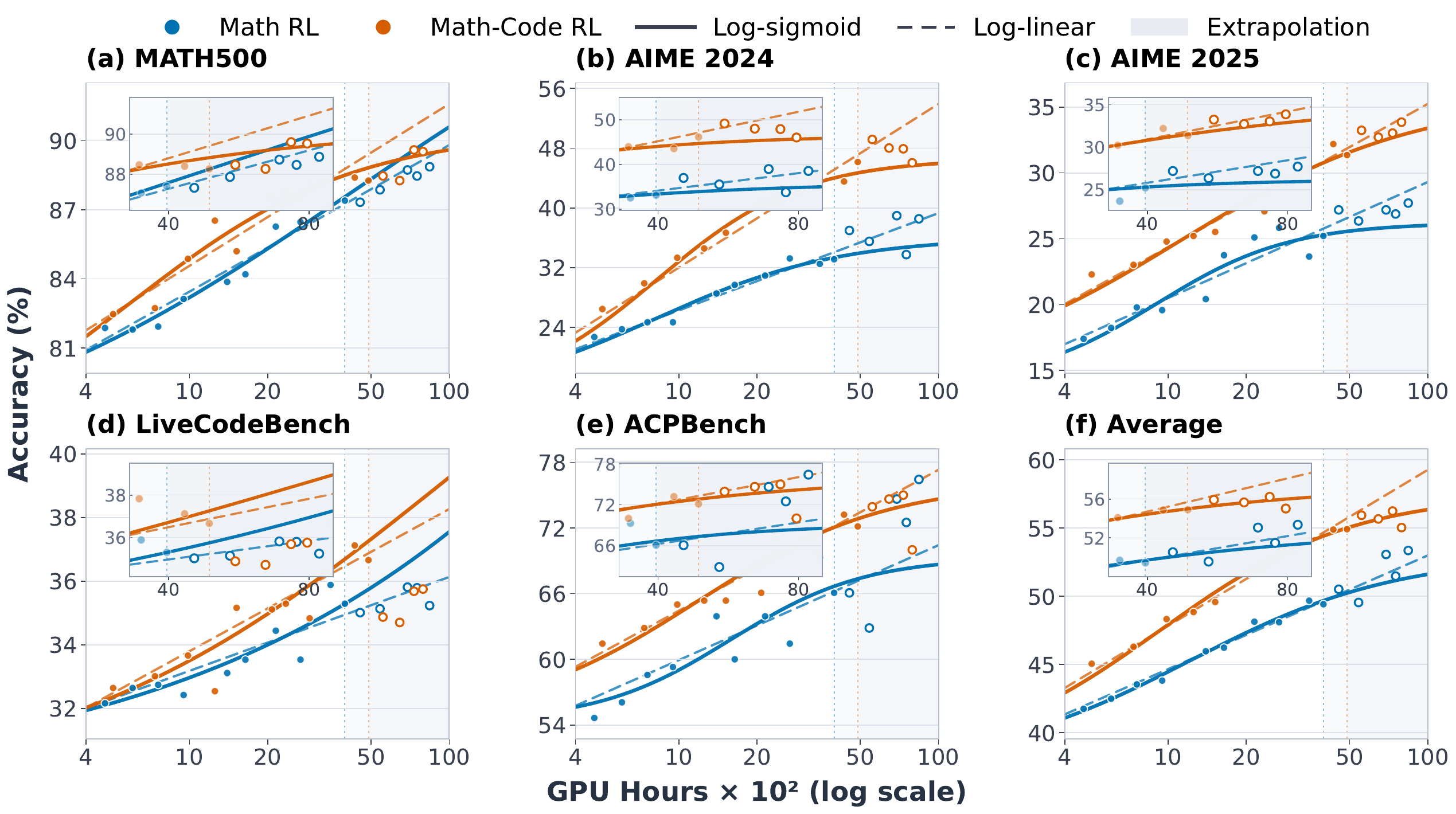}
\caption{\textbf{Scaling behavior of RL compute.} Both laws capture the fitting
regime effectively with $\mathsf{R^2}>0.99$ on average.
Extrapolative predictions are highly reliable up to 2$\times$ compute in the math domain and average scores, yet exhibit greater variation on coding and planning tasks due to noisy RL training dynamics.}
\label{fig:qwen-scaling-extrapolation-gpu-hours}
\end{figure}

%% file: arxiv/sections/related_work.tex
\section{Related Work}
\label{sec:related-work}

\paragraph{Reinforcement learning for LLM reasoning and chain of thought.}
CoT prompting and self-consistency establish the value of generating and
sampling reasoning trajectories~\citep{wei2022chain,wang2023selfconsistency}.
Reinforcement learning with verifiable rewards (RLVR) trains models using
automatically checked task outcomes; T\"ulu 3 incorporates this approach into
an open post-training recipe~\citep{lambert2024tulu}.
\citet{shao2024deepseekmath} introduces GRPO, and DeepSeek-R1 demonstrates large-scale reasoning gains from RL~\citep{deepseekai2025r1}.
DAPO and Open-Reasoner-Zero further develop open systems for scaling RL
training on reasoning tasks~\citep{yu2025dapo,hu2025openreasonerzero}.
\citet{liu2025understanding} examine how base-model capabilities and biases
in GRPO affect training, and introduce Dr. GRPO to improve token efficiency.

\paragraph{Transferability, forgetting, and the reasoning boundary.}
\citet{chu2025sft} find that RL
trained with outcome rewards generalizes better than SFT to unseen textual
rules and visual variants in arithmetic and navigation tasks.
\citet{huan2025transferability} find that RL transfers more broadly and retains
performance better than SFT in controlled experiments, while
\citet{cheng2025crossdomain} show that transfer varies with training domain
and task difficulty.
\citet{chen2026retaining} identify on-policy data as a major factor in RL's
relative resistance to forgetting. Coverage findings also differ:
\citet{yue2504does} observe gains at small sampling budgets but reduced
coverage at large ones, whereas \citet{liu2025prorl} report expansion under
prolonged RL training. \citet{wen2026reinforcement} further show that evaluating
reasoning correctness, rather than final answers alone, can change coverage
comparisons. Our framework characterizes conditions for these contrasting
outcomes within a shared model of strategy selection, distinguishing
coverage at a given sampling budget from the creation or loss of underlying strategies.

\paragraph{Scaling laws in reinforcement learning.}
\citet{khatri2025scalerl} fit log-sigmoid curves relating compute to performance and
validate extrapolation to longer RL runs. Meta's Muse Spark reports log-linear
gains in training scores with increasing RL steps~\citep{meta2026musespark}.
\citet{tan2026scaling} study joint scaling with model size, data, and compute
in mathematical reasoning, while \citet{shen2026understanding} relate the returns
from increasing RL compute to pretraining. These studies establish empirical
scaling relationships, but do not derive the commonly used log-sigmoid and
log-linear forms from RL training dynamics. Our work provides a theoretical
justification for both forms by deriving them as approximations to the
dynamics of reasoning strategy selection, with explicit error bounds.
We also examine their predictive reliability across evaluation domains.

\paragraph{Theoretical analyses of RL in LLMs.}
\citet{chen2026reshaping} analyze the selection of reasoning patterns in a two-stage tabular policy with fixed conditional success rates, focusing on behavior on training prompts. Complementary analyses
study how gradient gaps affect optimization stability~\citep{suk2025optimization},
success amplification under GRPO~\citep{mroueh2025grpo}, and pretrained response
coverage~\citep{chen2025coverage}. Our framework learns both strategy and answer
policies and uses shared prompt features to derive how source updates reweight
target strategies. This mechanism connects strategy selection to
transfer and forgetting, Pass@$N$ coverage, and scaling laws that describe
performance over a limited range of training compute.

%% file: arxiv/sections/conclusion.tex
\section{Conclusion and Discussion}
In this paper, we investigate how RL reshapes LLM reasoning. Starting from
empirical observations across model families and task domains, we analyze
solution traces before and after RL, showing a shift in the distribution
over reasoning strategies. We propose a two-stage policy model to
formalize this phenomenon and prove the conditions under which RL can transfer
or forget capabilities and expand or contract finite-budget coverage. We also
use this model to give a theoretical justification for empirical scaling laws
in RL compute. These results connect changes in strategy selection to
cross-domain transfer, reasoning coverage, and compute scaling.

%% file: arxiv/sections/ack.tex
\section*{Acknowledgements}

Y.H. and S.S. were supported by the U.S. Army Research Laboratory and the U.S. Army Research Office under Grant W911NF2010219, Office of Naval Research, and NSF. This work used Jetstream2 at Indiana University through allocation CIS240832 from the Advanced Cyberinfrastructure Coordination Ecosystem: Services \& Support (ACCESS) program, which is supported by National Science Foundation grants \#2138259, \#2138286, \#2138307, \#2137603, and \#2138296.

%% file: arxiv/appendix/transfer_forgetting_proofs.tex
\section{Proofs for Transfer and Forgetting}
\label{app:transfer-proofs}

\subsection{RL dynamics on training prompts}

The following auxiliary result establishes the limiting policy on training
prompts used to prove \Cref{thm:pg-transfer}, along with a convergence rate
used in the scaling analysis.

\begin{thm}[RL limit on training prompts]
\label{thm:pg-training}
Under \Cref{ass:source-realizability}, fix $\beta>0$.  RL converges
on every training prompt to a policy $\pi_\infty^\beta$. For each
$i\in[n]$, $z\in\mathcal M(x_i)$, and $y\in\mathcal Y$,
\begin{equation}
\pi_\infty^\beta(y|x_i,z)
=\frac{\pi_0(y|x_i,z)
e^{\mathbbm{1}_{y\in\mathcal A(x_i)}/\beta}}
{\sum_{v\in\mathcal Y}\pi_0(v|x_i,z)
e^{\mathbbm{1}_{v\in\mathcal A(x_i)}/\beta}},
\label{eq:pg-answer-limit}
\end{equation}
and hence
\begin{equation}
R_\infty^\beta(x_i,z)
=\frac{R_0(x_i,z)}
{R_0(x_i,z)+[1-R_0(x_i,z)]e^{-1/\beta}}.
\label{eq:pg-reward-limit}
\end{equation}
The limiting strategy policy is
\begin{equation}
\pi_\infty^\beta(z|x_i)\propto \pi_0(z|x_i)
\begin{cases}
e^{R_\infty^\beta(x_i,z)/\beta},&z\in\mathcal M(x_i),\\
1,&z\notin\mathcal M(x_i).
\end{cases}
\label{eq:pg-mode-limit}
\end{equation}
Moreover, there are constants $C_\beta,\lambda_\beta>0$ such that
\begin{align}
\max_i\left\|\pi_{\theta_t}(\cdot|x_i)
-\pi_\infty^\beta(\cdot|x_i)\right\|_{\rm TV}
&\le C_\beta e^{-\lambda_\beta t},
\nonumber\\
\max_{\substack{i\in[n]\\z\in\mathcal M(x_i)}}
\left\|\pi_{w_t}(\cdot|x_i,z)
-\pi_\infty^\beta(\cdot|x_i,z)\right\|_{\rm TV}
&\le C_\beta e^{-\lambda_\beta t}.
\label{eq:pg-training-rate}
\end{align}
\end{thm}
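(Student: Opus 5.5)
The plan is to exploit orthogonality to decouple the flow prompt by prompt, and then solve each resulting decoupled system. By \Cref{eq:source-orthogonality}, $\nabla_\theta J_{\rm strategy}$ is a sum of terms $\phi(x_i)\,g_i(t)^\top$, where $g_i\in\R^M$ is the gradient with respect to the logits at $x_i$. Hence $\theta_t-\theta_0\in\mathrm{span}\{\phi(x_i)\}$ column-wise. The logits $s_i(t):=\theta_t^\top\phi(x_i)$ then evolve as $\dot s_i=\|\phi(x_i)\|^2 g_i$, which depends only on prompt $i$. In these coordinates each prompt's strategy logits follow a preconditioned softmax policy-gradient flow with objective
\[
\pi(\cdot|x_i)^\top r_i(t)-\beta\KL\bigl(\pi(\cdot|x_i)\,\|\,\pi_0(\cdot|x_i)\bigr),
\]
where $r_i(t)_z=R_t(x_i,z)\mathbbm{1}_{z\in\mathcal M(x_i)}$.

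For the answer stage I will use \Cref{eq:source-realizability}. The relevant logits are $\ell_{i,z}=w^\top\phi(x_i,z)$ for $z\in\mathcal M(x_i)$. The gradient with respect to these logits has a fixed structure: the reward pushes along the direction $\mathbbm{1}_{\mathcal A(x_i)}-$(mean), and the KL term pulls back toward $\ell^0_{i,z}$. I expect the change $w_t-w_0$ to remain in an invariant subspace that acts on $(x_i,z)$ only through the direction $u$, shifting logits of correct answers by a common scalar $\kappa_{i,z}(t)$. This reduces the answer dynamics to an ODE in one scalar per pair, of the form
\[
\dot\kappa=\pi_t(z|x_i)\cdot(\text{positive weight})\cdot\bigl[(1-\beta\kappa)\,R(1-R)\bigr],
\]
using that $\KL$ of the tilt $\pi_0e^{\kappa\mathbbm 1_{\mathcal A}}$ has derivative $\kappa R(1-R)$. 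Its unique stationary point is $\kappa=1/\beta$, which yields \Cref{eq:pg-answer-limit} and hence \Cref{eq:pg-reward-limit}.

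The main obstacle is justifying this reduction. The features $\phi(x_i,z)$ for different $z$ and $i$ need not be orthogonal, so $w$-updates from one pair leak into others. I would first try to show that the flow preserves the ansatz in which the logits at every valid pair equal $\ell^0+\kappa\,\mathbbm 1_{\mathcal A(x_i)}$, possibly with one common $\kappa$ per $i$ or under an implicit genericity assumption. Failing that, I would argue via a Lyapunov function: $J_{\rm ans}$ restricted to the realizable logit manifold is concave in the relevant scalar, with unique maximizer at $\kappa=1/\beta$, and a Łojasiewicz or strong-concavity estimate near the optimum gives exponential convergence.

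Given the answer limit, $r_i(t)\to r_i^\infty$ exponentially fast. The strategy flow is then a KL-regularized softmax flow whose objective, in logit space, is strictly concave modulo constants. Its maximizer is the Gibbs policy $\pi_0e^{r/\beta}$, which is \Cref{eq:pg-mode-limit}. I would take $\beta\KL(\pi\|\pi^\beta_\infty)$, or equivalently the centered logit distance, as the Lyapunov function. Its time derivative is at most $-\lambda\cdot(\text{itself})$ plus a term of size $O(\|r_i(t)-r_i^\infty\|)$, and a Grönwall argument with the exponentially decaying forcing gives \Cref{eq:pg-training-rate}. The rate $\lambda_\beta$ depends on $\beta\|\phi(x_i)\|^2$ and on lower bounds on the probabilities along the trajectory, which hold because the logits stay bounded.
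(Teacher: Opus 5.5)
\textbf{The gap is in the answer stage.} Your reduction there is not merely unjustified; it is false. The flow does not preserve the tilt family $\ell^0_{i,z}+\kappa\,\mathbbm{1}_{\mathcal A(x_i)}$ (plus a constant). This fails even with one training prompt and one valid strategy, where there is no cross-pair leakage at all. In that case the logits $\ell_y=\phi(x_1,z)^\top w_y$ follow the plain softmax policy gradient,
\[
\dot\ell_y\propto\pi_{w_t}(y|x_1,z)\bigl(U_y-\E_{\pi_{w_t}}[U]\bigr),
\qquad
U_y=\mathbbm{1}_{y\in\mathcal A(x_1)}-\beta\log\bigl(\pi_{w_t}(y|x_1,z)/\pi_0(y|x_1,z)\bigr).
\]
At $t=0$ the velocity of each wrong answer's logit is proportional to $-\pi_0(y|x_1,z)R_0(x_1,z)$. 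So as soon as two wrong (or two correct) answers have different base probabilities, the trajectory leaves the family. Your expression $(1-\beta\kappa)R(1-R)$ is the derivative of $J_{\rm ans}$ along the one-parameter tilt, not the flow itself.

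Your Lyapunov fallback (``concave in the relevant scalar on the realizable manifold'') presupposes the same invariance. It also leaves untouched the genuine coupling through non-orthogonal $\phi(x_i,z)$, and $J_{\rm ans}(\theta_t,\cdot)$ is time-dependent, so a {\L}ojasiewicz argument on it alone does not close.

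The missing idea is to measure distance in parameter space to the single comparator $w^\star_y=w_{0,y}+u_y/\beta$. By \Cref{eq:source-realizability}, this one parameter realizes the optimal logits at every valid pair simultaneously. For a KL-regularized softmax objective, the logit gradient equals $\beta\,\pi\odot\bigl[(\ell^\star-\ell)-\E_\pi(\ell^\star-\ell)\bigr]$ whenever $\ell^\star-\ell^0=r/\beta$ up to a constant. Hence
\[
\frac{d}{dt}\,\frac12\sum_{y\in\mathcal Y}\|w_{t,y}-w^\star_y\|^2
=-\frac\beta n\sum_{i=1}^n\sum_{z\in\mathcal M(x_i)}\pi_{\theta_t}(z|x_i)\operatorname{Var}_{\pi_{w_t}(\cdot|x_i,z)}\bigl(\phi(x_i,z)^\top(w_t-w^\star)\bigr)\le0,
\]
with no orthogonality among answer features required. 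This bounds the answer logits and gives a floor $\underline q$ on answer probabilities. Through positivity of the answer Gram operator on its span, it also gives an exponential rate once strategy probabilities are bounded below.

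\textbf{Two secondary problems.} First, that last caveat exposes a circularity in your cascade. The answer dissipation is weighted by $\pi_{\theta_t}(z|x_i)$, so exponential convergence of $r_i(t)$ needs a floor on strategy probabilities. You obtain that floor from bounded strategy logits, which you in turn obtain from the exponentially decaying forcing. The paper breaks the loop with the estimate
\[
|R_t(x_i,z)-R_\infty^\beta(x_i,z)|^2\le\operatorname{Var}_{\pi_{w_t}(\cdot|x_i,z)}(a)/(4\underline q),
\qquad a_y=\phi(x_i,z)^\top(w_{t,y}-w^\star_y).
\]
Together with Young's inequality, this absorbs the covariance forcing in the strategy energy $E_\theta$ (squared distance of $\theta_t$ to its Gibbs comparator) into the dissipation of $E_\theta+\gamma E_w$, where $E_w$ is the quantity displayed above. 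Integrability of the $E_w$ dissipation alone would also suffice to bound $E_\theta$.

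Second, the KL-regularized strategy objective is not strictly concave in logits. It equals $-\beta$ times the Bregman divergence of log-sum-exp in its second argument, plus a constant, and this fails to be concave far from the optimum. Your centered-logit-distance Lyapunov function is the right choice, but it works because of the same distance-to-target identity, not because of concavity.
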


\begin{proof}
\Cref{ass:source-realizability} yields parameters
$(\theta_\star^\beta,w_\star^\beta)$ whose increments from
$(\theta_0,w_0)$ lie in the spans of the corresponding feature vectors for
the training prompts and satisfy
\begin{align}
\phi(x_i)^\top(\theta_{\star,z}^\beta-\theta_{0,z})
&=\frac1\beta\left[
\mathbbm{1}_{z\in\mathcal M(x_i)}R_\infty^\beta(x_i,z)
-\frac1M\sum_{k\in\mathcal M(x_i)}R_\infty^\beta(x_i,k)
\right],
\label{eq:target-gate-logits}\\
\phi(x_i,z)^\top(w_{\star,y}^\beta-w_{0,y})
&=\frac1\beta\left[
\mathbbm{1}_{y\in\mathcal A(x_i)}
-\frac{|\mathcal A(x_i)|}{|\mathcal Y|}
\right].
\label{eq:target-answer-logits}
\end{align}
The first identity holds for all $i,z$; the second holds for all
$i$, $z\in\mathcal M(x_i)$, and $y$.  In the first identity, the indicator
sets the strategy-level return to zero when $z\notin\mathcal M(x_i)$.  The
pairwise logit differences in \Cref{eq:target-gate-logits,eq:target-answer-logits}
yield \Cref{eq:pg-answer-limit,eq:pg-mode-limit}.
Choose the minimum-norm parameter increments in these spans.  Because the
right-hand sides are centered across softmax classes, the increments sum
to zero across classes, matching the gradient-flow gauge.

We prove convergence to these parameters within the feature spans generated by
the training prompts.
Set
\begin{equation}
E_\theta(t)=\frac12\sum_{z=1}^M
\|\theta_{t,z}-\theta_{\star,z}^\beta\|^2,
\qquad
E_w(t)=\frac12\sum_{y\in\mathcal Y}
\|w_{t,y}-w_{\star,y}^\beta\|^2.
\label{eq:source-energy}
\end{equation}
For a distribution $P$ and vector $a$, write
$\operatorname{Var}_P(a)=\sum_jP(j)(a_j-\sum_kP(k)a_k)^2$.
Direct differentiation of \Cref{eq:mode-objective} and
\Cref{eq:answer-objective}, using the target log-odds above, yields
\begin{align}
\dot E_w
&=-\frac\beta n\sum_{i=1}^n\sum_{z\in\mathcal M(x_i)}
\pi_{\theta_t}(z|x_i)
\operatorname{Var}_{\pi_{w_t}(\cdot|x_i,z)}
\!\left(
\phi(x_i,z)^\top(w_t-w_\star^\beta)
\right),
\label{eq:answer-dissipation}\\
\dot E_\theta
&=-\frac\beta n\sum_{i=1}^n
\operatorname{Var}_{\pi_{\theta_t}(\cdot|x_i)}
\!\left(
\phi(x_i)^\top(\theta_t-\theta_\star^\beta)
\right)
\nonumber\\
&\quad+\frac1n\sum_{i=1}^n
\operatorname{Cov}_{\pi_{\theta_t}(\cdot|x_i)}
\!\left(
\phi(x_i)^\top(\theta_t-\theta_\star^\beta),
\mathbbm{1}_{z\in\mathcal M(x_i)}
[R_t(x_i,z)-R_\infty^\beta(x_i,z)]
\right).
\label{eq:gate-dissipation}
\end{align}
The entries in each variance or covariance are indexed by the relevant
strategy or answer variable.

\Cref{eq:answer-dissipation} makes $E_w$ nonincreasing, so all
answer probabilities on the training prompts have a common lower bound
$\underline q>0$.
For fixed $i$ and $z\in\mathcal M(x_i)$, let
$a_y=\phi(x_i,z)^\top(w_{t,y}-w_{\star,y}^\beta)$ and interpolate between the
current and limiting answer policies via
$q_s(y)\propto\pi_{w_t}(y|x_i,z)e^{-s a_y}$, $0\le s\le1$.
Then $q_1=\pi_\infty^\beta(\cdot|x_i,z)$ and
$\frac{d}{ds}q_s(\mathcal A(x_i))
=-\operatorname{Cov}_{q_s}(\mathbbm{1}_{y\in\mathcal A(x_i)},a_y)$.
Taking the $q_0$-mean of $a$ as a candidate center and using
$q_s(y)\le1\le q_0(y)/\underline q$ gives
$\operatorname{Var}_{q_s}(a)\le
\underline q^{-1}\operatorname{Var}_{q_0}(a)$.
Since
$\operatorname{Var}_{q_s}(\mathbbm{1}_{y\in\mathcal A(x_i)})\le1/4$,
integration gives
\begin{equation}
|R_t(x_i,z)-R_\infty^\beta(x_i,z)|^2
\le\frac{1}{4\underline q}
\operatorname{Var}_{\pi_{w_t}(\cdot|x_i,z)}(a).
\end{equation}
Cauchy--Schwarz and Young's inequality now absorb the covariance in
\Cref{eq:gate-dissipation} into the two variance terms, giving
constants $\gamma,\eta>0$ such that
\begin{align}
\frac{d}{dt}(E_\theta+\gamma E_w)
\le-\frac\eta n\sum_{i=1}^n\Bigg[&
\operatorname{Var}_{\pi_{\theta_t}(\cdot|x_i)}
\!\left(\phi(x_i)^\top(\theta_t-\theta_\star^\beta)\right)
\nonumber\\
&+\sum_{z\in\mathcal M(x_i)}\pi_{\theta_t}(z|x_i)
\operatorname{Var}_{\pi_{w_t}(\cdot|x_i,z)}
\!\left(\phi(x_i,z)^\top(w_t-w_\star^\beta)\right)
\Bigg].
\label{eq:combined-dissipation}
\end{align}
The two parameter blocks are therefore bounded, which also gives a uniform
positive lower bound $\underline p>0$ for every strategy probability on a
training prompt. The parameter errors remain in these feature spans and sum
to zero across softmax classes. Consequently, each strategy-selection variance in
\Cref{eq:combined-dissipation} is at least $\underline p$ times the sum of
squared strategy-selection logit errors over strategies; each weighted answer variance is at
least $\underline p\underline q$ times the sum of squared answer-logit
errors over answers. After summing over training prompts and valid strategies,
these variance sums bound $E_\theta$ and $E_w$, respectively, because the
prompt features are orthogonal and the answer-feature Gram operator is positive
on its span.
Thus the right-hand side of
\Cref{eq:combined-dissipation} is at most
$-2\lambda_\beta(E_\theta+\gamma E_w)$ for some
$\lambda_\beta>0$.  Gronwall's inequality and smoothness of softmax then give
global existence and \Cref{eq:pg-training-rate}.

Summing \Cref{eq:pg-answer-limit} over $\mathcal A(x_i)$ gives
\Cref{eq:pg-reward-limit}.  Uniformly over the finite training set,
$1-R_\infty^\beta(x_i,z)=O(e^{-1/\beta})=o(\beta)$.  This uniform estimate
is used in the target limit below.
\end{proof}

\subsection{Proofs for RL transfer}

For fixed $\beta>0$, the target strategy policy converges to
\begin{equation}
\pi_\infty^\beta(z|x')
\propto \pi_0(z|x')
\exp\!\left\{
\frac1\beta\sum_{i:\,z\in\mathcal M(x_i)}
c(x',x_i)R_\infty^\beta(x_i,z)
\right\},
\label{eq:pg-target-limit}
\end{equation}
Moreover, after increasing $C_\beta$ in \Cref{eq:pg-training-rate} if needed,
the finite-time target policy satisfies
\begin{equation}
\left\|\pi_{\theta_t}(\cdot|x')
-\pi_\infty^\beta(\cdot|x')\right\|_{\rm TV}
\le
\left(\sum_{i=1}^n|c(x',x_i)|\right)
C_\beta e^{-\lambda_\beta t}.
\label{eq:pg-target-rate}
\end{equation}
If every $c(x',x_i)$ is zero, the target strategy policy remains at its
initial value for all time.

\begin{proof}[Proof of \Cref{thm:pg-transfer}]
For each strategy $z$, the RL dynamics keep
$\theta_{t,z}-\theta_{0,z}$ in
$\mathrm{span}\{\phi(x_1),\ldots,\phi(x_n)\}$.  Orthogonality and
\Cref{eq:transfer-coefficients} give
\begin{equation}
\phi(x')^\top(\theta_{t,z}-\theta_{0,z})
=\sum_{i=1}^n c(x',x_i)
\phi(x_i)^\top(\theta_{t,z}-\theta_{0,z}).
\label{eq:projected-logit-change}
\end{equation}
Subtracting \Cref{eq:projected-logit-change} for two strategies converts it into
an identity between their log-odds changes.  Exponentiating and normalizing
over $z$ proves \Cref{eq:pg-transfer-identity}.  Letting $t\to\infty$ and
using \Cref{eq:pg-mode-limit} proves \Cref{eq:pg-target-limit}.

The probability lower bounds established above and \Cref{eq:pg-training-rate}
bound every log-odds error on a training prompt by
$C_\beta e^{-\lambda_\beta t}$ after
increasing $C_\beta$ if needed.
Applying \Cref{eq:projected-logit-change} to the finite-time and limiting
parameters, summing the absolute values of the overlaps, and using Lipschitz continuity of softmax
gives \Cref{eq:pg-target-rate}.  If every overlap vanishes,
\Cref{eq:projected-logit-change} is zero for all $t$, proving invariance.

Finally, \Cref{eq:pg-reward-limit} implies, uniformly over strategies,
\begin{equation}
\sum_{i:\,z\in\mathcal M(x_i)}
c(x',x_i)R_\infty^\beta(x_i,z)
=\sum_{i=1}^n c(x',x_i)\mathbbm{1}_{z\in\mathcal M(x_i)}+o(\beta).
\end{equation}
Substituting this expression into \Cref{eq:pg-target-limit} shows that all mass
outside $\mathcal S(x')$ vanishes and that base-policy ratios within
$\mathcal S(x')$ are preserved. Thus
\begin{equation}
\pi_{\rm RL}(z|x')
=\frac{\pi_0(z|x')\mathbbm{1}_{z\in\mathcal S(x')}}
{\pi_0(\mathcal S(x')|x')}.
\label{eq:pg-target-low-beta}
\end{equation}
Summing over $\mathcal M(x')$ proves \Cref{eq:pg-target-correct-mass}.
\end{proof}

\begin{proof}[Proof of \Cref{cor:pg-target-reward}]
Every answer-parameter increment lies in the span of
$\phi(x_i,z)$ for $i\in[n]$ and $z\in\mathcal M(x_i)$.
\Cref{ass:answer-orthogonality} therefore fixes every target answer
logit. Combining this fact with \Cref{eq:pg-target-rate} shows that, for fixed
$\beta$, the reward converges at the same rate, and substitution of
\Cref{eq:pg-target-low-beta} gives \Cref{eq:pg-target-reward}.
\end{proof}

\subsection{Proof of SFT transferability}
\label{app:sft-forgetting}
\begin{proof}[Proof of \Cref{thm:sft-transfer}]
The strategy and answer gradients are bounded, so the flow exists for all time.
The case $M=1$ is immediate.  For $M\geq2$, write
$a_{i,z}(t)=\phi(x_i)^\top\theta_{t,z}^{\rm SFT}$. Orthogonality of the
training-prompt features makes the strategy dynamics independent across prompts:
\begin{equation}
\dot a_{i,z}
=\frac{\|\phi(x_i)\|^2}{n}
\left[\mathbbm{1}_{z=z_{E,i}}
-\pi_{\theta_t^{\rm SFT}}(z|x_i)\right].
\label{eq:sft-source-logit-flow}
\end{equation}
For every $k\ne z_{E,i}$,
\begin{equation}
\frac{d}{dt}e^{-a_{i,k}}
=\frac{\|\phi(x_i)\|^2/n}{\sum_z e^{a_{i,z}}}.
\end{equation}
Thus $e^{-a_{i,k}(t)}=e^{-a_{i,k}(0)}+u_i(t)$ for a common increasing function
$u_i(t)$ across all strategies other than $z_{E,i}$. The sum $\sum_z a_{i,z}(t)$ is
constant, and $u_i(t)\to\infty$; otherwise all logits would stay bounded
and $\dot u_i$ would stay bounded away from zero.  Substituting the
preceding expression into the conserved sum gives
\begin{equation}
u_i(t)^M\sim
\frac{M\|\phi(x_i)\|^2t}
{n\exp\!\left(\sum_z a_{i,z}(0)\right)}.
\end{equation}
Hence, for every $z$,
\begin{equation}
a_{i,z}(t)-a_{i,z}(0)
=\left(\mathbbm{1}_{z=z_{E,i}}-\frac1M\right)
\log\frac{M\|\phi(x_i)\|^2t}{n}
+\frac1M\sum_k a_{i,k}(0)-a_{i,z}(0)+o(1).
\label{eq:sft-source-logit-asymptotic}
\end{equation}

Each strategy-parameter increment lies in the span of the training-prompt
features, so the orthogonality of these features gives
\Cref{eq:projected-logit-change} with
$\theta_t$ replaced by $\theta_t^{\rm SFT}$.  Substituting
\Cref{eq:sft-source-logit-asymptotic} into this relation, exponentiating,
normalizing, and dropping factors independent of $z$ yields, uniformly over
the finite strategy set,
\begin{equation}
\pi_{\theta_t^{\rm SFT}}(z|x')
\propto \pi_0(z|x')
\prod_{i=1}^n\pi_0(z|x_i)^{-c(x',x_i)}
\prod_{i:\,z_{E,i}=z}
\left(\frac{M\|\phi(x_i)\|^2t}{n}\right)^{c(x',x_i)}
[1+o(1)].
\label{eq:sft-target-asymptotic}
\end{equation}
The power of $t$ is maximized exactly on $\mathcal S_E(x')$.  On this set,
the powers of $Mt/n$ are equal and cancel under normalization.  Removing
this common factor from \Cref{eq:sft-target-asymptotic} yields the explicit limit
\begin{equation}
\pi_{\rm SFT}(z|x')
=\frac{
\displaystyle
\pi_0(z|x')\mathbbm{1}_{z\in\mathcal S_E(x')}
\prod_{i=1}^n
\left[
\frac{\|\phi(x_i)\|^{2\mathbbm{1}_{z=z_{E,i}}}}
{\pi_0(z|x_i)}
\right]^{c(x',x_i)}
}{
\displaystyle
\sum_{k\in\mathcal S_E(x')}\pi_0(k|x')
\prod_{i=1}^n
\left[
\frac{\|\phi(x_i)\|^{2\mathbbm{1}_{k=z_{E,i}}}}
{\pi_0(k|x_i)}
\right]^{c(x',x_i)}
}.
\label{eq:sft-limit-policy}
\end{equation}
All remaining factors are positive, so the limiting policy is supported
exactly on $\mathcal S_E(x')$, proving \Cref{eq:sft-target-support}.
The two claims about probability mass on strategies valid for the target prompt follow from
this support characterization.
\end{proof}

%% file: arxiv/appendix/passn_boundary_proof.tex
\section{Proof for the Finite-Budget Coverage Theorem}
\label{app:passn}

\begin{proof}[Proof of \Cref{thm:passn-regimes}]
With the ideal answer decoder used to evaluate strategy success,
$R_\pi(x')=\pi(\mathcal M(x')|x')$, and
\Cref{thm:pg-transfer} gives
\begin{equation*}
R_{\pi_0}(x')=\pi_0(\mathcal M(x')|x'),
\qquad
R_{\pi_{\rm RL}}(x')
=\frac{\pi_0(\mathcal S(x')\cap\mathcal M(x')|x')}
{\pi_0(\mathcal S(x')|x')}.
\end{equation*}
Because the base softmax has full support, $R_{\pi_{\rm RL}}(x')=0$ exactly when
$\mathcal S(x')\cap\mathcal M(x')=\emptyset$.

For any random variable $U\in[0,1]$ and $0\leq a<b<1$,
\begin{equation*}
\Pr[U\leq a](1-a)^N
\leq\mathbb E[(1-U)^N]
\leq\Pr[U\leq b]+(1-b)^N.
\end{equation*}
The lower bound follows by restricting the expectation to $\{U\leq a\}$;
the upper bound follows by splitting it over $\{U\leq b\}$ and its complement.
The theorem's choice of $N$ ensures
$0<1/(4N)<\log(4/\alpha)/N<1$, and
\begin{equation*}
\left(1-\frac{1}{4N}\right)^N\geq\frac34,
\qquad
\left(1-\frac{\log(4/\alpha)}{N}\right)^N
<\frac{\alpha}{4}.
\end{equation*}

For part~\textnormal{(I)}, apply the lower bound to $R_{\pi_{\rm RL}}(x')$ with
$a=0$ and the upper bound to $R_{\pi_0}(x')$ with
$b=\log(4/\alpha)/N$.  This gives
\begin{equation*}
1-\operatorname{Pass@}N(\pi_{\rm RL})
\geq\alpha,
\qquad
1-\operatorname{Pass@}N(\pi_0)
<\frac{\alpha}{2}.
\end{equation*}
The difference between these failure probabilities gives the stated advantage
for the base policy.

For part~\textnormal{(II)}, apply the lower bound to $R_{\pi_0}(x')$ with
$a=1/(4N)$ and the upper bound to $R_{\pi_{\rm RL}}(x')$ with
$b=\log(4/\alpha)/N$.  This gives
\begin{equation*}
1-\operatorname{Pass@}N(\pi_0)
\geq\frac{3\alpha}{4},
\qquad
1-\operatorname{Pass@}N(\pi_{\rm RL})
<\frac{\alpha}{2}.
\end{equation*}
Their difference gives the stated advantage for the RL policy. Finally, as
$\beta\downarrow0$, the finite-$\beta$ success probabilities converge to
$R_{\pi_{\rm RL}}(x')$. By bounded convergence, the corresponding Pass@$N$
values converge under $\mathcal B$, so each strict bound persists for
sufficiently small $\beta>0$.
\end{proof}

%% file: arxiv/appendix/scaling_laws_proofs.tex
\section{Proofs for the Scaling Laws}
\label{app:scaling-proofs}

Throughout this appendix, we take $n=1$, let $x$ be the training prompt,
set $m:=|\mathcal M(x)|<M$, fix $\beta>0$ within each training run, and write
$C=t-T_0\geq0$.  For a target prompt $x'$, write
$c(x',x)=\langle\phi(x'),\phi(x)\rangle/\|\phi(x)\|^2$.  Write
$\mathsf{logit}(u)=\log(u/(1-u))$, $\sigma(s)=(1+e^{-s})^{-1}$, and
abbreviate
\begin{equation*}
\pi_x(C):=\pi_{T_0+C}(\mathcal M(x)|x),\qquad
\pi_{x'}(C):=\pi_{T_0+C}(\mathcal M(x)|x').
\end{equation*}
Thus $\pi_x(0)$ is the probability assigned to $\mathcal M(x)$ at $T_0$,
whereas $\pi_0$ denotes the fixed KL reference. Under
\Cref{ass:source-realizability,ass:scaling-balance}, there is a common
$\rho_\beta$ such that
\begin{equation*}
R_{T_0+C}(x,z)=\rho_\beta,
\qquad C\geq0,\quad z\in\mathcal M(x).
\end{equation*}
By \Cref{eq:pg-reward-limit},
$1-\rho_\beta=O(e^{-1/\beta})$. We use $\pi_x(\infty)$ and
$\pi_{x'}(\infty)$ for their fixed-$\beta$ limits.
We normalize the reward for a target prompt only when
$R_\infty^\beta(x')>R_{T_0}(x')$.
Throughout the vanishing-KL analysis below, $p:=\pi_x(0)$ and
$0<p<a<b<1$ are fixed. The term $o(1)$ is uniform on $[a,b]$ as
$\beta\downarrow0$. The remainder in the target log-linear approximation
is uniform over $c(x',x)>0$ and $\pi_{x'}(0)\in(0,1)$.

For $0<a<b<1$, set
\begin{equation*}
H(u):=\frac{2u-1}{u(1-u)}+2\mathsf{logit}(u).
\end{equation*}
Here $H'(u)=[u(1-u)]^{-2}$, and
\begin{equation}
\Delta_{a,b}^{(p)}:=\log\frac{H(b)-H(p)}{H(a)-H(p)}.
\label{eq:scaling-finite-clock-span}
\end{equation}

\subsection{Scalar training-prompt dynamics}

\begin{lemma}[Scalar training dynamics and transfer]
\label{lem:scaling-scalar-flow}
Under \Cref{ass:source-realizability,ass:scaling-balance}, the training-strategy
probability satisfies
\begin{align}
\frac{d\pi_x}{dC}
&=\|\phi(x)\|^2
\left(\frac1m+\frac1{M-m}\right)
[\pi_x(C)(1-\pi_x(C))]^2
\nonumber\\
&\quad\times
\left\{
\rho_\beta
-\beta\left[
\mathsf{logit}(\pi_x(C))
-\mathsf{logit}\left(\pi_0(\mathcal M(x)|x)\right)
\right]
\right\}.
\label{eq:scaling-scalar-flow}
\end{align}
Moreover,
\begin{equation}
\mathsf{logit}(\pi_x(\infty))
=\mathsf{logit}\left(\pi_0(\mathcal M(x)|x)\right)
+\frac{\rho_\beta}{\beta}.
\label{eq:scaling-source-endpoint}
\end{equation}
For every target prompt and $C\geq0$,
\begin{align}
\mathsf{logit}(\pi_{x'}(C))-\mathsf{logit}(\pi_{x'}(0))
&=c(x',x)
[\mathsf{logit}(\pi_x(C))-\mathsf{logit}(\pi_x(0))].
\label{eq:scaling-proof-exact-transfer}
\end{align}
Consequently,
\begin{equation}
\mathsf{logit}(\pi_{x'}(\infty))
=\mathsf{logit}(\pi_{x'}(0))
+c(x',x)\left[
\mathsf{logit}(\pi_x(\infty))-\mathsf{logit}(\pi_x(0))
\right].
\label{eq:scaling-target-endpoint}
\end{equation}
If \Cref{ass:answer-orthogonality} also holds, $c(x',x)>0$, and
$R_\infty^\beta(x')>R_{T_0}(x')$, then
\begin{equation}
\frac{R_{T_0+C}(x')-R_{T_0}(x')}
{R_\infty^\beta(x')-R_{T_0}(x')}
=\frac{\pi_{x'}(C)-\pi_{x'}(0)}
{\pi_{x'}(\infty)-\pi_{x'}(0)}.
\label{eq:scaling-proof-normalized-reward}
\end{equation}
\end{lemma}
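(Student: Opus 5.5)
The plan is to reduce the strategy-parameter gradient flow on the single training prompt $x$ to a two-class problem. We work with the logits $a_z(t):=\phi(x)^\top\theta_{t,z}$. Since $n=1$, each increment $\theta_{t,z}-\theta_{0,z}$ lies in $\mathrm{span}\{\phi(x)\}$. Differentiating \Cref{eq:mode-objective} then gives
\begin{equation*}
\dot a_z=\|\phi(x)\|^2\,\pi(z)\Big[g(z)-\textstyle\sum_k\pi(k)g(k)\Big],
\qquad
g(z)=\mathbbm{1}_{z\in\mathcal M(x)}R_t(x,z)-\beta\big[\log\pi(z)-\log\pi_0(z)\big]-\beta,
\end{equation*}
where $\pi=\pi_{\theta_t}(\cdot|x)$ and $\pi_0=\pi_{\theta_0}(\cdot|x)$. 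This is the standard softmax gradient of $\sum_z\pi(z)r(z)-\beta\KL$ with respect to the logits; the constant $-\beta$ cancels after centering.

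Next we exploit symmetry. By \Cref{ass:scaling-balance}, $\pi_0$ is uniform within $\mathcal M(x)$ and within its complement, and $R_t(x,z)=\rho_\beta$ on $\mathcal M(x)$ for $t\ge T_0$. So the right-hand side above depends on $z$ only through its class, given that $\pi_{T_0}$ is itself class-uniform. By uniqueness of ODE solutions, class-uniformity is then preserved for all $C\ge0$. Write $\pi(z)=\pi_x/m$ on $\mathcal M(x)$ and $(1-\pi_x)/(M-m)$ off it. The within-class log-ratio terms then reduce to $\log\pi_x-\log\pi_0(\mathcal M)$ and $\log(1-\pi_x)-\log(1-\pi_0(\mathcal M))$.

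We then compute the logit gap $\dot a_{\rm in}-\dot a_{\rm out}$, where $a_{\rm in}$ and $a_{\rm out}$ are the common logits on and off $\mathcal M(x)$. The centering constants cancel, leaving
\begin{equation*}
\|\phi(x)\|^2\Big[\tfrac{\pi_x}{m}G_{\rm in}+\tfrac{1-\pi_x}{M-m}G_{\rm out}\Big]\;\text{-type expressions},
\end{equation*}
with $G_{\rm in}-G_{\rm out}=\rho_\beta-\beta[\mathsf{logit}(\pi_x)-\mathsf{logit}(\pi_0(\mathcal M))]$. This step needs care. The per-strategy probabilities are $\pi_x/m$ and $(1-\pi_x)/(M-m)$, and the centered differences $g-\bar g$ are $(1-\pi_x)\Delta$ and $-\pi_x\Delta$, where $\Delta:=G_{\rm in}-G_{\rm out}$. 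Hence
\begin{equation*}
\dot a_{\rm in}-\dot a_{\rm out}=\|\phi\|^2\,\pi_x(1-\pi_x)\Big(\tfrac1m+\tfrac1{M-m}\Big)\Delta.
\end{equation*}
We also have $\mathsf{logit}(\pi_x)=a_{\rm in}-a_{\rm out}+\log\frac{m}{M-m}$, so $\frac{d}{dC}\mathsf{logit}(\pi_x)=\dot\pi_x/[\pi_x(1-\pi_x)]$. Multiplying through by $\pi_x(1-\pi_x)$ gives $\dot\pi_x$ with the factor $[\pi_x(1-\pi_x)]^2$. The power of $\pi_x(1-\pi_x)$ (squared) is exactly what this bookkeeping must deliver, and it is the main point to verify; I expect it to be the main obstacle.

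With \Cref{eq:scaling-scalar-flow} established, the endpoint \Cref{eq:scaling-source-endpoint} is the zero of the brace. The flow is a monotone scalar ODE with a unique stable root, and convergence follows from \Cref{thm:pg-training}. One can also check consistency with \Cref{eq:pg-mode-limit}: the class-level factor is $e^{\rho_\beta/\beta}$.

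For transfer, we apply \Cref{thm:pg-transfer}. Equivalently, by \Cref{eq:projected-logit-change}, every logit change at $x'$ equals $c(x',x)$ times the corresponding change at $x$. This holds from time $0$, hence also for differences between times $T_0+C$ and $T_0$. The changes at $x$ are class-constant, so the class logit gap at $x'$ shifts by $c(x',x)$ times the gap shift at $x$.

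The logit of $\pi_{x'}(C)=\pi_{T_0+C}(\mathcal M(x)|x')$ does not depend on $\pi_{x'}$ being class-uniform. The sum of $\exp(\text{logit})$ over $\mathcal M(x)$ scales by a common factor, and so does the sum over the complement, which yields \Cref{eq:scaling-proof-exact-transfer}. Letting $C\to\infty$ gives \Cref{eq:scaling-target-endpoint}.

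Finally, under \Cref{ass:answer-orthogonality} the target answer logits are frozen, as in the proof of \Cref{cor:pg-target-reward}. Then $R_t(x')=\sum_{z\in\mathcal M(x')}\pi_t(z|x')R_0(x',z)$. Within each class of $\mathcal M(x)$, the target strategy probabilities move by a common multiplicative factor, so they are proportional to their $T_0$ values. Consequently $R_t(x')=\alpha_1\pi_{x'}(C)+\alpha_0(1-\pi_{x'}(C))$ for constants $\alpha_1,\alpha_0$ fixed at $T_0$.

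This expression is affine in $\pi_{x'}(C)$. Normalizing by the $C=\infty$ value gives \Cref{eq:scaling-proof-normalized-reward}; the hypothesis $R_\infty^\beta(x')>R_{T_0}(x')$ ensures the denominator is nonzero.
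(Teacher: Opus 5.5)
Your proposal is correct and follows essentially the same route as the paper. Both arguments use the softmax logit gradient with class-uniformity preserved by ODE uniqueness, reduce to the valid/invalid logit gap, get the transfer identity from the projected logit change with $n=1$, and show the target reward is affine in $\pi_{x'}(C)$ once target answer logits are frozen; your explicit bookkeeping does deliver the factor $[\pi_x(1-\pi_x)]^2$, so the step you flagged is not an obstacle, and the only cosmetic difference is that you read off the endpoint as the zero of the brace in the scalar flow rather than from the valid-to-invalid odds ratio of the limiting policy in \Cref{thm:pg-training}, which gives the same value.
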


\begin{proof}
For $\pi_z(C)=\pi_{T_0+C}(z|x)$, define
\begin{equation*}
U_z(C)=
\mathbbm{1}_{z\in\mathcal M(x)}\rho_\beta
-\beta\log\frac{\pi_z(C)}{\pi_0(z|x)}.
\end{equation*}
Differentiating the strategy objective gives
\begin{equation}
\frac{d}{dC}
\left[\theta_{T_0+C,z}^{\top}\phi(x)\right]
=\|\phi(x)\|^2\pi_z(C)
\left[U_z(C)-\sum_{k=1}^M\pi_k(C)U_k(C)\right].
\label{eq:scaling-logit-flow}
\end{equation}
At $C=0$, the KL reference and the policy at $T_0$ are each uniform within
the valid and invalid classes. By \Cref{ass:scaling-balance}, valid-strategy rewards
are also symmetric, while invalid strategies have zero return. Uniqueness of the
ODE solution therefore preserves uniformity within each class. Substituting
\begin{equation*}
\pi_z(C)=\frac{\pi_x(C)}m
\quad(z\in\mathcal M(x)),
\qquad
\pi_z(C)=\frac{1-\pi_x(C)}{M-m}
\quad(z\notin\mathcal M(x)),
\end{equation*}
into \Cref{eq:scaling-logit-flow} and differentiating the softmax yields
\Cref{eq:scaling-scalar-flow}.

Using the common conditional reward $\rho_\beta$ in
\Cref{eq:pg-mode-limit} gives
\begin{equation*}
\pi_\infty^\beta(z|x)\propto
\pi_0(z|x)
\exp\left(
\frac{\mathbbm{1}_{z\in\mathcal M(x)}\rho_\beta}{\beta}
\right).
\end{equation*}
Taking the odds ratio between the valid and invalid classes proves
\Cref{eq:scaling-source-endpoint}.

Because $n=1$, every strategy-parameter update is parallel to $\phi(x)$.
The resulting change in a strategy logit at $x'$ is therefore $c(x',x)$ times
its change at $x$. Taking the odds ratio between the valid and invalid classes proves
\Cref{eq:scaling-proof-exact-transfer}.  Letting $C\to\infty$ and applying
\Cref{eq:scaling-source-endpoint} gives \Cref{eq:scaling-target-endpoint}.

Finally, \Cref{ass:answer-orthogonality} fixes every target answer probability.
Within $\mathcal M(x)$ and its complement, the target strategy proportions
remain unchanged. Thus $R_{T_0+C}(x')$ depends linearly on $\pi_{x'}(C)$, up
to an additive constant. Subtracting its value at $C=0$ and dividing by its
total change at $C=\infty$ proves
\Cref{eq:scaling-proof-normalized-reward}.
\end{proof}

\begin{lemma}[Relating compute and reward in the strategy-dominated regime]
\label{lem:scaling-clock}
Fix $0<p<a<b<1$ and consider a family of runs satisfying
\Cref{ass:source-realizability,ass:scaling-balance}, with the reference
policy and $p,a,b$ fixed as $\beta\downarrow0$.
For sufficiently small $\beta$,
$R_\infty^\beta(x)>R_{T_0}(x)$ and
$\pi_x(C)$ increases through $[a,b]$.  For $u\in[a,b]$, let $C_u$ be the
first value of $C$ for which $\pi_x(C)=u$.  Uniformly for $u\in[a,b]$,
\begin{align}
\frac{\log(C_u/C_a)}{\log(C_b/C_a)}
&=\frac{\log\{[H(u)-H(p)]/[H(a)-H(p)]\}}{\Delta_{a,b}^{(p)}}+o(1),
\label{eq:scaling-clock-limit}\\
\frac{R_{T_0+C_u}(x)-R_{T_0}(x)}
{R_\infty^\beta(x)-R_{T_0}(x)}
&=\frac{u-p}{1-p}+o(1).
\label{eq:scaling-source-coordinate}
\end{align}
If, in addition, a target prompt $x'$ satisfies
\Cref{ass:answer-orthogonality}, with $c=c(x',x)>0$ and
$R_\infty^\beta(x')>R_{T_0}(x')$, then the following identity is exact:
\begin{equation}
\frac{R_{T_0+C_u}(x')-R_{T_0}(x')}
{R_\infty^\beta(x')-R_{T_0}(x')}
=\frac{\sigma\!\left(\mathsf{logit}(\pi_{x'}(0))
+c[\mathsf{logit}(u)-\mathsf{logit}(p)]\right)-\pi_{x'}(0)}
{\pi_{x'}(\infty)-\pi_{x'}(0)}.
\label{eq:scaling-target-coordinate}
\end{equation}
For fixed $c$ and $\pi_{x'}(0)\in(0,1)$, the denominator on the right-hand
side equals
$1-\pi_{x'}(0)+o(1)$.
\end{lemma}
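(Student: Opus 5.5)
The plan is to work entirely with the scalar ODE of \Cref{lem:scaling-scalar-flow}, rewritten in the coordinate $u=\pi_x(C)$, and to separate variables. Write $\kappa:=\|\phi(x)\|^2(1/m+1/(M-m))$ and $\ell_0:=\mathsf{logit}(\pi_0(\mathcal M(x)|x))$. \Cref{eq:scaling-scalar-flow} reads $du/dC=\kappa[u(1-u)]^2 g_\beta(u)$ with $g_\beta(u)=\rho_\beta-\beta[\mathsf{logit}(u)-\ell_0]$. Since $\rho_\beta\to1$ and $\mathsf{logit}$ is bounded on $[p,b]$, we have $g_\beta\to1$ uniformly on $[p,b]$ as $\beta\downarrow0$. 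Hence for small $\beta$ the drift is strictly positive on $[p,b]$, so $\pi_x$ increases monotonically through $[a,b]$ and $C_u$ is well defined. Positivity of the drift also shows $\pi_x(\infty)>p$, and by \Cref{eq:scaling-source-endpoint} $\pi_x(\infty)\to1$. With $R_t(x)=\rho_\beta\pi_x$ for $t\ge T_0$, this gives $R_\infty^\beta(x)>R_{T_0}(x)$.

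The clock identity comes next. Using $H'(u)=[u(1-u)]^{-2}$,
\begin{equation*}
C_u=\frac{1}{\kappa}\int_p^u\frac{H'(s)}{g_\beta(s)}\,ds .
\end{equation*}
Uniform convergence $g_\beta\to1$ on $[p,b]$ gives $\kappa C_u=[H(u)-H(p)](1+o(1))$ uniformly for $u\in[a,b]$. Since $H$ is strictly increasing and $a>p$, the quantities $H(u)-H(p)$ are bounded away from $0$ and $\infty$ on $[a,b]$. The multiplicative error therefore becomes an additive $o(1)$ after taking logarithms: $\log(C_u/C_a)=\log\{[H(u)-H(p)]/[H(a)-H(p)]\}+o(1)$. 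The same holds with $u=b$, whose main term is the fixed positive number $\Delta_{a,b}^{(p)}$. Dividing yields \Cref{eq:scaling-clock-limit}.

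For \Cref{eq:scaling-source-coordinate}, note that on the training prompt $R_{T_0+C}(x)=\rho_\beta\pi_x(C)$. The normalized reward therefore equals $(u-p)/(\pi_x(\infty)-p)$, and this is $(u-p)/(1-p)+o(1)$ because $\pi_x(\infty)\to1$. For the target identity \Cref{eq:scaling-target-coordinate}, apply \Cref{eq:scaling-proof-exact-transfer} at $C=C_u$ to get $\pi_{x'}(C_u)=\sigma(\mathsf{logit}(\pi_{x'}(0))+c[\mathsf{logit}(u)-\mathsf{logit}(p)])$. Substituting this into \Cref{eq:scaling-proof-normalized-reward} gives the exact identity. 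For the denominator, \Cref{eq:scaling-target-endpoint} gives $\mathsf{logit}(\pi_{x'}(\infty))=\mathsf{logit}(\pi_{x'}(0))+c[\mathsf{logit}(\pi_x(\infty))-\mathsf{logit}(p)]$. Since $c>0$ and $\mathsf{logit}(\pi_x(\infty))\to\infty$, we get $\pi_{x'}(\infty)\to1$, so the denominator is $1-\pi_{x'}(0)+o(1)$. This also confirms $R_\infty^\beta(x')>R_{T_0}(x')$ for small $\beta$.

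The main obstacle is uniformity in the clock step. The $o(1)$ must not depend on $u$, and the reference logit $\ell_0$ and $\rho_\beta$ must be controlled as $\beta\downarrow0$. I would handle this by bounding $\sup_{[p,b]}|g_\beta-1|\le(1-\rho_\beta)+\beta\sup_{[p,b]}|\mathsf{logit}-\ell_0|$, which is $O(\beta)$. The one remaining point is the legitimacy of treating $p=\pi_x(0)$ as fixed across the family of runs, which the hypothesis provides.
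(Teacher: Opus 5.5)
Your proposal is correct and follows essentially the same route as the paper: you show the bracket in the scalar ODE is $\rho_\beta[1+o(1)]$ uniformly on $[p,b]$ (your $g_\beta\to1$), separate variables against $H'(u)=[u(1-u)]^{-2}$ to get $\kappa C_u=[H(u)-H(p)](1+o(1))$, read the source coordinate off $R_{T_0+C}(x)=\rho_\beta\pi_x(C)$ with $\pi_x(\infty)\to1$, and obtain the target identity by combining the exact transfer identity with the normalized-reward identity. One small correction: $\pi_{x'}(\infty)\to1$ does not by itself confirm $R_\infty^\beta(x')>R_{T_0}(x')$, since that depends on whether strategies in $\mathcal M(x)$ carry higher conditional reward on $x'$ (i.e.\ $d(x')>0$); this is harmless here because it is a hypothesis of the lemma and nothing in your argument relies on the remark.
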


\begin{proof}
Since $\rho_\beta\to1$ and the reference strategy mass is fixed, uniformly for
$p\leq v\leq b$,
\begin{equation*}
\rho_\beta-\beta\left[
\mathsf{logit}(v)-
\mathsf{logit}\left(\pi_0(\mathcal M(x)|x)\right)
\right]
=\rho_\beta[1+o(1)].
\end{equation*}
Hence the expression in braces in \Cref{eq:scaling-scalar-flow} is positive
on $[p,b]$ for sufficiently small $\beta$. On this interval,
$d\pi_x/dC$ is bounded below by a
positive multiple of $[p(1-b)]^2$, so $\pi_x(C)$ reaches $b$ in finite time.
Separating variables, integrating, and using
$H'(v)=[v(1-v)]^{-2}$ yields, uniformly for $u\in[a,b]$,
\begin{equation}
\|\phi(x)\|^2\left(\frac1m+\frac1{M-m}\right)
\rho_\beta C_u=[H(u)-H(p)][1+o(1)].
\label{eq:scaling-clock-comparison}
\end{equation}
In particular, $\log(C_b/C_a)=\Delta_{a,b}^{(p)}+o(1)$.
Since $\Delta_{a,b}^{(p)}>0$, taking logarithms and dividing by this span
proves \Cref{eq:scaling-clock-limit}. By
\Cref{eq:scaling-source-endpoint}, $\pi_x(\infty)\to1$.

Since $R_{T_0+C}(x)=\rho_\beta\pi_x(C)$ exactly,
\begin{equation*}
\frac{R_{T_0+C_u}(x)-R_{T_0}(x)}
{R_\infty^\beta(x)-R_{T_0}(x)}
=\frac{u-p}{\pi_x(\infty)-p}
=\frac{u-p}{1-p}+o(1).
\end{equation*}
This also shows $R_\infty^\beta(x)>R_{T_0}(x)$ and proves
\Cref{eq:scaling-source-coordinate}.

Combining the transfer identity
\Cref{eq:scaling-proof-exact-transfer} with the normalized reward identity
\Cref{eq:scaling-proof-normalized-reward} yields
\Cref{eq:scaling-target-coordinate}.  Finally,
\Cref{eq:scaling-target-endpoint} and fixed $c,\pi_{x'}(0)$ imply
$\pi_{x'}(\infty)\to1$.
\end{proof}

\subsection{Log-linear approximation}

\begin{lemma}[Linear approximation of an inverse map]
\label{lem:linear-log-secant}
Let $g\in C^2([a,b])$ satisfy $g'>0$ and
$|g''/g'|\leq K$.  There exists $\alpha\in\mathbb R$ such that
\begin{equation}
\sup_{u\in[a,b]}
\left|
u-\alpha-\frac{b-a}{g(b)-g(a)}[g(u)-g(a)]
\right|
\leq\frac{K(b-a)^2}{16}.
\label{eq:linear-log-secant}
\end{equation}
\end{lemma}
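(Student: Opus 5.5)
We would reduce the claim to a one-dimensional interpolation estimate for the inverse function $g^{-1}$ on the interval $[g(a),g(b)]$. Set $L:=g(b)-g(a)>0$, $s:=b-a$, and consider
\[
e(u):=u-a-\frac{s}{L}\,[g(u)-g(a)],\qquad u\in[a,b].
\]
Then $e(a)=e(b)=0$. We will show that $|e|\le K s^2/8$ on $[a,b]$, that is, twice the target bound. Choosing $\alpha$ as $a$ plus the midpoint of the range of $e$ then halves the sup deviation. Concretely, take $\alpha=a+\tfrac12(\max e+\min e)$; the resulting sup equals $\tfrac12(\max e-\min e)$. It therefore suffices to prove $\operatorname{osc}_{[a,b]}e\le K s^2/8$.

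To bound the oscillation, we would change variables to $v=g(u)$ and write $u=h(v)$ with $h=g^{-1}$. Then $e$ becomes $h(v)-\ell(v)$, where $\ell$ is the chord of $h$ through its endpoints. We have $h'=1/g'\circ h$ and $h''=-(g''/g'^3)\circ h$, so $|h''|\le K h'^2$. The standard chord-interpolation error would give $|h-\ell|\le \tfrac18 L^2\sup|h''|$. However, this bound is in terms of $L$ and $h'$, not $s$, so we would instead argue directly in the $u$ variable. Since $e''(u)=-(s/L)g''(u)$, and $e$ vanishes at both ends, we have
\[
e(u)=\int_a^b G(u,r)\,\frac{s}{L}\,g''(r)\,dr,
\]
where $G\ge0$ is the Green's function of $-d^2/du^2$ with Dirichlet conditions, normalized so that $\int_a^b G(u,r)\,dr=(u-a)(b-u)/2\le s^2/8$. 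Using $|g''|\le K g'$, we get $|e(u)|\le (K s/L)\int_a^b G(u,r)\,g'(r)\,dr$.

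The main obstacle is controlling $\int_a^b G(u,r)\,g'(r)\,dr$ by $(s/8)\int_a^b g'=sL/8$. This fails for general weights, because $G$ is not uniformly bounded by $s/8$ pointwise in a useful way: one has $\sup G=s/4$ at the center. So we would handle the oscillation more carefully. Write $e'(u)=1-s\,g'(u)/L$. Because $|(\log g')'|\le K$, the ratio $g'(u)/g'(r)$ lies between $e^{-Ks}$ and $e^{Ks}$. By the mean value theorem, $L/s=g'(\xi)$ for some $\xi\in[a,b]$. Hence
\[
|e'(u)|=\left|1-\frac{g'(u)}{g'(\xi)}\right|\le e^{K|u-\xi|}-1.
\]
The function $e$ has a single sign change in its derivative structure only when $g''$ is monotone, which need not hold, so we would fall back on the linear-in-$K$ form and use $e^{x}-1\approx x$. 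Integrating $|e'|\lesssim K|u-\xi|$ from the nearer endpoint gives a bound of order $K s^2/8$ on the oscillation. One subtlety remains: an exact constant $1/16$ without an $e^{Ks}$ factor needs the linearization to be exact. We would obtain this by combining $\min e\ge$ and $\max e\le$ through the convexity/concavity comparison. Specifically, $g$ is sandwiched between the solutions of $g''=\pm K g'$ with the same endpoint values, and for those extremal exponential profiles the oscillation can be computed explicitly and compared with $K s^2/8$. This comparison step is where we expect the real work to lie.
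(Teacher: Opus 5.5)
Your reduction to $\operatorname{osc}_{[a,b]}e\le Ks^2/8$ is correct. Your opening sentence, however, conflates it with $\sup_{[a,b]}|e|\le Ks^2/8$. Since $e(a)=e(b)=0$, $e$ generally takes both signs, and centering yields $\tfrac12(\max e-\min e)$, which can be as large as $\sup|e|$ rather than half of it.

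This is exactly where the route you settle on falls short. Take the solutions $G_+$ of $G''=KG'$ (convex) and $G_-$ of $G''=-KG'$ (concave) with the same endpoint values as $g$. The sandwich $G_+\le g\le G_-$ is valid: $g'(u)e^{K(u-a)}$ is nondecreasing, which forces $g\le G_-$, and the lower bound is symmetric. Combined with $d/(e^{d}-1)-1+\log\bigl((e^{d}-1)/d\bigr)\le d^{2}/8$ for $d=Ks$ (a Hoeffding-lemma computation you still owe), it gives $-Ks^2/8\le e\le Ks^2/8$. But this bounds $\max e$ and $\min e$ separately, so it only yields $\operatorname{osc}e\le Ks^2/4$, i.e.\ the lemma with $1/8$ in place of $1/16$. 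As far as the sandwich knows, $g$ could approach $G_+$ at one point and $G_-$ at another, so the oscillation of each extremal profile does not transfer to $g$.

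There is no slack to absorb this factor. For $g'(u)=1+K(u-a)$ one finds $e\ge0$ and $\operatorname{osc}e=Ks^2/[8(1+Ks/2)]$, so the constant $1/16$ is sharp as $Ks\to0$. Your Green's-function and mean-value attempts have the same defect, since both bound $|e|$. Even the estimate $\int_a^bG(u,r)g'(r)\,dr\le sL/8$ you were aiming for would only give $|e|\le Ks^2/8$. The decisive comparison step is left undone in any case.

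What is missing is a \emph{joint} bound on $\max e-\min e$. Use the normalized variables $F(t)=(g(a+st)-g(a))/L$, $f=F'$, $d=Ks$. Then $e(a+st)=s\,(t-F(t))$ and $F(t)-t=\int_0^t(f-1)$ with $\int_0^1(f-1)=0$. So every increment $\int_{t_1}^{t_2}(f-1)$ lies in $[-P,P]$ with $P=\int_0^1(f-1)_+=\int_0^1(f-1)_-$, and hence $\operatorname{osc}(F(t)-t)\le P$.

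The paper bounds $P$ through the decreasing rearrangement $f^*$. One has $P=\max_s\{\int_0^s f^*-s\}$, and $\log f^*$ inherits the $d$-Lipschitz bound of $\log f$ because level-set measures are preserved. Therefore $f^*(t)e^{dt}$ is nondecreasing and $\int_0^s f^*\le(1-e^{-ds})/(1-e^{-d})$. This is your concave-profile comparison, but applied to a monotone density, for which a single one-sided bound controls the whole oscillation. The Hoeffding-type estimate $\max_s\{(1-e^{-ds})/(1-e^{-d})-s\}\le d/8$ then gives $\operatorname{osc}e\le sd/8=Ks^2/8$, and centering finishes.
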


\begin{proof}
Normalize the interval and the function by
\begin{equation*}
F(t)=\frac{g(a+(b-a)t)-g(a)}{g(b)-g(a)},
\qquad f(t)=F'(t),\qquad d=K(b-a).
\end{equation*}
Then $f>0$, $\int_0^1f(t)\,dt=1$, and $\log f$ is $d$-Lipschitz.
If $d=0$, then $f\equiv1$ and the claim is exact.  Hence assume $d>0$.
Let $f^*$ be the decreasing rearrangement of $f$.  For any two values
$\lambda_1<\lambda_2$ in the range of $\log f$, continuity and the
$d$-Lipschitz property give
\begin{equation*}
\left|\{t\in[0,1]:\lambda_1<\log f(t)<\lambda_2\}\right|
\geq\frac{\lambda_2-\lambda_1}{d}.
\end{equation*}
Here $|\cdot|$ denotes Lebesgue measure. By continuity and the Lipschitz
condition, $\log f$ takes intermediate values over a set of at least the
displayed measure.
Decreasing rearrangement preserves these level-set measures, so $\log f^*$ is also
$d$-Lipschitz.  Hence
$f^*(t)/e^{-dt}$ is nondecreasing.  Integrating this ratio on
$[0,s]$ and $[s,1]$ gives
\begin{equation*}
\int_0^sf^*(t)\,dt
\leq\frac{1-e^{-ds}}{1-e^{-d}}.
\end{equation*}
Therefore
\begin{align}
\int_0^1(f(t)-1)_+\,dt
&=\max_{0\leq s\leq1}
\left\{\int_0^sf^*(t)\,dt-s\right\}
\nonumber\\
&\leq
\max_{0\leq s\leq1}
\left\{\frac{1-e^{-ds}}{1-e^{-d}}-s\right\}
\leq\frac d8.
\label{eq:linear-log-variation}
\end{align}
To verify the last inequality, a direct calculation shows that $d$ times the
preceding maximum equals
\begin{equation*}
\max_{0\leq v\leq1}
\{\log(1-v+ve^d)-vd\}.
\end{equation*}
For fixed $v$, this expression and its first derivative with respect to $d$
vanish at $d=0$, while its second derivative with respect to $d$ is at most
$1/4$.  Integrating this bound twice yields $d^2/8$.

Since $F(t)-t=\int_0^t(f(s)-1)\,ds$, its oscillation is at most the
right-hand side of
\Cref{eq:linear-log-variation}.  Returning to $[a,b]$, the residual in
\Cref{eq:linear-log-secant} therefore has oscillation at most
$K(b-a)^2/8$.  Centering its range proves the result.
\end{proof}

\begin{lemma}[Log-linear law for the training prompt]
\label{lem:source-linear-log}
Under \Cref{ass:source-realizability,ass:scaling-balance}, for any
$\varepsilon_{\rm fit}>0$ and sufficiently small $\beta$, there exist
$R_{0,x}<R_\infty^\beta(x)$ and $B_{{\rm lin},x},C_{{\rm ref},x}>0$
such that, for every $C\in[C_a,C_b]$,
\begin{equation}
\frac{\left|R_{T_0+C}(x)-R_{0,x}
-B_{{\rm lin},x}\log(C/C_{{\rm ref},x})\right|}
{R_\infty^\beta(x)-R_{0,x}}
\leq
\frac{b-a}{1-p}
\left[
\frac{b-a}{8}\max_{v\in\{a,b\}}\frac{|1-2v|}{v(1-v)}
+\frac{\Delta_{a,b}^{(p)}}{16}
\right]
+\varepsilon_{\rm fit}.
\end{equation}
\end{lemma}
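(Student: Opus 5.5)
We work in the normalized coordinate $r(u):=(u-p)/(1-p)$, which by \Cref{eq:scaling-source-coordinate} in \Cref{lem:scaling-clock} equals the normalized training reward at $C=C_u$ up to $o(1)$. The normalized log-compute $s(u):=\log(C_u/C_a)/\log(C_b/C_a)$ equals $\log\{[H(u)-H(p)]/[H(a)-H(p)]\}/\Delta_{a,b}^{(p)}+o(1)$ by \Cref{eq:scaling-clock-limit}. So the claim reduces to showing that $u$ is approximately affine in $g(u):=\log[H(u)-H(p)]$ on $[a,b]$. We then translate that affine relation back into a law of the form $R_{0,x}+B_{{\rm lin},x}\log(C/C_{{\rm ref},x})$, and absorb every $o(1)$ term into $\varepsilon_{\rm fit}$ for sufficiently small $\beta$.

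\textbf{Step 1: apply \Cref{lem:linear-log-secant}.} Take $g(u)=\log[H(u)-H(p)]$, so $g(b)-g(a)=\Delta_{a,b}^{(p)}$. Since $H'(u)=[u(1-u)]^{-2}>0$ and $u>p$, $g'>0$. Compute
\[
\frac{g''}{g'}=\frac{H''}{H'}-\frac{H'}{H-H(p)}=\frac{2(2u-1)}{u(1-u)}-\frac{H'(u)}{H(u)-H(p)} .
\]
The first term is bounded in absolute value by $2\max_{v\in\{a,b\}}|1-2v|/[v(1-v)]$ on $[a,b]$, since $|1-2v|/[v(1-v)]$ is quasi-convex and hence maximized at an endpoint. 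The second term is positive. The bound in \Cref{lem:linear-log-secant} only needs the oscillation of $\log g'$, so we must control its total variation separately rather than through a uniform $K$. We would refine the proof of \Cref{lem:linear-log-secant}: split $\log g'=\log H'-\log(H-H(p))$ and bound the residual from each piece separately.
\begin{itemize}
\item The $\log H'$ piece has Lipschitz constant $K_1=2\max_v|1-2v|/[v(1-v)]$. This gives the term $\frac{(b-a)^2}{16}K_1=\frac{(b-a)^2}{8}\max_v\frac{|1-2v|}{v(1-v)}$.
\item The piece $-\log(H-H(p))$ is monotone, with total oscillation over $[a,b]$ equal to $\Delta_{a,b}^{(p)}$. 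A monotone density-perturbation argument (oscillation of $F(t)-t$ bounded by a multiple of the log-variation of $f$) yields a residual of order $(b-a)\Delta_{a,b}^{(p)}/16$.
\end{itemize}
Adding the two contributions produces exactly the bracket in the statement multiplied by $(b-a)$. Dividing by $1-p$ to pass from $u$ to $r(u)$ gives the stated prefactor $(b-a)/(1-p)$.

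\textbf{Step 2: convert to compute and reward.} The affine relation from Step 1 reads $u\approx\alpha+\frac{b-a}{\Delta_{a,b}^{(p)}}\log\{[H(u)-H(p)]/[H(a)-H(p)]\}$. Replacing that logarithm by $\Delta_{a,b}^{(p)}s(u)=\log(C_u/C_a)+o(1)$ gives $u\approx\alpha+\frac{b-a}{\Delta_{a,b}^{(p)}}\log(C_u/C_a)$. Every $C\in[C_a,C_b]$ equals $C_u$ for some $u$, because $\pi_x$ is increasing there. We multiply by $\rho_\beta$, using $R_{T_0+C}(x)=\rho_\beta\pi_x(C)$ exactly, and set
\[
B_{{\rm lin},x}=\frac{\rho_\beta(b-a)}{\Delta_{a,b}^{(p)}}>0 .
\]
Then $R_{0,x}$ and $C_{{\rm ref},x}$ are chosen to absorb $\alpha$ and $C_a$; any positive $C_{{\rm ref},x}$ works after adjusting $R_{0,x}$. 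For the normalization we pick $R_{0,x}=R_{T_0}(x)$, or verify $R_{0,x}<R_\infty^\beta(x)\to1$, so that $R_\infty^\beta(x)-R_{0,x}=\rho_\beta(1-p)+o(1)$.

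\textbf{Main obstacle.} The delicate point is Step 1. A crude uniform bound on $|g''/g'|$ blows up because $H'/(H-H(p))$ is large near $u=p$. The sharp constant $1/16$ on $\Delta_{a,b}^{(p)}$ requires treating the monotone component through its total log-variation instead of a Lipschitz constant. If the refined rearrangement argument does not give exactly $1/16$, we would still obtain the bound with a slightly worse absolute constant.
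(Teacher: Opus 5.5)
Your Step 2 is sound. It uses the clock relation \Cref{eq:scaling-clock-limit}, the exact identity $R_{T_0+C}(x)=\rho_\beta\pi_x(C)$, and $R_{0,x}=R_{T_0}(x)$, so the denominator is $\rho_\beta(1-p)+o(1)$. Your $\log H'$ piece is exactly the paper's first use of \Cref{lem:linear-log-secant}, applied to $H$ on $[a,b]$.

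The gap is in Step 1, in how you treat the second piece and how you combine the two. First, \Cref{lem:linear-log-secant} does not run on the oscillation of $\log g'$ alone. Its proof needs $\log f$ to be $d$-Lipschitz, which is what makes $f^*(t)e^{dt}$ nondecreasing. Suppose all you know about a density factor is that it is monotone with total log-variation $\Delta_{a,b}^{(p)}$. Then the extremal case is a (monotone) step density, and the sharp bound on the oscillation of $F(t)-t$ is $\tanh(\Delta_{a,b}^{(p)}/4)$. The centered residual is then $\tfrac12\tanh(\Delta_{a,b}^{(p)}/4)=\Delta_{a,b}^{(p)}/8+O((\Delta_{a,b}^{(p)})^3)$, roughly twice the constant you need for short fitting intervals. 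Second, you split $\log g'=\log H'-\log(H-H(p))$ in the single variable $u$ and add the two factors' residuals without any argument. You concede the constant may come out worse, and a worse constant does not prove the lemma as stated.

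The missing idea is that your split is just the chain rule for the composition $u\mapsto H(u)\mapsto\log[H(u)-H(p)]$. The errors add by the triangle inequality only if the outer map is analysed in its own variable, specifically the log-compute variable $G(u)=\log\{[H(u)-H(p)]/[H(a)-H(p)]\}/\Delta_{a,b}^{(p)}$. The paper uses the exact identity $\frac{H(u)-H(a)}{H(b)-H(a)}=\frac{e^{\Delta_{a,b}^{(p)}G(u)}-1}{e^{\Delta_{a,b}^{(p)}}-1}$. It then applies \Cref{lem:linear-log-secant} to $s\mapsto e^{\Delta_{a,b}^{(p)}s}$ on $[0,1]$, whose log-derivative is exactly $\Delta_{a,b}^{(p)}$-Lipschitz. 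This puts $G(u)$ within $\Delta_{a,b}^{(p)}/16$ of an affine function of the normalized $H$-change. Combine this with the bound $\frac{(b-a)^2}{8}\max_{v\in\{a,b\}}\frac{|1-2v|}{v(1-v)}$ relating $u$ to the normalized $H$-change. The residual in $u$ is then $(b-a)$ times the bracket, and dividing by $1-p$ finishes.

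The choice of coordinate is what yields the constant. In the variable $h=H(u)$, the outer map's normalized log-Lipschitz constant is $e^{\Delta_{a,b}^{(p)}}-1$. For your factor $-\log(H(u)-H(p))$ in the variable $u$, it is $(b-a)\sup_{[a,b]}H'/(H-H(p))\geq\Delta_{a,b}^{(p)}$, because the average of $H'/(H-H(p))$ over $[a,b]$ is $\Delta_{a,b}^{(p)}/(b-a)$.
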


\begin{proof}
Choose the fitted intercept $R_{0,x}:=R_{T_0}(x)$.
Since
\begin{equation*}
H'(u)=\frac{1}{[u(1-u)]^2},
\qquad
\left|\frac{H''(u)}{H'(u)}\right|
=\frac{2|1-2u|}{u(1-u)},
\end{equation*}
\Cref{lem:linear-log-secant} gives $\alpha\in\mathbb R$ such that
\begin{equation}
\sup_{u\in[a,b]}
\left|
u-\alpha-\frac{b-a}{H(b)-H(a)}[H(u)-H(a)]
\right|
\leq
\frac{(b-a)^2}{8}
\max_{u\in\{a,b\}}
\frac{|1-2u|}{u(1-u)}.
\label{eq:source-linear-clock-bound}
\end{equation}
Let $F(u)=\log(C_u/C_a)/\log(C_b/C_a)$ and
$G(u)=\log\{[H(u)-H(p)]/[H(a)-H(p)]\}/\Delta_{a,b}^{(p)}$.
By \Cref{eq:scaling-clock-limit}, $F=G+o(1)$ uniformly.  The identity
\begin{equation*}
\frac{H(u)-H(a)}{H(b)-H(a)}
=\frac{e^{\Delta_{a,b}^{(p)}G(u)}-1}{e^{\Delta_{a,b}^{(p)}}-1}
\end{equation*}
and \Cref{lem:linear-log-secant}, applied to $e^{\Delta_{a,b}^{(p)}s}$ on
$[0,1]$, show that, after adding a suitable constant, $F(u)$ approximates
this normalized change in $H$ with error at most
$\Delta_{a,b}^{(p)}/16+o(1)$.
Combining this bound with \Cref{eq:source-linear-clock-bound}, adjusting the
constant term, and using \Cref{eq:scaling-source-coordinate} gives
\begin{equation*}
\inf_{\alpha'\in\mathbb R}\sup_{u\in[a,b]}
\left|\frac{R_{T_0+C_u}(x)-R_{0,x}}
{R_\infty^\beta(x)-R_{0,x}}-\alpha'
-\frac{b-a}{1-p}F(u)\right|
\leq\frac{b-a}{1-p}\left[
\frac{b-a}{8}\max_{v\in\{a,b\}}
\frac{|1-2v|}{v(1-v)}+\frac{\Delta_{a,b}^{(p)}}{16}\right]+o(1).
\end{equation*}
Choose
\begin{equation*}
B_{{\rm lin},x}
=\frac{[R_\infty^\beta(x)-R_{0,x}](b-a)}
{(1-p)\log(C_b/C_a)}>0,
\end{equation*}
and choose $C_{{\rm ref},x}>0$ so that
$-B_{{\rm lin},x}\log C_{{\rm ref},x}$ equals the resulting constant term.
Because $\pi_x(C)$ is strictly increasing, $C_u$ covers $[C_a,C_b]$ as $u$
ranges over $[a,b]$. For sufficiently small $\beta$, the
normalized remainder is at most $\varepsilon_{\rm fit}$.
Multiplying the preceding bound by
$R_\infty^\beta(x)-R_{0,x}$ gives
\begin{equation*}
\begin{aligned}
&\sup_{C\in[C_a,C_b]}
\left|R_{T_0+C}(x)-R_{0,x}
-B_{{\rm lin},x}\log\frac{C}{C_{{\rm ref},x}}\right|\\
&\quad\leq [R_\infty^\beta(x)-R_{0,x}]\left\{
\frac{b-a}{1-p}\left[
\frac{b-a}{8}\max_{v\in\{a,b\}}\frac{|1-2v|}{v(1-v)}
+\frac{\Delta_{a,b}^{(p)}}{16}\right]+\varepsilon_{\rm fit}\right\}.
\end{aligned}
\end{equation*}
This proves the lemma.
\end{proof}

\begin{lemma}[Log-linear law for a target prompt]
\label{lem:target-linear-log}
Under \Cref{ass:source-realizability,ass:scaling-balance,ass:answer-orthogonality},
let $x'$ satisfy $c=c(x',x)>0$ and
$R_{T_0+C_b}(x')>R_{T_0+C_a}(x')$. For any
$\varepsilon_{\rm fit}>0$ and sufficiently small $\beta$, there exist
$B_{{\rm lin},x'},C_{{\rm ref},x'}>0$ such that
\begin{equation}
\sup_{C\in[C_a,C_b]}
\frac{\left|R_{T_0+C}(x')-R_{T_0}(x')
-B_{{\rm lin},x'}\log(C/C_{{\rm ref},x'})\right|}
{R_{T_0+C_b}(x')-R_{T_0+C_a}(x')}
\leq\frac{(1+c)\log\frac{b(1-a)}{a(1-b)}
+\Delta_{a,b}^{(p)}}{16}+\varepsilon_{\rm fit}.
\end{equation}
\end{lemma}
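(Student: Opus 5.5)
We follow the template of \Cref{lem:source-linear-log}, replacing the source coordinate \Cref{eq:scaling-source-coordinate} by the exact target coordinate \Cref{eq:scaling-target-coordinate}. The target reward is then a composition of three maps:
\begin{equation*}
C\mapsto u \mapsto \ell:=\mathsf{logit}(u) \mapsto \sigma\bigl(\mathsf{logit}(\pi_{x'}(0))+c[\ell-\mathsf{logit}(p)]\bigr).
\end{equation*}
We linearize each map in turn on $[a,b]$ and normalize by the realized gain $R_{T_0+C_b}(x')-R_{T_0+C_a}(x')$. This normalization removes the endpoint denominator $\pi_{x'}(\infty)-\pi_{x'}(0)$ altogether. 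It is also why the bound involves the logit span $\delta:=\log\frac{b(1-a)}{a(1-b)}=\mathsf{logit}(b)-\mathsf{logit}(a)$ rather than $b-a$.

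The first step is the clock, and it is identical to the source case. By \Cref{eq:scaling-clock-limit}, $F(u)=\log(C_u/C_a)/\log(C_b/C_a)$ equals $G(u)+o(1)$. Applying \Cref{lem:linear-log-secant} to $s\mapsto e^{\Delta s}$ on $[0,1]$ (with $K=\Delta_{a,b}^{(p)}$) shows that $F$ agrees, up to an additive constant, with the normalized change $[H(u)-H(a)]/[H(b)-H(a)]$ within error $\Delta_{a,b}^{(p)}/16+o(1)$.

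The second step converts from $H$ to the logit. Rather than working in the variable $u$, we parametrize by $\ell=\mathsf{logit}(u)\in[\mathsf{logit}(a),\mathsf{logit}(b)]$. Writing $u=\sigma(\ell)$, we have $dH/d\ell=H'(u)\,u(1-u)=1/[u(1-u)]$, so $\log(dH/d\ell)=-\log\sigma(\ell)-\log(1-\sigma(\ell))$. This has $\ell$-derivative $2u-1$, which is bounded by $1$. \Cref{lem:linear-log-secant} with $K=1$ on an interval of length $\delta$ then shows that the normalized change in $\ell$ differs, up to a constant, from the normalized change in $H$ by at most $\delta/16$.

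The third step is the target sigmoid. Write $h(\ell)=\sigma(\mathsf{logit}(\pi_{x'}(0))+c[\ell-\mathsf{logit}(p)])$. Then $\log h'(\ell)=\log c+\log\sigma(\cdot)+\log(1-\sigma(\cdot))$ has $\ell$-derivative of absolute value $c|1-2h|\le c$. \Cref{lem:linear-log-secant} with $K=c$ shows that the normalized target reward $[h(\ell)-h(\ell_a)]/[h(\ell_b)-h(\ell_a)]$ differs from the normalized change in $\ell$ by at most $c\delta/16$, up to a constant.

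Normalization by the realized gain is legitimate because \Cref{eq:scaling-target-coordinate} is exact and affine in $h$. Indeed, by \Cref{eq:scaling-proof-normalized-reward}, $R_{T_0+C}(x')$ is an affine function of $\pi_{x'}(C)=h(\ell)$, so normalizing $R$ by $R_{T_0+C_b}(x')-R_{T_0+C_a}(x')$ yields exactly $[h-h(\ell_a)]/[h(\ell_b)-h(\ell_a)]$.

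Chaining the three steps by the triangle inequality gives a normalized residual of at most
\begin{equation*}
\frac{\Delta_{a,b}^{(p)}+\delta+c\delta}{16}+o(1),
\end{equation*}
which is the claimed bound. We then set $B_{{\rm lin},x'}=[R_{T_0+C_b}(x')-R_{T_0+C_a}(x')]/\log(C_b/C_a)>0$ and choose $C_{{\rm ref},x'}$ to absorb the accumulated constant, after fixing the intercept at $R_{T_0}(x')$. Since $C_u$ sweeps $[C_a,C_b]$ monotonically, the supremum over $u$ is the supremum over $C$. The $o(1)$ term is uniform by \Cref{lem:scaling-clock}, and it is at most $\varepsilon_{\rm fit}$ for small $\beta$.

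The main obstacle is the bookkeeping in chaining the three approximations. \Cref{lem:linear-log-secant} controls each inverse-map residual only in its own normalized coordinate, so each stage must be expressed as "normalized output equals normalized input plus an oscillation." Composing these requires checking that each normalized error transfers through the next stage without amplification; this holds because every stage is an affine renormalization of a monotone map onto $[0,1]$.

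Two further details need care. First, the intercept is fixed at $R_{T_0}(x')$ rather than left free, so the constant absorbed by $C_{{\rm ref},x'}$ must stay finite and positive in the sense required. Second, the sign of $1-2h$ and the positivity of the realized gain must be confirmed, both of which follow from $c>0$.
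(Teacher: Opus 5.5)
Your proposal is correct and follows essentially the same route as the paper. The paper likewise reparametrizes by $s=\mathsf{logit}(u)$ and applies the secant lemma to $Q(\sigma(s))$ with constant $c$ and to $H(\sigma(s))$ with constant $1$ to get the $(1+c)\log\frac{b(1-a)}{a(1-b)}/16$ term, reuses the clock step from the training-prompt lemma for the $\Delta_{a,b}^{(p)}/16$ term, and rescales exactly through the affine target-coordinate identity. One small correction: positivity of the realized gain does not follow from $c>0$ alone (it also needs a positive reward contrast between $\mathcal M(x)$ and its complement on $x'$), but it is a hypothesis of the lemma, and $C_{{\rm ref},x'}$ is automatically positive as the exponential of a finite constant.
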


\begin{proof}
Write $Q(u)=\pi_{x'}(C_u)$.
The exact transfer identity gives
$\mathsf{logit}(Q(u))=\mathsf{logit}(Q(a))
+c[\mathsf{logit}(u)-\mathsf{logit}(a)]$.
As a function of $s=\mathsf{logit}(u)$,
\begin{equation*}
\frac{dQ}{ds}=cQ(1-Q),\quad
\left|\frac{d}{ds}\log\frac{dQ}{ds}\right|\leq c,\quad
\frac{dH}{ds}=\frac1{u(1-u)},\quad
\left|\frac{d}{ds}\log\frac{dH}{ds}\right|\leq1.
\end{equation*}
Applying \Cref{lem:linear-log-secant} first to $Q(\sigma(s))$ and then to
$H(\sigma(s))$, and combining the two approximations, gives a constant
$\alpha^\star$
such that
\begin{equation*}
\begin{aligned}
\sup_{u\in[a,b]}
\left|Q(u)-\alpha^\star
-(Q(b)-Q(a))\frac{H(u)-H(a)}{H(b)-H(a)}\right|
\leq\frac{[Q(b)-Q(a)](1+c)}{16}
\log\frac{b(1-a)}{a(1-b)}.
\end{aligned}
\end{equation*}
The proof of \Cref{lem:source-linear-log} shows that replacing the normalized
change in $H$ by $F(u)=\log(C_u/C_a)/\log(C_b/C_a)$ adds
at most $(Q(b)-Q(a))[\Delta_{a,b}^{(p)}/16+o(1)]$.
By \Cref{eq:scaling-target-coordinate}, multiplying by
$[R_{T_0+C_b}(x')-R_{T_0+C_a}(x')]/[Q(b)-Q(a)]$ converts the result
exactly into a bound for the unnormalized target reward, with no additional
error term.  Choose
\begin{equation*}
B_{{\rm lin},x'}
=\frac{R_{T_0+C_b}(x')-R_{T_0+C_a}(x')}{\log(C_b/C_a)}>0
\end{equation*}
and choose $C_{{\rm ref},x'}>0$ so that
$-B_{{\rm lin},x'}\log C_{{\rm ref},x'}$ equals the resulting constant term.
The error from replacing the normalized change in $H$ by $F(u)$ is
independent of $c$ and $\pi_{x'}(0)$. Hence, for sufficiently small $\beta$, this
error is at most $\varepsilon_{\rm fit}$ for every
target.  Thus
\begin{equation*}
\begin{aligned}
&\sup_{C\in[C_a,C_b]}
\left|R_{T_0+C}(x')-R_{T_0}(x')-B_{{\rm lin},x'}
\log\frac{C}{C_{{\rm ref},x'}}\right|\\
&\quad\leq [R_{T_0+C_b}(x')-R_{T_0+C_a}(x')]\left\{
\frac{1+c}{16}\log\frac{b(1-a)}{a(1-b)}
+\frac{\Delta_{a,b}^{(p)}}{16}+\varepsilon_{\rm fit}\right\}.
\end{aligned}
\end{equation*}
Dividing by the positive reward gain
$R_{T_0+C_b}(x')-R_{T_0+C_a}(x')$ proves the lemma.
\end{proof}

\subsection{Benchmark log-sigmoid approximation}
\label{app:benchmark-scaling}

We prove the benchmark log-sigmoid result of \Cref{sec:scaling}. Let
$\mathcal B$ be a benchmark distribution over target prompts, and write
$R(t):=\E_{x'\sim\mathcal B}[R_t(x')]$.
The fitted lower level $R_0$ below need not equal the base-model score
$R(0)$ or the observed score $R(T_0)$ before fitting begins.
The fitted upper level $A$ need not equal the fixed-$\beta$ limiting reward.

For $x'\sim\mathcal B$, let
\begin{align*}
r_+(x')&:=\mathbb E_{\pi_{T_0}(\cdot|x')}
  [R_{T_0}(x',z)\mid z\in\mathcal M(x)],\\
r_-(x')&:=\mathbb E_{\pi_{T_0}(\cdot|x')}
  [R_{T_0}(x',z)\mid z\notin\mathcal M(x)],\\
d(x')&:=r_+(x')-r_-(x').
\end{align*}
The conditional expectations are well defined because the softmax policy has
full support and $0<|\mathcal M(x)|<M$.
When $c(x',x)>0$, the mass on $\mathcal M(x)$ increases across the fitting
interval, so the positive reward gain assumed in the main text is equivalent
to $d(x')>0$.
For $0<p<a<b<1$, set
\begin{equation}
\delta_{a,b}:=\mathsf{logit}(b)-\mathsf{logit}(a)
=\log\frac{b(1-a)}{a(1-b)},
\qquad
\Gamma_{a,b}^{(p)}:=\frac12\inf_{\lambda>0}
\operatorname{osc}_{u\in[a,b]}
\left\{\mathsf{logit}(u)-\lambda\log[H(u)-H(p)]\right\},
\label{eq:benchmark-scaling-source-error}
\end{equation}
where $p=\pi_{T_0}(\mathcal M(x)|x)$ and $H$ is defined above. Thus
$\delta_{a,b}$ is the training-strategy log-odds span,
and $\Gamma_{a,b}^{(p)}$ measures the best approximation of those log odds by a
linear function of log compute.
Set
$R_0:=\E_{x'\sim\mathcal B}[r_-(x')]$,
$D:=\E_{x'\sim\mathcal B}[d(x')]$,
and $A:=R_0+D$.

\begin{thm}[Benchmark log-sigmoid approximation under bounded feature overlap]
\label{thm:benchmark-bounded-overlap-scaling}
Assume \Cref{ass:source-realizability,ass:scaling-balance} with one training
prompt and benchmark distribution $\mathcal B$. Suppose
\Cref{ass:answer-orthogonality} holds for $x'\sim\mathcal B$.
Consider the vanishing-$\beta$ family of \Cref{lem:scaling-clock}.
Suppose $c(x',x),d(x')>0$ for $x'\sim\mathcal B$, and let $c_{\max}>0$
be a finite upper bound on $c(x',x)$ for these prompts. For any
$0<c_0\leq c_{\max}$ and $\varepsilon_{\rm fit}>0$, sufficiently small
$\beta$ admits $B,C_{\rm mid}>0$ such that
\begin{align}
\sup_{C\in[C_a,C_b]}
\left|\frac{R(T_0+C)-R_0}{A-R_0}
-\frac{1}{1+(C_{\rm mid}/C)^B}\right|
&\leq\tanh\left(\frac{c_0^2\delta_{a,b}^2}{256}
+\frac{c_0\Gamma_{a,b}^{(p)}}4\right)\nonumber\\
&\quad+\frac{\E_{x'\sim\mathcal B}\left[
d(x')\tanh\left(\delta_{a,b}(c(x',x)-c_0)_+/8\right)\right]}
{\E_{x'\sim\mathcal B}[d(x')]}+\varepsilon_{\rm fit},
\label{eq:benchmark-bounded-overlap-scaling}
\end{align}
where $(q)_+:=\max\{q,0\}$.
For $c_0=c_{\max}$ and fixed $p<a<1$, the corresponding raw-reward
error is $O((b-a)^2)$ as $b-a\to0$, provided $\beta$ is sufficiently small.
\end{thm}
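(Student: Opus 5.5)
The plan is to reduce the benchmark reward to a $d$-weighted mixture of logistic curves in the training log-odds, and then to approximate that mixture by a single logistic curve in $\log C$. All errors are measured on the log-odds scale. The conversion back to probabilities uses the elementary bound
\[
|\sigma(u)-\sigma(v)|=\tfrac12\bigl|\tanh(u/2)-\tanh(v/2)\bigr|\le\tanh\bigl(|u-v|/4\bigr).
\]

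\emph{Step 1 (exact reduction).} Under \Cref{ass:answer-orthogonality} the target answer logits are frozen. By \Cref{lem:scaling-scalar-flow}, the strategy proportions inside $\mathcal M(x)$ and inside its complement are also frozen. Hence $R_{T_0+C}(x')=r_-(x')+d(x')\pi_{x'}(C)$ exactly. With $R_0=\E[r_-]$ and $A-R_0=D$, this gives
\[
\frac{R(T_0+C)-R_0}{A-R_0}=\frac{\E[d(x')\,\pi_{x'}(C)]}{\E[d(x')]}.
\]
Write $s(C)=\mathsf{logit}(\pi_x(C))$ and let $s_{\rm mid}$ be the midpoint of $[\mathsf{logit}(a),\mathsf{logit}(b)]$. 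By \Cref{eq:scaling-proof-exact-transfer},
\[
\pi_{x'}(C)=\sigma\bigl(\kappa(x')+c(x',x)\,\tau\bigr),\qquad \tau=s-s_{\rm mid},
\]
where $\kappa(x')$ is a prompt-dependent offset. On $[C_a,C_b]$ we have $|\tau|\le\delta_{a,b}/2$.

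\emph{Step 2 (truncating overlaps).} Replace $c(x',x)$ by $\min\{c(x',x),c_0\}$ while keeping $\kappa(x')$ fixed. The argument changes by at most $(c-c_0)_+\delta_{a,b}/2$. By the tanh bound, each prompt's error is at most $\tanh(\delta_{a,b}(c-c_0)_+/8)$. Averaging with weights $d$ produces exactly the second term of \Cref{eq:benchmark-bounded-overlap-scaling}, and all remaining slopes lie in $(0,c_0]$.

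\emph{Step 3 (mixture to a single logistic in $\tau$).} Let $F(\tau)$ be the $d$-weighted mixture of $\sigma(\kappa+c\tau)$ with $c\le c_0$. Then $F'=\E_d[c\,\sigma(1-\sigma)]$, and by the variance decomposition
\[
F(1-F)=\E_d[\sigma(1-\sigma)]+\operatorname{Var}_d(\sigma)\ge \E_d[\sigma(1-\sigma)].
\]
It follows that $g:=\mathsf{logit}\,F$ satisfies $0<g'\le c_0$. I then want to show that $g$ is uniformly close to an affine function of $\tau$ on $|\tau|\le\delta_{a,b}/2$, with log-odds error at most $c_0^2\delta_{a,b}^2/64$. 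The intended route is to differentiate once more and bound $|g''|\le c_0^2/8$, using that $\sigma(1-\sigma)\le 1/4$ and its derivative are controlled by the same mixture moments. A second-order Taylor (or secant) estimate on an interval of length $\delta_{a,b}$ then gives the required bound.

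\emph{Step 4 (from $\tau$ to $\log C$).} The affine function of $\tau$ from Step 3 has slope $\lambda'\le c_0$; call it $\lambda'\mathsf{logit}(u)+$const. By the definition of $\Gamma^{(p)}_{a,b}$, $\mathsf{logit}(u)$ is within $\Gamma^{(p)}_{a,b}$ of $\lambda\log[H(u)-H(p)]+$const. By \Cref{eq:scaling-clock-comparison}, $H(u)-H(p)=\kappa_\beta C_u[1+o(1)]$ with a constant $\kappa_\beta$, so $\log[H(u)-H(p)]=\log C_u+$const$+o(1)$. Setting $B=\lambda'\lambda$ and choosing $C_{\rm mid}$ to absorb the constants makes the total log-odds error at most $c_0^2\delta_{a,b}^2/64+c_0\Gamma^{(p)}_{a,b}+o(1)$. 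Applying the tanh bound gives the first term of \Cref{eq:benchmark-bounded-overlap-scaling}, and the $o(1)$ is absorbed into $\varepsilon_{\rm fit}$ for small $\beta$.

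\emph{Final claim.} Take $c_0=c_{\max}$, so the truncation term vanishes. As $b-a\to0$ with $p<a$ fixed, $\delta_{a,b}^2=O((b-a)^2)$. Also $\Gamma^{(p)}_{a,b}=O((b-a)^2)$, because both $\mathsf{logit}(u)$ and $\log[H(u)-H(p)]$ are smooth near $a$, so the best affine match leaves only curvature error. Since $\tanh(x)\le x$, multiplying by $A-R_0$ gives the raw-reward bound $O((b-a)^2)$.

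\emph{Main obstacle.} The hard step is Step 3: controlling the curvature of $\mathsf{logit}$ of a heterogeneous mixture, uniformly over arbitrary offsets $\kappa(x')$, with the specific constant $c_0^2/64$. If a direct bound on $g''$ fails, I would instead bound the oscillation of $g'$ via the Lipschitz-log-derivative argument of \Cref{lem:linear-log-secant}, applied to $F$ rather than to $g$.
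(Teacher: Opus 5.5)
Your Steps 1, 2 and 4 and the final $O((b-a)^2)$ argument match the paper's proof. Your offset $\kappa(x')$ centered at $s_{\rm mid}$ is the paper's $(c-c_0)_+z_{\rm mid}$ device, and your variance-decomposition proof of $0<g'\le c_0$ is the paper's inequality $\E_d[q(1-q)]\le\widetilde F(1-\widetilde F)$. The gap is Step 3, which carries the real content of the theorem, and there your plan aims at a false bound.

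The estimate $|g''|\le c_0^2/8$ does not hold. Take two equally weighted components of slope $c_0$ with offsets $\pm k$, so $F=\tfrac12[\sigma(w)+\sigma(w+2k)]$ with $w=c_0\tau-k$. As $k\to\infty$, $g=\mathsf{logit}F\to\log(1+2e^{w})$ in $C^2$ on compacts, and
\[
\frac{d^2}{d\tau^2}\log(1+2e^{w})=c_0^2\,\frac{2e^{w}}{(1+2e^{w})^2},
\]
whose maximum is $c_0^2/4$, attained at $w=-\log 2$. Since the offsets are arbitrary, this point can lie inside the fitting window. So $c_0^2/4$ is the sharp constant. With it, your midpoint-Taylor or endpoint-secant estimate only gives a log-odds error of $c_0^2\delta_{a,b}^2/32$, hence $\tanh(c_0^2\delta_{a,b}^2/128+\cdots)$ rather than the stated $1/256$.

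Also, ``controlled by the same mixture moments'' is not yet an argument. In $g''=F''/[F(1-F)]-(F')^2(1-2F)/[F(1-F)]^2$ the two terms cannot be bounded separately; you need a structural inequality that couples them.

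The paper supplies both missing ingredients.
\begin{itemize}
\item \emph{Upper curvature bound.} Set $t=e^{c_0 z}$. Each truncated component has odds proportional to $t^{c/c_0}$ with $c/c_0\in(0,1]$, so its odds are concave in $t$, i.e.\ $(1-q)q_{tt}+2q_t^2\le0$. Average with weights $d$ and apply Cauchy--Schwarz in the form $(\E_d q_t)^2\le(1-\widetilde F)\,\E_d[q_t^2/(1-q)]$. This shows the mixture odds $\widetilde F/(1-\widetilde F)$ are concave in $t$, which is equivalent to $L''\le c_0^2h(1-h)$ with $h=L'/c_0$.
\item \emph{Lower curvature bound.} The same argument applied to the complementary odds as functions of $t^{-1}$ gives $-L''\le c_0^2h(1-h)$. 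Together, $|L''|\le c_0^2/4$.
\item \emph{Interpolation nodes.} The constant $1/64$ then comes from interpolating $L$ at the nodes $z_{\rm mid}\pm\delta_{a,b}/(2\sqrt2)$, not from expanding at the midpoint. The remainder is at most $\frac{c_0^2}{8}\sup_{|z-z_{\rm mid}|\le\delta_{a,b}/2}|(z-z_{\rm mid})^2-\delta_{a,b}^2/8|=c_0^2\delta_{a,b}^2/64$. The interpolant's slope is a secant slope of $L$, so it lies in $(0,c_0]$, as your Step 4 requires.
\end{itemize}

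Your fallback does not replace this. Applying \Cref{lem:linear-log-secant} to $F$ (using $|F''/F'|\le c_0$) only shows that $F$ is nearly affine in $\tau$ on the probability scale. Its error is first order in $\delta_{a,b}$ relative to $F(\tau_b)-F(\tau_a)$. That is a log-linear-type statement and gives no affine control of $\mathsf{logit}F$, which is exactly what composing with $\sigma$ in Step 4 needs.
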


\begin{proof}[Proof of \Cref{thm:benchmark-bounded-overlap-scaling}]
Fix $0<c_0\leq c_{\max}$.
Let $z(C)=\mathsf{logit}(\pi_{T_0+C}(\mathcal M(x)|x))-\mathsf{logit}(p)$.
By \Cref{lem:scaling-scalar-flow}, the target mass of $\mathcal M(x)$ is
\begin{equation*}
\pi_{T_0+C}(\mathcal M(x)|x')
=\sigma\left(\mathsf{logit}(\pi_{T_0}(\mathcal M(x)|x'))
+c(x',x)z(C)\right).
\end{equation*}
Target answer-feature orthogonality fixes each conditional reward at its
base value, independently of $t$ and $\beta$.
By \Cref{eq:pg-transfer-identity} and training-strategy symmetry, the target's
strategy proportions within $\mathcal M(x)$ and its complement equal their base
values at $T_0$ and remain fixed thereafter. Thus $r_\pm(x')$ and $d(x')$
do not vary with $\beta$. Under the theorem's assumptions, $D>0$, and
\begin{equation*}
R(T_0+C)=R_0+D F(z(C)),
\qquad
F(z):=\frac{\E_{x'\sim\mathcal B}\left[
d(x')\sigma\left(\mathsf{logit}(\pi_{T_0}(\mathcal M(x)|x'))
+c(x',x)z\right)\right]}{D}.
\end{equation*}
In particular, $A-R_0=D>0$.
Put $z_{\rm mid}:=[z(C_a)+z(C_b)]/2$ and define an auxiliary mixture by
replacing each logistic slope $c(x',x)$ with $\min\{c(x',x),c_0\}$:
\begin{equation*}
\widetilde F(z):=\frac{\E_{x'\sim\mathcal B}\left[
d(x')\sigma\left(
\mathsf{logit}(\pi_{T_0}(\mathcal M(x)|x'))
+(c(x',x)-c_0)_+z_{\rm mid}+\min\{c(x',x),c_0\}z
\right)\right]}{D}.
\end{equation*}
The offset $(c(x',x)-c_0)_+z_{\rm mid}$ makes the original and auxiliary
logits agree at $z=z_{\rm mid}$. Since
$|z-z_{\rm mid}|\leq\delta_{a,b}/2$ for $z\in[z(C_a),z(C_b)]$ and
$|\sigma(s)-\sigma(t)|\leq\tanh(|s-t|/4)$,
\begin{equation}
\sup_{z\in[z(C_a),z(C_b)]}|F(z)-\widetilde F(z)|
\leq
\frac{\E_{x'\sim\mathcal B}\left[
d(x')\tanh\left(\frac{\delta_{a,b}(c(x',x)-c_0)_+}{8}\right)
\right]}{D}.
\label{eq:benchmark-slope-replacement-error}
\end{equation}

We next bound the curvature of the auxiliary mixture. Write
$L(z):=\mathsf{logit}(\widetilde F(z))$. Any positive mixture of logistic
functions with slopes in $(0,c_0]$ satisfies
\begin{equation}
0<L'(z)\leq c_0,
\qquad |L''(z)|\leq\frac{c_0^2}{4}.
\label{eq:benchmark-mixture-curvature}
\end{equation}
The derivatives of the integrand up to order two are uniformly bounded on
compact $z$-intervals because $0<d(x')\leq1$ and
$0<c(x',x)\leq c_{\max}$. Dominated convergence therefore permits
differentiation under the benchmark expectation.
For completeness, let $t=e^{c_0z}$. Each component $q_{x'}(t)$ of
$\widetilde F$ has odds proportional to $t^{s_{x'}}$ for some
$s_{x'}\in(0,1]$, so $q_{x'}/(1-q_{x'})$ is concave in $t$. Hence
$(1-q_{x'})\partial_{tt}q_{x'}+2(\partial_tq_{x'})^2\leq0$.
Averaging with weights $d$ and applying the Cauchy--Schwarz inequality gives
\begin{equation*}
\left(\frac{\E_{x'\sim\mathcal B}[d(x')\partial_tq_{x'}]}{D}\right)^2
\leq (1-\widetilde F)
\frac{\E_{x'\sim\mathcal B}
[d(x')(\partial_tq_{x'})^2/(1-q_{x'})]}{D}.
\end{equation*}
Consequently,
\begin{equation*}
(1-\widetilde F)\partial_{tt}\widetilde F
+2(\partial_t\widetilde F)^2\leq0,
\end{equation*}
so $\widetilde F/(1-\widetilde F)$ is concave in $t$.
The same argument applied to the complementary components in
$t^{-1}$ shows that $(1-\widetilde F)/\widetilde F$ is concave in
$t^{-1}$. Also,
$\partial_z\widetilde F=\E_{x'\sim\mathcal B}
[d(x')s_{x'}c_0q_{x'}(1-q_{x'})]/D$, and
$\E_{x'\sim\mathcal B}[d(x')q_{x'}(1-q_{x'})]/D
\leq\widetilde F(1-\widetilde F)$;
thus $0<L'\leq c_0$.
Writing $h=L'/c_0\in(0,1]$, concavity of the odds in $t$ yields
$L''/c_0^2\leq h(1-h)$, while concavity of the complementary odds in
$t^{-1}$ yields $-L''/c_0^2\leq h(1-h)$. This proves
\Cref{eq:benchmark-mixture-curvature}.

Let $\ell$ interpolate $L$ at
$z_{\rm mid}-\delta_{a,b}/(2\sqrt2)$ and
$z_{\rm mid}+\delta_{a,b}/(2\sqrt2)$.
The interpolation remainder and \Cref{eq:benchmark-mixture-curvature}
give
\begin{equation*}
\sup_{z\in[z(C_a),z(C_b)]}|L(z)-\ell(z)|
\leq\frac{c_0^2\delta_{a,b}^2}{64},
\qquad 0<\ell'\leq c_0.
\end{equation*}
By the definition of $\Gamma_{a,b}^{(p)}$, for every $\eta>0$ there
exist $\lambda>0$ and $\gamma\in\mathbb R$ such that
\begin{equation*}
\sup_{u\in[a,b]}
\left|\mathsf{logit}(u)-\mathsf{logit}(p)
-\gamma-\lambda\log[H(u)-H(p)]\right|
\leq \Gamma_{a,b}^{(p)}+\eta.
\end{equation*}
The relation between compute and strategy probability in \Cref{lem:scaling-clock} gives
$\log(C_u/C_a)=\log\{[H(u)-H(p)]/[H(a)-H(p)]\}+o(1)$
uniformly on $[a,b]$. Consequently,
$\ell\bigl(\gamma+\lambda\log[H(a)-H(p)]+\lambda\log(C/C_a)\bigr)$
is affine in $\log C$ with positive slope and approximates $L(z(C))$ within
$c_0^2\delta_{a,b}^2/64+c_0\Gamma_{a,b}^{(p)}+c_0\eta+o(1)$.
Applying $\sigma$ gives a log-sigmoid fit to $\widetilde F(z(C))$
with error at most
$\tanh(c_0^2\delta_{a,b}^2/256+c_0\Gamma_{a,b}^{(p)}/4
+c_0\eta/4)+o(1)$.

The normalized benchmark reward in \Cref{eq:benchmark-bounded-overlap-scaling} is
exactly $F(z(C))$. Combine this with \Cref{eq:benchmark-slope-replacement-error} and let
$\eta\downarrow0$; the remaining uniform $o(1)$ is absorbed into
$\varepsilon_{\rm fit}$.
As the fitting interval narrows, $\delta_{a,b}=O(b-a)$. In the definition of
$\Gamma_{a,b}^{(p)}$, choose $\lambda$ so that
$\mathsf{logit}(u)-\lambda\log[H(u)-H(p)]$ has zero derivative at $a$;
Taylor's theorem then gives $\Gamma_{a,b}^{(p)}=O((b-a)^2)$. Taking
$c_0=c_{\max}$ and $\varepsilon_{\rm fit}=(b-a)^2$ makes the normalized
error $O((b-a)^2)$. Since $A-R_0\leq1$, so is the raw error.
\end{proof}

\begin{proof}[Proof of \Cref{thm:benchmark-sigmoid-scaling}]
Take $c_0=c_{\max}$ in \Cref{thm:benchmark-bounded-overlap-scaling}; the second
term vanishes. This gives the theorem directly when $R_0>0$. If $R_0=0$,
apply that bound with $\varepsilon_{\rm fit}/2$ and choose a positive fitted
lower level $\eta<A$ instead. Since $R(T_0+C)\leq A$ over the fitting interval,
\begin{equation*}
\left|\frac{R(T_0+C)-\eta}{A-\eta}-\frac{R(T_0+C)}{A}\right|
\leq\frac{\eta}{A-\eta},
\end{equation*}
so a sufficiently small $\eta$ absorbs the difference into the remaining
$\varepsilon_{\rm fit}/2$.
To obtain this rate as the fitting interval narrows, take $\eta=O((b-a)^2)$.
\end{proof}

\subsection{Benchmark log-linear approximation}
\label{app:benchmark-linear-scaling}

The single-target bound also applies to the benchmark average. Each target
prompt's feature overlap is weighted by its reward gain over the fitting interval:
\begin{equation}
\bar c:=
\frac{\E_{x'\sim\mathcal B}\left[
c(x',x)\bigl(R_{T_0+C_b}(x')-R_{T_0+C_a}(x')\bigr)\right]}
{R(T_0+C_b)-R(T_0+C_a)}.
\label{eq:benchmark-average-overlap}
\end{equation}
The log-linear theorem assumes $\bar c<\infty$; unlike the log-sigmoid
theorem, it does not require a uniform bound on feature overlap.
These reward gains depend on $d(x')$,
$\pi_{T_0}(\mathcal M(x)|x')$, and $c(x',x)$ through the exact transfer
identity above.

\begin{thm}[Benchmark log-linear law]
\label{thm:benchmark-linear-scaling-formal}
Assume \Cref{ass:source-realizability,ass:answer-orthogonality,ass:scaling-balance}
with one training prompt and $0<p<a<b<1$. Suppose
$c(x',x),d(x')>0$ for $x'\sim\mathcal B$ and $\bar c<\infty$. For any
$\varepsilon_{\rm fit}>0$ and sufficiently small $\beta$, there exist
$R_0,B_{\rm lin},C_{\rm ref}>0$ such that
\begin{equation}
\sup_{C\in[C_a,C_b]}
\frac{\left|R(T_0+C)-R_0-B_{\rm lin}\log(C/C_{\rm ref})\right|}
{R(T_0+C_b)-R(T_0+C_a)}
\leq\frac{(1+\bar c)\delta_{a,b}+\Delta_{a,b}^{(p)}}{16}
+\varepsilon_{\rm fit}.
\label{eq:benchmark-linear-scaling-formal}
\end{equation}
If $c(x',x)\leq c_{\max}<\infty$ on the benchmark, the raw-reward error
is $O((b-a)^2)$ for fixed $p<a<1$ as $b-a\to0$, provided $\beta$ is
sufficiently small.
\end{thm}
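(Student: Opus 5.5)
The plan is to reduce the benchmark statement to the single-target bound of \Cref{lem:target-linear-log}, applied prompt by prompt, and then average.

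Set $R_0:=R(T_0)=\E_{x'\sim\mathcal B}[R_{T_0}(x')]$. For each target $x'$ put $\Delta R(x'):=R_{T_0+C_b}(x')-R_{T_0+C_a}(x')$, which is positive because $c(x',x),d(x')>0$. The key observation is that the log-compute coordinate $F(u)=\log(C_u/C_a)/\log(C_b/C_a)$ is common to all targets, since it depends only on the training prompt. Indeed, the proof of \Cref{lem:target-linear-log} gives, for each $x'$, a constant $\alpha^\star(x')$ such that
\begin{equation*}
\sup_{u\in[a,b]}\left|R_{T_0+C_u}(x')-R_{T_0}(x')-\alpha^\star(x')-\Delta R(x')F(u)\right|
\le\Delta R(x')\left[\frac{(1+c(x',x))\delta_{a,b}+\Delta_{a,b}^{(p)}}{16}+o(1)\right],
\end{equation*}
where the $o(1)$ is uniform over $c(x',x)>0$ and $\pi_{x'}(0)$, as stated in the appendix preamble. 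This expresses each target reward as a common linear function of $\log C$ with prompt-specific slope $\Delta R(x')/\log(C_b/C_a)$ and intercept.

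The next step is to take $\E_{x'\sim\mathcal B}$ of the inside and move the supremum outside via $|\E[\cdot]|\le\E|\cdot|$. Linearity gives
\begin{equation*}
R(T_0+C_u)-R_0-\E[\alpha^\star]-[R(T_0+C_b)-R(T_0+C_a)]F(u).
\end{equation*}
The error is bounded by $\E[\Delta R(x')(1+c(x',x))]\delta_{a,b}/16+\E[\Delta R]\,\Delta_{a,b}^{(p)}/16+o(1)\E[\Delta R]$. By \Cref{eq:benchmark-average-overlap}, $\E[c\,\Delta R]=\bar c\,\E[\Delta R]$, and this is finite by assumption; note also that $\E[\Delta R]=R(T_0+C_b)-R(T_0+C_a)$. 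Dividing by this quantity therefore yields exactly the right-hand side of \Cref{eq:benchmark-linear-scaling-formal}, with the $o(1)$ absorbed into $\varepsilon_{\rm fit}$ for small $\beta$. The fitted parameters are then
\begin{equation*}
B_{\rm lin}=\frac{R(T_0+C_b)-R(T_0+C_a)}{\log(C_b/C_a)},
\end{equation*}
with $C_{\rm ref}$ chosen so that $-B_{\rm lin}\log C_{\rm ref}$ absorbs $\E[\alpha^\star]-B_{\rm lin}\log C_a$. Since $C_u$ sweeps $[C_a,C_b]$ monotonically, the bound transfers from $u$ to $C$. Positivity of $R_0$ can be handled as in the proof of \Cref{thm:benchmark-sigmoid-scaling} if needed.

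The main obstacle is justifying that the per-target constants and errors are measurable and integrable, and that the $o(1)$ is uniform. Uniformity holds because the only $\beta$-dependent error is the common compute-clock replacement, whose size is independent of $x'$. Integrability follows from $\Delta R\le1$ and $\E[c\Delta R]<\infty$.

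For the rate claim, assume $c\le c_{\max}$, so that $\bar c\le c_{\max}$. Then $\delta_{a,b}=O(b-a)$. For $\Delta_{a,b}^{(p)}$, use $H(b)-H(a)=O(b-a)$ together with $H(a)-H(p)>0$ fixed to get $\Delta_{a,b}^{(p)}=O(b-a)$. Also $R(T_0+C_b)-R(T_0+C_a)=O(b-a)$: per prompt, $\Delta R(x')\le c_{\max}\,\delta_{a,b}/4$ by the logistic Lipschitz bound applied through \Cref{eq:scaling-target-coordinate}. Multiplying these gives a raw error of $O((b-a)^2)$, with $\varepsilon_{\rm fit}=(b-a)$ chosen so that it also contributes $O((b-a)^2)$.
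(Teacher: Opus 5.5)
Your proposal is correct and follows essentially the same route as the paper: you apply \Cref{lem:target-linear-log} prompt by prompt with the common log-compute coordinate $F(u)$, average slopes and intercepts so that $B_{\rm lin}=[R(T_0+C_b)-R(T_0+C_a)]/\log(C_b/C_a)$, bound the error by the triangle inequality, and normalize via $\bar c$; the rate then follows from $\delta_{a,b},\Delta_{a,b}^{(p)}=O(b-a)$, the logistic Lipschitz bound on the reward gain, and $\varepsilon_{\rm fit}=b-a$. The differences are only cosmetic. The paper takes each per-prompt intercept to be the midpoint of the residual's range, which also settles the boundedness and measurability point you flag. It also allows any $R_0>0$, since $R_0$ and $C_{\rm ref}$ are redundant, so your appeal to the log-sigmoid positivity trick is unnecessary; in any case $R(T_0)>0$.
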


\begin{proof}[Proof of \Cref{thm:benchmark-linear-scaling-formal}]
Apply \Cref{lem:target-linear-log} for $x'\sim\mathcal B$.
The error in approximating the relation between compute and strategy probability is uniform in $c(x',x)$ and the initial
target strategy mass, so a single sufficiently small $\beta$ makes the promptwise bounds
hold for almost every prompt in the benchmark distribution. The fitted slope is
\begin{equation*}
B_{{\rm lin},x'}
=\frac{R_{T_0+C_b}(x')-R_{T_0+C_a}(x')}{\log(C_b/C_a)}>0.
\end{equation*}
For this slope, choose the fitted constant term as the midpoint of the
range of $R_{T_0+C}(x')-B_{{\rm lin},x'}\log C$ on $[C_a,C_b]$.
This choice minimizes the uniform promptwise error.
Because rewards lie in $[0,1]$ and $C_a,C_b$ are fixed, these slopes and
constant terms are bounded and may be averaged. Averaging yields a
log-linear fit with positive slope
$B_{\rm lin}=\E_{x'\sim\mathcal B}[B_{{\rm lin},x'}]$.
For any $R_0>0$, choose $C_{\rm ref}>0$ so that the averaged constant term
equals $R_0-B_{\rm lin}\log C_{\rm ref}$.
The triangle inequality bounds its error by the expectation of the
promptwise errors. Each is proportional to the positive gain
$R_{T_0+C_b}(x')-R_{T_0+C_a}(x')$. Dividing by the benchmark gain and using
\Cref{eq:benchmark-average-overlap} proves the displayed bound.
As the fitting interval narrows, $\bar c\leq c_{\max}$, while
$\delta_{a,b}$ and $\Delta_{a,b}^{(p)}$ are $O(b-a)$. The exact transfer
identity gives
\begin{equation*}
\frac{d}{du}\pi_{x'}(C_u)
=\frac{c(x',x)\pi_{x'}(C_u)(1-\pi_{x'}(C_u))}{u(1-u)},
\end{equation*}
so the benchmark reward gain is $O(b-a)$ because $c(x',x)\leq c_{\max}$.
Taking $\varepsilon_{\rm fit}=b-a$ proves the raw-reward claim.
\end{proof}

\subsection{Visualizing the approximation bounds}
\label{app:scaling-bound-magnitudes}

We plot the approximation error bounds in
\Cref{thm:benchmark-bounded-overlap-scaling,thm:benchmark-linear-scaling-formal}.
For a single target with unit reward contrast,
\Cref{fig:scaling-bound-heatmaps} shows the bounds for different fitting intervals.

\begin{figure}[t]
\centering
\includegraphics[width=0.86\linewidth]
{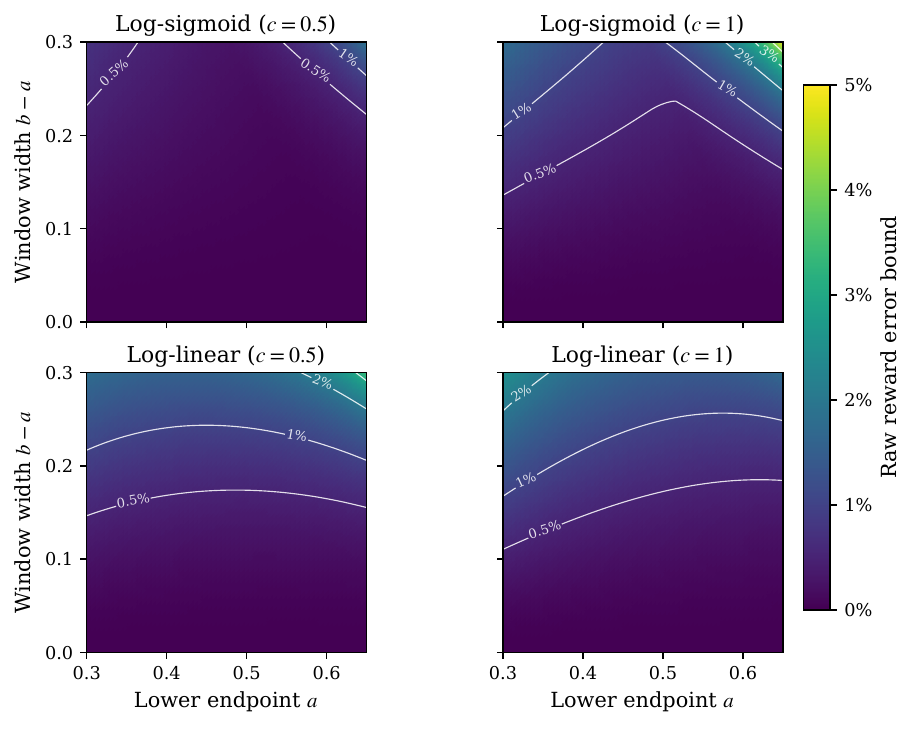}
\caption{\textbf{The theoretical approximation bounds are a few percentage
points in a representative regime.}
We use $p=0.1$, initial target-strategy mass $q_0=0.5$, unit reward contrast,
and feature overlap $c\in\{0.5,1\}$.  We vary $a\in[0.30,0.65]$ and
$b-a\in[0,0.30]$, with the zero-width boundary shown by continuity.  All
panels use the same raw-reward color scale, and contours report
percentage-point error.  The asymptotic term $\varepsilon_{\rm fit}$ is
omitted.}
\label{fig:scaling-bound-heatmaps}
\end{figure}

Neither functional form is uniformly favored. For $c=0.5$, the displayed log-sigmoid bound is smaller throughout the
plotted range. For $c=1$, it remains smaller for most of the plotted intervals, but the
log-linear bound becomes smaller when both $a$ and $b-a$ are large;
for example, at $(a,b-a)=(0.65,0.30)$ the two bounds are approximately
$4.90\%$ and $1.75\%$, respectively.

%% file: arxiv/appendix/experiment_details.tex
\section{Experiment Details and Further Results}
\label{app:experiment-details}

\subsection{Training setup}

\paragraph{RL.}
We run GRPO in verl with a learning rate of $10^{-6}$ for 200 steps.
Each step samples 512 prompts and 16 responses per prompt, followed by one PPO epoch
with a minibatch size of 128 prompts. Training rollouts use temperature 1,
top-$p$ 1, a 2,048-token prompt limit, and a 16,384-token response limit.
The lower and upper clipping parameters are 0.2 and 0.28. Mathematics rewards indicate
final-answer correctness, while coding rewards measure the fraction of
ten tests passed.

\paragraph{SFT.}
We use batch size 512, learning rate of $5\times10^{-5}$,
a two-epoch cosine schedule with 10\% warmup, and a 16,384-token sequence
limit. We evaluate the model after 1.5 epochs of training (step 117).

\subsection{Transfer and forgetting}
\label{app:evaluation-protocols}

\paragraph{Benchmarks.}
We evaluate transfer and forgetting across four domains:
\begin{enumerate}[nosep,leftmargin=*]
\item \textbf{Math}: MATH500~\citep{hendrycks2021math}, AIME 2024, AIME 2025, and OMEGA~\citep{sun2025omega};
\item \textbf{Coding}: LiveCodeBench~\citep{jain2024livecodebench}, CodeContests~\citep{li2022alphacode}, and DELTA~\citep{sun2025rlgrok};
\item \textbf{Knowledge and Planning}: GPQA-Diamond~\citep{rein2023gpqa}, HEAD-QA~\citep{vilares2019headqa}, ACPBench~\citep{kokel2025acpbench}, ToolUse~\citep{tang2023toolalpaca}, and Science~\citep{feng2024sciknoweval};
\item \textbf{General Capabilities}: CoQA~\citep{reddy2019coqa}, IFEval~\citep{zhou2023ifeval}, HaluEval~\citep{li2023halueval}, MC-TACO~\citep{zhou2019mctaco}, WildJailbreak~\citep{jiang2024wildteaming}, and WildGuardTest~\citep{han2024wildguard}.
\end{enumerate}

\paragraph{Evaluation protocol.}
We use native chat templates. Sampled reasoning tasks use temperature 0.6,
top-$p$ 0.95, top-$k$ 20, and a maximum generation length of 32,768 tokens.
We sample three responses per problem for MATH500 and GPQA-Diamond, 10 for
LiveCodeBench, 32 for AIME, and one for DELTA. CodeContests uses Eurus-2-RL's
377-problem validation subset. For greedy decoding, the maximum response
length is 32,768 tokens for ACPBench, CoQA, and HaluEval, and 4,096 tokens for
ToolUse, Science, and the safety benchmarks.
HEAD-QA and MC-TACO use likelihood scoring; IFEval uses one response.
We retain native task metrics, including strict instruction-level IFEval
accuracy, CoQA F1, DELTA test pass rates, and WildGuard-based refusal accuracy.

\paragraph{Complete scores.}
\Cref{tab:transfer-forgetting} reports the absolute scores underlying
\Cref{fig:transfer-forgetting}.

\input{arxiv/tables/transfer_forgetting}

\subsection{Pass@$N$}

\paragraph{Evaluation protocol.}
For each problem in MATH500, AIME 2024, AIME 2025, GPQA-Diamond, CodeContests,
and BoxNet, we sample 1,024 independent responses using temperature 0.7,
top-$p$ 0.95, and a maximum generation length of 16,384 tokens.

\paragraph{Complete results.}
\Cref{tab:passn-full} reports the coverage results at each sampling budget
underlying \Cref{fig:passn-qwen-gemma}.

\input{arxiv/tables/passn_full}

\subsection{Scaling laws}
\label{app:scaling-fit-procedure}

\paragraph{Fitting procedure.}
For each benchmark, let $(C_j,R_j)_{j=1}^m$ be its positive-compute
evaluations, where $C$ is cumulative compute in units of $10^2$ GPU-hours
and scores lie in $[0,1]$.  We fix $R_0$ to the measured base-model score.
The zero-compute evaluation therefore determines $R_0$ but is excluded from
both fitting losses because $\log C$ is undefined at $C=0$.  All
positive-compute evaluations receive equal weight.

For the log-sigmoid law in \Cref{eq:empirical-log-sigmoid}, we solve
\begin{equation}
\bigl(\widehat A,\widehat B,\widehat C_{\rm mid}\bigr)
\in
\argmin_{\substack{0\leq A\leq1,\;B>0,\\ C_{\rm mid}>0}}
\sum_{j=1}^m
\left[
R_j-R_0-
\frac{A-R_0}{1+(C_{\rm mid}/C_j)^B}
\right]^2.
\label{eq:empirical-log-sigmoid-fit}
\end{equation}
To solve this problem, set $q_j=[1+(C_{\rm mid}/C_j)^B]^{-1}$.  For fixed
$(B,C_{\rm mid})$, the unconstrained optimum is
$R_0+\sum_j q_j(R_j-R_0)/\sum_j q_j^2$; we clip it to $[0,1]$ to obtain
$A$.  We then minimize the profiled loss over
$(\log C_{\rm mid},\log B)$ using the Nelder--Mead method with bounds and multiple initializations. We search
$\log B\in[-14,7]$ and use
$C_{\rm mid}\in[10^{-12},10^{28}]$ in normalized compute units.  This
finite numerical range is effectively unbounded for the reported data, and
none of the fits reaches a midpoint boundary.  Among all local solutions,
we retain the one with the smallest squared error; the numerical search does
not certify global optimality.

For the log-linear law in \Cref{eq:empirical-log-linear}, we first fit an
unrestricted intercept and slope by ordinary least squares of $R_j$ on
$\log C_j$.  Because the slopes in all reported fits are nonzero, this is
equivalent to solving
\begin{equation}
\bigl(\widehat B_{\rm lin},\widehat C_{\rm ref}\bigr)
\in
\argmin_{\substack{B_{\rm lin}\in\mathbb R,\\ C_{\rm ref}>0}}
\sum_{j=1}^m
\left[
R_j-R_0-B_{\rm lin}\log(C_j/C_{\rm ref})
\right]^2.
\label{eq:empirical-log-linear-fit}
\end{equation}
Here $\widehat B_{\rm lin}$ is the fitted slope, and
$\widehat C_{\rm ref}$ is the unique compute level at which the fitted line equals
$R_0$.

The step-based fits reported below use the same
procedure with optimizer step as compute.  Steps are divided by 200 for
sigmoid conditioning, and the fitted compute scales are converted back to
steps.

For either fitted curve $\widehat R(C)$, we report
\begin{equation}
\mathsf{R^2}
=1-
\frac{\sum_{j=1}^m\bigl[R_j-\widehat R(C_j)\bigr]^2}
{\sum_{j=1}^m(R_j-\overline R)^2},
\qquad
\overline R=\frac1m\sum_{j=1}^m R_j,
\label{eq:empirical-scaling-r2}
\end{equation}
using the same positive-compute evaluations as in the corresponding fit;
this quantity is defined whenever the denominator is nonzero.

\paragraph{GPU-hour fits.}
\Cref{fig:qwen-scaling-five} shows the fitted scaling curves, and
\Cref{tab:qwen-scaling-five} reports their coefficients.

\input{arxiv/figures/qwen3_14b_selected_five_benchmarks_figure}

\input{arxiv/tables/qwen_scaling_five}

\paragraph{Scaling with training steps.}
As a robustness check, we use optimizer step as the compute metric. We fit
both laws through step 200 and evaluate their extrapolations through step 350.
\Cref{fig:qwen-scaling-five-steps} shows the fitted curves using optimizer
steps, and \Cref{tab:qwen-scaling-five-steps} reports their coefficients.
\Cref{fig:qwen-scaling-extrapolation-steps} shows the corresponding fitting
and extrapolation regimes. Although GPU-hours per step increase as average
rollout lengths grow,
$\log(\textrm{GPU Hours})$ is approximately linear in
$\log(\textrm{Steps})$ over these runs; the two compute metrics therefore
give similar qualitative conclusions.

\input{arxiv/figures/qwen3_14b_selected_five_benchmarks_log_steps_figure}

\input{arxiv/figures/qwen3_14b_pre200_step_extrapolation_figure}

\input{arxiv/tables/qwen_scaling_five_steps}

For Math RL on LiveCodeBench, the step-based fit reaches $A=1.000$ and gives
the unconstrained estimate $C_{\rm mid}=2.0\times10^4$, far beyond the
observed range; its midpoint should therefore be treated as weakly identified.

%% file: arxiv/tables/transfer_forgetting.tex
% Generated by figures/generate_transfer_forgetting.py
% Registry SHA-256: 4fb42dc6b84315b49d9ca68c5ff4f2eabc578d2ef3bc4f7b94df0d347633a281
\begin{table}[H]
\centering
\caption{\textbf{Mathematics gains accompany transfer to some tasks and forgetting on others.}
Scores (\%; higher is better) for the base models, SFT models evaluated with
thinking enabled, and RL models at step 200, rounded to one decimal.
Colors indicate changes from the corresponding base model before rounding.}
\label{tab:transfer-forgetting}
\begingroup
\definecolor{transfergain}{HTML}{087F8C}
\definecolor{transferloss}{HTML}{B44C36}
\definecolor{transferband}{HTML}{EEF2F5}
\definecolor{transferink}{HTML}{31445A}
\fontsize{9}{10.6}\selectfont
\setlength{\tabcolsep}{0pt}
\renewcommand{\arraystretch}{1.13}
\begin{tabular}{@{}p{0.211\linewidth}*{2}{>{\centering\arraybackslash}p{0.080\linewidth}}>{\centering\arraybackslash}p{0.120\linewidth}>{\centering\arraybackslash}p{0.165\linewidth}@{\hspace{0.026\linewidth}}>{\centering\arraybackslash}p{0.100\linewidth}>{\centering\arraybackslash}p{0.095\linewidth}>{\centering\arraybackslash}p{0.123\linewidth}@{}}
\toprule
& \multicolumn{4}{c}{\textbf{Qwen3-14B-Base}} & \multicolumn{3}{c}{\textbf{Gemma-3-12B-IT}} \\
\cmidrule(lr){2-5}\cmidrule(l){6-8}
\textbf{Benchmark} & \mbox{Base} & \mbox{SFT} & \mbox{Math RL} & \mbox{Math-Code RL} & \mbox{Base} & \mbox{SFT} & \mbox{Math RL} \\
\midrule
\rowcolor{transferband}\multicolumn{8}{@{}l}{\strut\textcolor{transferink}{\textbf{Math}}} \\
\addlinespace[2pt]
MATH500 & \makebox[2em][r]{75.5} & \makebox[2em][r]{\textcolor{transfergain}{87.7}} & \makebox[2em][r]{\textcolor{transfergain}{87.4}} & \makebox[2em][r]{\textcolor{transfergain}{88.3}} & \makebox[2em][r]{80.9} & \makebox[2em][r]{\textcolor{transferloss}{80.3}} & \makebox[2em][r]{\textcolor{transfergain}{82.9}} \\
AIME 2024 & \makebox[2em][r]{14.3} & \makebox[2em][r]{\textcolor{transfergain}{63.0}} & \makebox[2em][r]{\textcolor{transfergain}{33.1}} & \makebox[2em][r]{\textcolor{transfergain}{46.2}} & \makebox[2em][r]{25.9} & \makebox[2em][r]{\textcolor{transfergain}{31.4}} & \makebox[2em][r]{\textcolor{transfergain}{28.4}} \\
AIME 2025 & \makebox[2em][r]{14.0} & \makebox[2em][r]{\textcolor{transfergain}{48.7}} & \makebox[2em][r]{\textcolor{transfergain}{25.2}} & \makebox[2em][r]{\textcolor{transfergain}{31.4}} & \makebox[2em][r]{17.4} & \makebox[2em][r]{\textcolor{transfergain}{23.9}} & \makebox[2em][r]{\textcolor{transfergain}{21.8}} \\
OMEGA & \makebox[2em][r]{11.4} & \makebox[2em][r]{\textcolor{transfergain}{21.3}} & \makebox[2em][r]{\textcolor{transfergain}{17.7}} & \makebox[2em][r]{\textcolor{transfergain}{25.1}} & \makebox[2em][r]{14.0} & \makebox[2em][r]{\textcolor{transferloss}{12.0}} & \makebox[2em][r]{\textcolor{transferloss}{13.8}} \\
\addlinespace[3pt]
\rowcolor{transferband}\multicolumn{8}{@{}l}{\strut\textcolor{transferink}{\textbf{Coding}}} \\
\addlinespace[2pt]
LiveCodeBench & \makebox[2em][r]{29.4} & \makebox[2em][r]{\textcolor{transferloss}{18.8}} & \makebox[2em][r]{\textcolor{transfergain}{35.3}} & \makebox[2em][r]{\textcolor{transfergain}{36.7}} & \makebox[2em][r]{30.5} & \makebox[2em][r]{\textcolor{transferloss}{8.9}} & \makebox[2em][r]{\textcolor{transfergain}{30.8}} \\
CodeContests & \makebox[2em][r]{21.0} & \makebox[2em][r]{\textcolor{transferloss}{11.9}} & \makebox[2em][r]{\textcolor{transfergain}{23.6}} & \makebox[2em][r]{\textcolor{transfergain}{30.2}} & \makebox[2em][r]{24.4} & \makebox[2em][r]{\textcolor{transferloss}{6.9}} & \makebox[2em][r]{\textcolor{transferloss}{21.0}} \\
DELTA & \makebox[2em][r]{11.4} & \makebox[2em][r]{\textcolor{transferloss}{3.0}} & \makebox[2em][r]{\textcolor{transferloss}{11.2}} & \makebox[2em][r]{\textcolor{transferloss}{11.4}} & \makebox[2em][r]{9.6} & \makebox[2em][r]{\textcolor{transferloss}{0.5}} & \makebox[2em][r]{\textcolor{transfergain}{13.9}} \\
\addlinespace[3pt]
\rowcolor{transferband}\multicolumn{8}{@{}l}{\strut\textcolor{transferink}{\textbf{Knowledge and Planning}}} \\
\addlinespace[2pt]
GPQA-Diamond & \makebox[2em][r]{42.4} & \makebox[2em][r]{\textcolor{transfergain}{56.1}} & \makebox[2em][r]{\textcolor{transfergain}{47.3}} & \makebox[2em][r]{\textcolor{transfergain}{49.7}} & \makebox[2em][r]{38.1} & \makebox[2em][r]{\textcolor{transferloss}{32.8}} & \makebox[2em][r]{\textcolor{transferloss}{35.9}} \\
ACPBench & \makebox[2em][r]{54.3} & \makebox[2em][r]{\textcolor{transfergain}{61.8}} & \makebox[2em][r]{\textcolor{transfergain}{66.1}} & \makebox[2em][r]{\textcolor{transfergain}{72.1}} & \makebox[2em][r]{69.3} & \makebox[2em][r]{\textcolor{transferloss}{49.6}} & \makebox[2em][r]{\textcolor{transferloss}{65.0}} \\
HEAD-QA & \makebox[2em][r]{36.1} & \makebox[2em][r]{\textcolor{transfergain}{36.4}} & \makebox[2em][r]{\textcolor{transfergain}{37.0}} & \makebox[2em][r]{\textcolor{transfergain}{36.7}} & \makebox[2em][r]{39.2} & \makebox[2em][r]{\textcolor{transferloss}{34.3}} & \makebox[2em][r]{\textcolor{transferloss}{38.7}} \\
ToolUse & \makebox[2em][r]{30.9} & \makebox[2em][r]{\textcolor{transferloss}{4.1}} & \makebox[2em][r]{\textcolor{transferloss}{22.7}} & \makebox[2em][r]{\textcolor{transferloss}{20.6}} & \makebox[2em][r]{43.3} & \makebox[2em][r]{\textcolor{transferloss}{9.3}} & \makebox[2em][r]{\textcolor{transfergain}{52.6}} \\
Science & \makebox[2em][r]{42.2} & \makebox[2em][r]{\textcolor{transferloss}{0.0}} & \makebox[2em][r]{\textcolor{transfergain}{46.4}} & \makebox[2em][r]{\textcolor{transfergain}{47.7}} & \makebox[2em][r]{36.7} & \makebox[2em][r]{\textcolor{transferloss}{0.0}} & \makebox[2em][r]{\textcolor{transferloss}{32.5}} \\
\addlinespace[3pt]
\rowcolor{transferband}\multicolumn{8}{@{}l}{\strut\textcolor{transferink}{\textbf{General Capabilities}}} \\
\addlinespace[2pt]
CoQA & \makebox[2em][r]{11.8} & \makebox[2em][r]{\textcolor{transferloss}{1.6}} & \makebox[2em][r]{\textcolor{transfergain}{53.6}} & \makebox[2em][r]{\textcolor{transfergain}{61.1}} & \makebox[2em][r]{52.3} & \makebox[2em][r]{\textcolor{transferloss}{1.6}} & \makebox[2em][r]{\textcolor{transfergain}{78.6}} \\
IFEval & \makebox[2em][r]{66.2} & \makebox[2em][r]{\textcolor{transferloss}{22.0}} & \makebox[2em][r]{\textcolor{transferloss}{59.6}} & \makebox[2em][r]{\textcolor{transferloss}{56.7}} & \makebox[2em][r]{69.3} & \makebox[2em][r]{\textcolor{transferloss}{23.9}} & \makebox[2em][r]{\textcolor{transferloss}{68.1}} \\
HaluEval & \makebox[2em][r]{35.8} & \makebox[2em][r]{\textcolor{transferloss}{0.0}} & \makebox[2em][r]{\textcolor{transferloss}{29.1}} & \makebox[2em][r]{\textcolor{transferloss}{30.6}} & \makebox[2em][r]{61.2} & \makebox[2em][r]{\textcolor{transferloss}{0.0}} & \makebox[2em][r]{\textcolor{transferloss}{60.2}} \\
MC-TACO & \makebox[2em][r]{50.0} & \makebox[2em][r]{\textcolor{transfergain}{66.7}} & \makebox[2em][r]{\textcolor{transfergain}{63.5}} & \makebox[2em][r]{\textcolor{transfergain}{71.7}} & \makebox[2em][r]{76.5} & \makebox[2em][r]{\textcolor{transferloss}{67.4}} & \makebox[2em][r]{\textcolor{transferloss}{68.1}} \\
WildJailbreak & \makebox[2em][r]{29.4} & \makebox[2em][r]{\textcolor{transferloss}{3.9}} & \makebox[2em][r]{\textcolor{transferloss}{21.0}} & \makebox[2em][r]{\textcolor{transferloss}{24.7}} & \makebox[2em][r]{27.3} & \makebox[2em][r]{\textcolor{transferloss}{6.2}} & \makebox[2em][r]{\textcolor{transferloss}{25.6}} \\
WildGuardTest & \makebox[2em][r]{74.0} & \makebox[2em][r]{\textcolor{transferloss}{24.5}} & \makebox[2em][r]{\textcolor{transferloss}{66.3}} & \makebox[2em][r]{\textcolor{transferloss}{70.6}} & \makebox[2em][r]{77.1} & \makebox[2em][r]{\textcolor{transferloss}{48.9}} & \makebox[2em][r]{\textcolor{transferloss}{75.9}} \\
\bottomrule
\end{tabular}
\par
\begin{minipage}{\linewidth}
\fontsize{8}{9.5}\selectfont
\textcolor{transfergain}{Teal}: gain; \textcolor{transferloss}{rust}: loss. OMEGA averages three splits; DELTA reports
BouncingSim test pass rate. IFEval uses strict instruction-level accuracy.
\end{minipage}
\endgroup
\end{table}

%% file: arxiv/tables/passn_full.tex
% Generated from arxiv/figures/pass1024_scores_snapshot_2026-09-15.json
\begin{table}[H]
\centering
\caption{Detailed breakdown of Pass@$N$ results for the RL and base models in
\Cref{fig:passn-qwen-gemma}.}
\label{tab:passn-full}
\begingroup
\definecolor{passnheader}{HTML}{31445A}
\definecolor{passnfamily}{HTML}{E7EDF2}
\definecolor{passnbase}{HTML}{F2F4F6}
\definecolor{passngain}{HTML}{DDF1EA}
\definecolor{passngainink}{HTML}{176B57}
\definecolor{passnloss}{HTML}{F7E3DE}
\definecolor{passnlossink}{HTML}{9A4033}
\definecolor{passnneutral}{HTML}{ECEFF1}
\newcommand{\passngaincell}[1]{\cellcolor{passngain}\textcolor{passngainink}{#1}}
\newcommand{\passnlosscell}[1]{\cellcolor{passnloss}\textcolor{passnlossink}{#1}}
\newcommand{\passnneutralcell}[1]{\cellcolor{passnneutral}#1}
\small
\renewcommand{\arraystretch}{1.10}
\begin{tabular*}{\linewidth}{@{\extracolsep{\fill}}ll*{6}{r}@{}}
\toprule
\textbf{Benchmark} & \textbf{Model}
& \multicolumn{6}{c}{\textbf{Pass@}$N$\textbf{ (\%)}} \\
\cmidrule(l){3-8}
& & \textbf{1} & \textbf{4} & \textbf{16} & \textbf{64} & \textbf{256} & \textbf{1024} \\
\midrule
\rowcolor{passnfamily}
\multicolumn{8}{@{}l}{\strut\textcolor{passnheader}{\textbf{Qwen3-14B-Base}}} \\
\addlinespace[1pt]
\rowcolor{passnbase}\textbf{MATH500} & \textcolor{passnheader}{Base} & 73.00 & 86.24 & 91.36 & 94.17 & 95.92 & 97.80 \\
 & Math RL & \passngaincell{87.13} & \passngaincell{91.23} & \passngaincell{93.42} & \passngaincell{94.84} & \passnlosscell{95.71} & \passnlosscell{96.60} \\
 & Math-Code RL & \passngaincell{88.13} & \passngaincell{91.95} & \passngaincell{93.98} & \passngaincell{94.85} & \passnlosscell{95.35} & \passnlosscell{95.80} \\
\addlinespace[2pt]
\rowcolor{passnbase}\textbf{AIME 2024} & \textcolor{passnheader}{Base} & 15.66 & 32.31 & 51.42 & 66.11 & 75.90 & 90.00 \\
 & Math RL & \passngaincell{33.45} & \passngaincell{54.16} & \passngaincell{67.22} & \passngaincell{74.20} & \passngaincell{78.89} & \passnlosscell{83.33} \\
 & Math-Code RL & \passngaincell{44.42} & \passngaincell{62.24} & \passngaincell{71.08} & \passngaincell{76.70} & \passngaincell{81.15} & \passnlosscell{86.67} \\
\addlinespace[2pt]
\rowcolor{passnbase}\textbf{AIME 2025} & \textcolor{passnheader}{Base} & 12.01 & 22.79 & 33.23 & 45.97 & 62.32 & 76.67 \\
 & Math RL & \passngaincell{24.03} & \passngaincell{36.59} & \passngaincell{51.24} & \passngaincell{64.58} & \passngaincell{74.47} & \passngaincell{83.33} \\
 & Math-Code RL & \passngaincell{31.19} & \passngaincell{45.61} & \passngaincell{58.38} & \passngaincell{67.91} & \passngaincell{79.07} & \passngaincell{90.00} \\
\addlinespace[2pt]
\rowcolor{passnbase}\textbf{CodeContests} & \textcolor{passnheader}{Base} & 19.74 & 32.32 & 42.66 & 51.33 & 57.92 & 63.13 \\
 & Math RL & \passngaincell{25.83} & \passngaincell{36.80} & \passngaincell{46.27} & \passngaincell{53.17} & \passngaincell{58.81} & \passngaincell{63.66} \\
 & Math-Code RL & \passngaincell{31.63} & \passngaincell{41.74} & \passngaincell{48.40} & \passngaincell{52.37} & \passnlosscell{56.08} & \passnlosscell{60.48} \\
\addlinespace[2pt]
\rowcolor{passnbase}\textbf{GPQA-Diamond} & \textcolor{passnheader}{Base} & 36.70 & 69.10 & 91.05 & 97.79 & 99.84 & 100.00 \\
 & Math RL & \passngaincell{45.95} & \passngaincell{70.72} & \passnlosscell{84.49} & \passnlosscell{92.38} & \passnlosscell{95.82} & \passnlosscell{97.47} \\
 & Math-Code RL & \passngaincell{45.41} & \passngaincell{70.67} & \passnlosscell{85.07} & \passnlosscell{92.73} & \passnlosscell{96.22} & \passnlosscell{98.48} \\
\addlinespace[2pt]
\rowcolor{passnbase}\textbf{BoxNet} & \textcolor{passnheader}{Base} & 3.83 & 9.89 & 17.90 & 25.20 & 30.08 & 34.00 \\
 & Math RL & \passngaincell{9.88} & \passngaincell{20.67} & \passngaincell{28.66} & \passngaincell{33.88} & \passngaincell{40.45} & \passngaincell{51.00} \\
 & Math-Code RL & \passngaincell{11.24} & \passngaincell{21.85} & \passngaincell{30.25} & \passngaincell{36.58} & \passngaincell{47.59} & \passngaincell{67.00} \\
\midrule
\rowcolor{passnfamily}
\multicolumn{8}{@{}l}{\strut\textcolor{passnheader}{\textbf{Gemma-3-12B-IT}}} \\
\addlinespace[1pt]
\rowcolor{passnbase}\textbf{MATH500} & \textcolor{passnheader}{Base} & 81.31 & 87.60 & 90.62 & 92.45 & 93.88 & 94.80 \\
 & Math RL & \passngaincell{82.73} & \passngaincell{88.31} & \passngaincell{91.24} & \passngaincell{93.24} & \passngaincell{94.21} & \passnlosscell{94.40} \\
\addlinespace[2pt]
\rowcolor{passnbase}\textbf{AIME 2024} & \textcolor{passnheader}{Base} & 24.37 & 39.25 & 50.69 & 62.13 & 69.67 & 80.00 \\
 & Math RL & \passngaincell{27.76} & \passngaincell{43.21} & \passngaincell{54.77} & \passngaincell{63.13} & \passngaincell{70.50} & \passnlosscell{76.67} \\
\addlinespace[2pt]
\rowcolor{passnbase}\textbf{AIME 2025} & \textcolor{passnheader}{Base} & 18.13 & 27.34 & 36.17 & 45.13 & 57.42 & 66.67 \\
 & Math RL & \passngaincell{20.54} & \passngaincell{29.82} & \passngaincell{39.40} & \passngaincell{47.81} & \passnlosscell{55.10} & \passnlosscell{60.00} \\
\addlinespace[2pt]
\rowcolor{passnbase}\textbf{CodeContests} & \textcolor{passnheader}{Base} & 23.27 & 34.12 & 42.32 & 48.20 & 53.02 & 55.70 \\
 & Math RL & \passnlosscell{21.24} & \passnlosscell{31.23} & \passnlosscell{39.82} & \passnlosscell{45.61} & \passnlosscell{49.82} & \passnlosscell{54.91} \\
\addlinespace[2pt]
\rowcolor{passnbase}\textbf{GPQA-Diamond} & \textcolor{passnheader}{Base} & 39.01 & 64.47 & 82.58 & 90.94 & 94.83 & 96.97 \\
 & Math RL & \passnlosscell{38.59} & \passngaincell{66.14} & \passngaincell{85.24} & \passngaincell{93.22} & \passngaincell{96.40} & \passngaincell{97.47} \\
\addlinespace[2pt]
\rowcolor{passnbase}\textbf{BoxNet} & \textcolor{passnheader}{Base} & 9.08 & 11.41 & 12.84 & 14.14 & 15.02 & 16.00 \\
 & Math RL & \passnlosscell{7.31} & \passngaincell{14.39} & \passngaincell{19.19} & \passngaincell{22.19} & \passngaincell{25.33} & \passngaincell{30.00} \\
\bottomrule
\end{tabular*}
\par
\begin{minipage}{\linewidth}
\footnotesize
\colorbox{passngain}{\textcolor{passngainink}{\strut RL gain}}\quad
\colorbox{passnloss}{\textcolor{passnlossink}{\strut RL loss}}\quad
\end{minipage}
\endgroup
\end{table}

%% file: arxiv/figures/qwen3_14b_selected_five_benchmarks_figure.tex
\begin{figure}[H]
\centering
\includegraphics[width=\linewidth]
{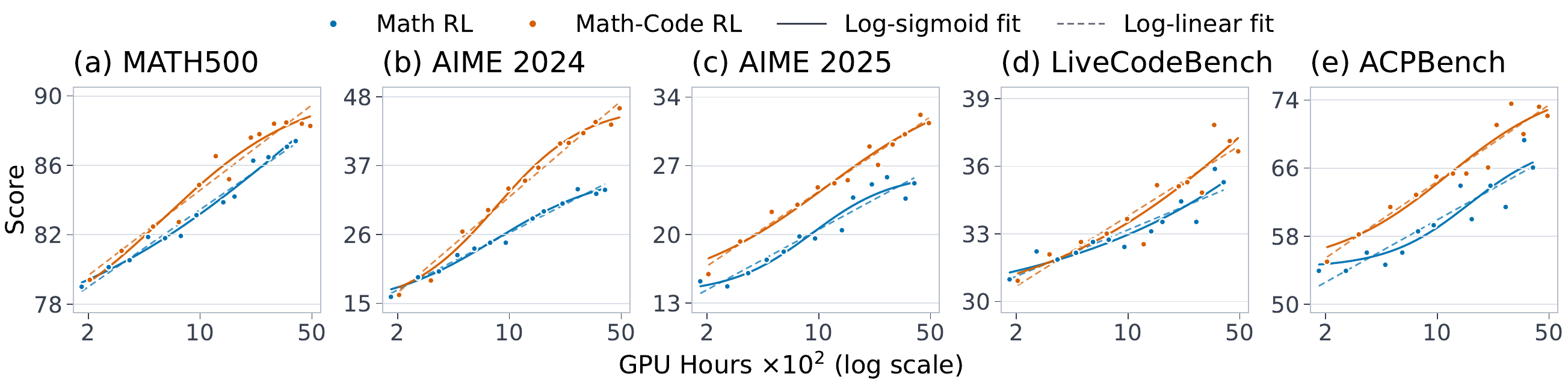}
\caption{\textbf{Scaling behavior of RL compute.} Both laws fit the observed
training regime effectively.}
\label{fig:qwen-scaling-five}
\end{figure}

%% file: arxiv/tables/qwen_scaling_five.tex
\begin{table}[H]
\centering
\caption{\textbf{Log-sigmoid and log-linear fits using GPU hours.}
$A$ is the fitted log-sigmoid asymptote on the raw $[0,1]$ score scale.
$C_{\rm mid}$ and $C_{\rm ref}$
are in units of $10^2$ GPU-hours. Both laws fit the observed training regime effectively.}
\label{tab:qwen-scaling-five}
\begingroup
\small
\renewcommand{\arraystretch}{1.10}
\arrayrulecolor{fitink}
\begin{tabular*}{\linewidth}{@{\extracolsep{\fill}}llS[table-format=1.2]S[table-format=1.3]S[table-format=2.1]>{\columncolor{fitmetric}}S[table-format=1.3]@{\hspace{8pt}}S[table-format=2.2]S[table-format=2.1]>{\columncolor{fitmetric}}S[table-format=1.3]@{}}
\toprule
& & \multicolumn{4}{c}{\textcolor{fitink}{\textbf{Log-sigmoid}}}
& \multicolumn{3}{c}{\textcolor{fitink}{\textbf{Log-linear}}} \\
\cmidrule(lr){3-6}\cmidrule(l){7-9}
\rowcolor{fitink!14}
\multicolumn{1}{c}{\textbf{RL data}} & \multicolumn{1}{c}{\textbf{Benchmark}}
& \multicolumn{1}{c}{$B$} & \multicolumn{1}{c}{$A$} & \multicolumn{1}{c}{$C_{\rm mid}$}
& \multicolumn{1}{c}{\cellcolor{fitmetric}\textcolor{fitmetricink}{\textbf{Fit $\mathsf{R^2}$}}}
& \multicolumn{1}{c}{$B_{\rm lin}(\times 100)$} & \multicolumn{1}{c}{$C_{\rm ref}$}
& \multicolumn{1}{c}{\cellcolor{fitmetric}\textcolor{fitmetricink}{\textbf{Fit $\mathsf{R^2}$}}} \\
\midrule
Math RL & MATH500 & 0.54 & 1.000 & 41.7 & 0.982 & 2.76 & 0.6 & 0.975 \\
Math RL & AIME 2024 & 1.23 & 0.361 & 8.1 & 0.981 & 5.66 & 1.2 & 0.982 \\
Math RL & AIME 2025 & 1.71 & 0.262 & 9.1 & 0.942 & 3.83 & 1.8 & 0.920 \\
Math RL & LiveCodeBench & 0.39 & 1.000 & \multicolumn{1}{c}{${>}10^4$} & 0.881 & 1.28 & 0.5 & 0.845 \\
Math RL & ACPBench & 1.69 & 0.693 & 15.8 & 0.861 & 4.56 & 2.9 & 0.851 \\
\midrule
Math-Code RL & MATH500 & 1.00 & 0.904 & 5.9 & 0.965 & 3.05 & 0.5 & 0.948 \\
Math-Code RL & AIME 2024 & 1.56 & 0.466 & 8.3 & 0.987 & 9.52 & 1.6 & 0.981 \\
Math-Code RL & AIME 2025 & 0.96 & 0.358 & 11.2 & 0.962 & 4.72 & 1.1 & 0.973 \\
Math-Code RL & LiveCodeBench & 0.59 & 0.485 & 89.3 & 0.873 & 1.94 & 1.0 & 0.857 \\
Math-Code RL & ACPBench & 1.19 & 0.762 & 11.7 & 0.926 & 5.59 & 1.6 & 0.934 \\
\bottomrule
\end{tabular*}
\arrayrulecolor{black}
\endgroup
\end{table}

%% file: arxiv/figures/qwen3_14b_selected_five_benchmarks_log_steps_figure.tex
\begin{figure}[H]
\centering
\includegraphics[width=\linewidth]
{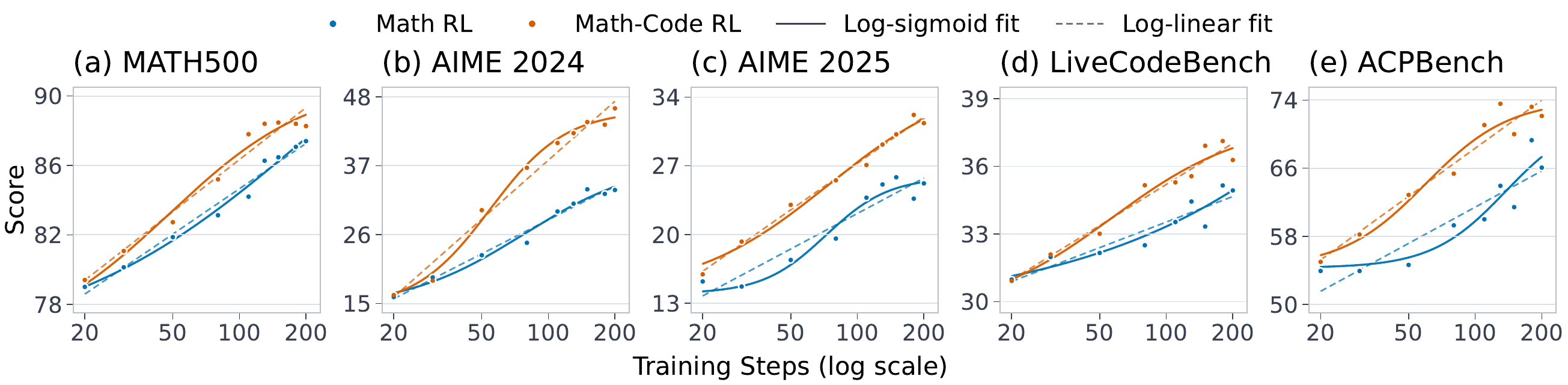}
\caption{\textbf{Training-step scaling on five benchmarks.}
Points show measured Qwen3-14B scores for Math RL and Math-Code RL.
Solid and dashed curves show log-sigmoid and log-linear fits as functions of
optimizer step, respectively. Each panel
uses its own truncated score scale.}
\label{fig:qwen-scaling-five-steps}
\end{figure}

%% file: arxiv/figures/qwen3_14b_pre200_step_extrapolation_figure.tex
\begin{figure}[H]
\centering
\includegraphics[width=\linewidth]
{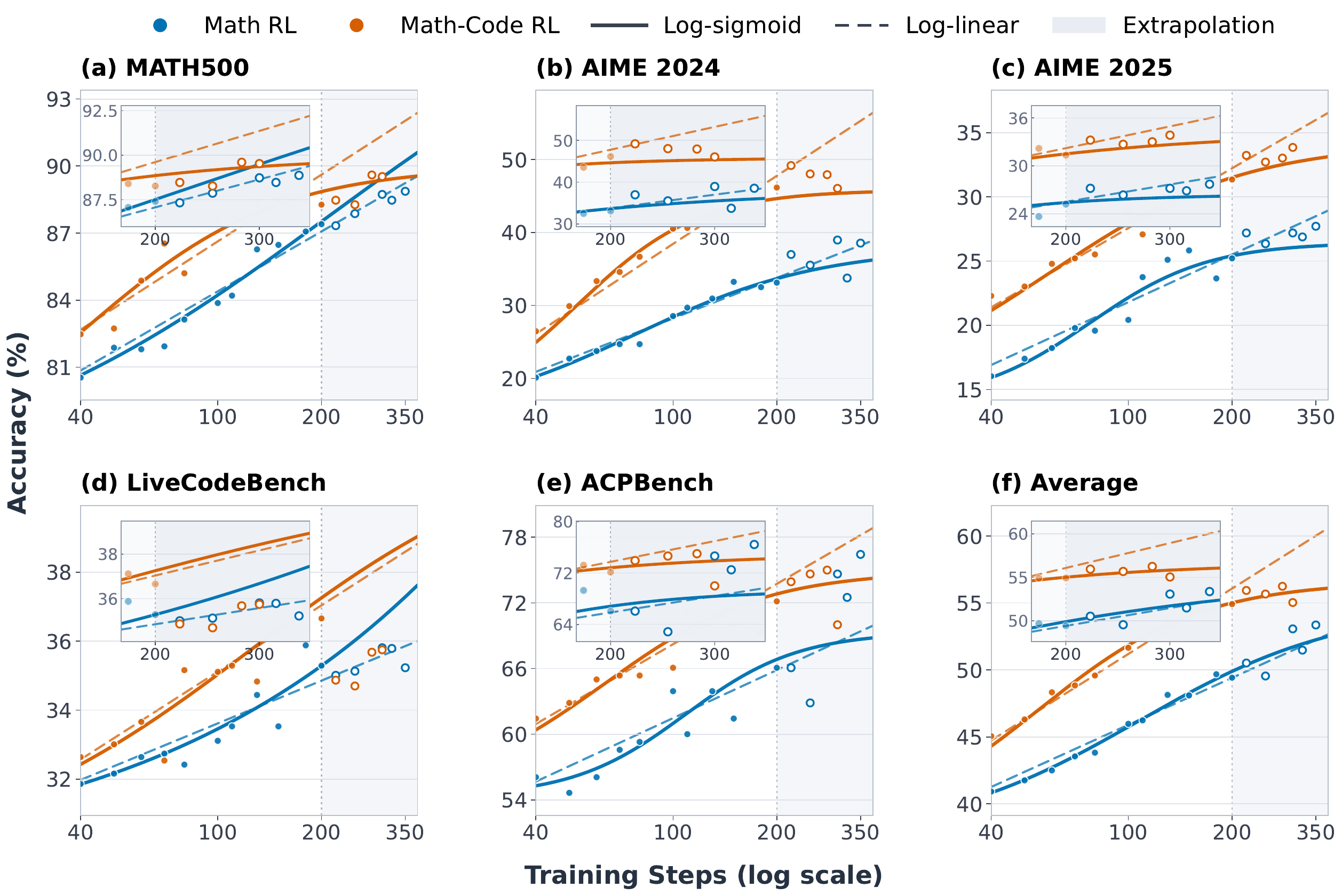}
\caption{\textbf{Scaling behavior using optimizer steps.} Both laws are fit
through step 200 and extrapolated through step 350, producing the same
qualitative pattern as fits based on GPU-hours.}
\label{fig:qwen-scaling-extrapolation-steps}
\end{figure}

%% file: arxiv/tables/qwen_scaling_five_steps.tex
\begin{table}[H]
\centering
\caption{\textbf{Log-sigmoid and log-linear fits using optimizer step.}
$A$ is the fitted log-sigmoid asymptote on the raw $[0,1]$ score scale.
$C_{\rm mid}$ and $C_{\rm ref}$
are in steps.}
\label{tab:qwen-scaling-five-steps}
\begingroup
\small
\renewcommand{\arraystretch}{1.10}
\arrayrulecolor{fitink}
\begin{tabular*}{\linewidth}{@{\extracolsep{\fill}}llS[table-format=1.2]S[table-format=1.3]S[table-format=3.1]>{\columncolor{fitmetric}}S[table-format=1.3]@{\hspace{8pt}}S[table-format=2.2]S[table-format=2.1]>{\columncolor{fitmetric}}S[table-format=1.3]@{}}
\toprule
& & \multicolumn{4}{c}{\textcolor{fitink}{\textbf{Log-sigmoid}}}
& \multicolumn{3}{c}{\textcolor{fitink}{\textbf{Log-linear}}} \\
\cmidrule(lr){3-6}\cmidrule(l){7-9}
\rowcolor{fitink!14}
\multicolumn{1}{c}{\textbf{RL data}} & \multicolumn{1}{c}{\textbf{Benchmark}}
& \multicolumn{1}{c}{$B$} & \multicolumn{1}{c}{$A$} & \multicolumn{1}{c}{$C_{\rm mid}$}
& \multicolumn{1}{c}{\cellcolor{fitmetric}\textcolor{fitmetricink}{\textbf{Fit $\mathsf{R^2}$}}}
& \multicolumn{1}{c}{$B_{\rm lin}(\times 100)$} & \multicolumn{1}{c}{$C_{\rm ref}$}
& \multicolumn{1}{c}{\cellcolor{fitmetric}\textcolor{fitmetricink}{\textbf{Fit $\mathsf{R^2}$}}} \\
\midrule
Math RL & MATH500 & 0.76 & 1.000 & 207.1 & 0.985 & 3.77 & 8.7 & 0.976 \\
Math RL & AIME 2024 & 1.61 & 0.379 & 77.3 & 0.979 & 7.83 & 16.5 & 0.976 \\
Math RL & AIME 2025 & 3.05 & 0.259 & 72.1 & 0.953 & 5.22 & 20.9 & 0.902 \\
Math RL & LiveCodeBench & 0.53 & 1.000 & \multicolumn{1}{c}{${>}10^4$} & 0.895 & 1.64 & 8.1 & 0.867 \\
Math RL & ACPBench & 2.60 & 0.723 & 137.2 & 0.899 & 6.14 & 31.3 & 0.827 \\
\midrule
Math-Code RL & MATH500 & 1.33 & 0.909 & 48.1 & 0.976 & 4.32 & 8.1 & 0.962 \\
Math-Code RL & AIME 2024 & 2.55 & 0.458 & 54.5 & 0.992 & 13.57 & 17.6 & 0.977 \\
Math-Code RL & AIME 2025 & 1.40 & 0.362 & 74.2 & 0.983 & 6.84 & 14.3 & 0.990 \\
Math-Code RL & LiveCodeBench & 1.44 & 0.380 & 56.9 & 0.961 & 2.63 & 11.1 & 0.956 \\
Math-Code RL & ACPBench & 2.29 & 0.740 & 59.7 & 0.956 & 8.15 & 17.8 & 0.944 \\
\bottomrule
\end{tabular*}
\arrayrulecolor{black}
\endgroup
\end{table}

%% file: arxiv/appendix/reasoning_route_analysis.tex
\section{Reasoning-Route Analysis}
\label{app:reasoning-mode-analysis}

This appendix provides implementation details, examples, and prompt templates
for the reasoning-route analysis in \Cref{sec:reasoning-mode-analysis}.

\subsection{Classification and measurement}

\paragraph{Questions and generation setup.}
The analysis in \Cref{sec:reasoning-mode-analysis} compares Qwen3-14B-Base
with Qwen3-Math-RL at step 200 using 16 responses per model for each of
240 questions from MATH500, AIME 2024, AIME 2025, GPQA-Diamond,
CodeContests, and LiveCodeBench. Responses use temperature 0.7, top-$p$ 0.95,
no top-$k$ truncation
($\text{top-}k=-1$), and a maximum generation length of 16,384 tokens.
The 16 responses per model and question use sampling seeds 0 through 15.

\paragraph{Route definition.}
Following \Cref{sec:reasoning-mode-analysis}, a reasoning route is the
strategy a response uses to reach its conclusion. A route is
\emph{identifiable} when the response contains enough reasoning or code about
the question to identify a solution method. The method need not be correct
or complete: a program with errors or an unfinished solution can reveal a
route, whereas a final answer alone does not. Each identifiable response
receives one label for its primary route: the method used to reach its final
conclusion or produce its submitted program. If this is unclear, we choose
the most developed method, breaking ties by choosing the first method
supported by reasoning or code. Routes differ in how they represent the
problem or in their main idea, not in notation, mistakes within the same
method, or whether it is carried out by hand or in code. For example,
mathematical simplification and a search through all possible cases are
different routes.

\paragraph{LLM-assisted classification.}
We use a two-stage process to identify each response's strategy and group
responses that follow the same route. Before classification, we remove
generated follow-up user/assistant turns and replace mathematical answer
statements and values, or GPQA choice labels, with placeholders.
In \textbf{Stage A}, GPT-5.6-Luna with high reasoning effort reads each
response separately, describes its primary strategy, and selects supporting
parts of its reasoning or code. These descriptions form the input to
\textbf{Stage B}, where the same model groups base and RL responses by
strategy within each question and assigns a shared route label to each
group. Stage B uses the descriptions rather than the original responses,
with model identity, correctness, and explicit final-answer labels hidden.
Both stages use fixed prompts for each subject area and a new Codex CLI
session (\texttt{codex exec}) for each labeling call. For quality control,
GPT-5.6-Sol with high reasoning effort independently checks whether the
Stage A descriptions are supported by the response text and whether Stage B
has incorrectly combined or separated routes.
Descriptions and labels that fail these checks are corrected and checked
again. Shortened prompt templates appear in \Cref{app:route-prompts}.

\paragraph{Route counts with equal sample sizes.}
For \Cref{fig:reasoning-routes}(a), we compare the number of distinct routes
using equal numbers of identifiable responses from both models. For each
question, we set the common sample size to the smaller number of identifiable
responses available from the two models. We average the number of distinct routes over all
subsets of that size for each model, then average equally over questions
within each benchmark and across the full set. When both models have 16
identifiable responses, this simply counts the distinct routes among all
16 responses. The overall mean weights benchmarks by their question counts.

\paragraph{Routes not found in base samples.}
For \Cref{fig:reasoning-routes}(b), we compute the fraction of each
question's 16 RL responses whose primary route does not appear in its original
16 base responses, then average over questions. The overall fraction is
487 of 3,840 RL responses (12.7\%).

\paragraph{Confidence intervals.}
We use 5,000 bootstrap resamples of questions for each
benchmark and the full set. Questions are sampled with replacement,
keeping their base and RL measurements together. The 95\% confidence
intervals use the 2.5th and 97.5th percentiles of the bootstrap estimates
for each panel's means and the paired mean difference reported in
\Cref{sec:reasoning-mode-analysis}. Route labels stay unchanged
during resampling.

\subsection{Examples of route changes after RL}
\label{app:route-examples}

These are the two examples discussed in \Cref{sec:reasoning-mode-analysis}.
Both have 16 identifiable responses per model, so the route counts use
all responses without sample-size adjustment.

\paragraph{Concentration on a route found in base samples.}
In \texttt{livecodebench-0088}, the task is to minimize
$|\mathrm{nums}[i]-\mathrm{nums}[j]|$ subject to $|i-j|\ge x$.
Eight base responses binary-search a sorted list and compare neighboring
values, six compare index pairs with nested loops, and two use indices
sorted by value. All 16 RL responses use the binary-search route, reducing
the observed route count from three to one.

\paragraph{Use of a route not found in the base sample.}
In \texttt{livecodebench-0163}, the task is to find positions where pattern
$a$ appears within distance $k$ of a position where pattern $b$ appears.
All 16 base responses compare pairs of matching positions directly.
Five RL responses use the same method, while 11 use binary search over the
list of positions where $b$ appears to find the lower end of $[i-k,i+k]$.
This gives one observed route for base and two for RL; the 11 binary-search
responses are counted as using a route not found in the base sample.

\input{arxiv/appendix/route_prompt_templates}

%% file: arxiv/appendix/route_prompt_templates.tex
\subsection{Shortened prompt templates}
\label{app:route-prompts}

These templates summarize the instructions for both stages and describe the
output fields in plain language.

\begingroup
\definecolor{routePromptA}{HTML}{245B91}
\definecolor{routePromptB}{HTML}{176B57}
\definecolor{routePromptInput}{HTML}{A05A16}
\definecolor{routePromptOutput}{HTML}{75549C}
\setlength{\fboxsep}{8pt}
\setlength{\fboxrule}{0.4pt}
\noindent\fcolorbox{routePromptA!55}{routePromptA!5}{%
\begin{minipage}{\dimexpr\linewidth-2\fboxsep-2\fboxrule\relax}
\small\raggedright
\textcolor{routePromptA}{\textbf{Stage A: describe one response's strategy (shortened)}}
\par\medskip
\textcolor{routePromptInput}{\textbf{Input.}} The question, one cleaned response, and its response ID.
\begin{enumerate}[font=\color{routePromptA}\bfseries,leftmargin=*,itemsep=2pt,parsep=0pt,topsep=5pt]
\item Read the entire response on its own. Describe only methods it carries
out or specifically proposes; do not solve the question or judge correctness.
\item Choose the method used for the final conclusion or submitted program.
If unclear, choose the most developed method, then the first method supported
by reasoning or code.
\item Describe the initial approach, problem representation, and main solution
step, without final-answer values or quality judgments.
\item Copy supporting reasoning or code exactly and link each claim to its
supporting text. Mark the response as ``no route'' if no method is attempted,
or ``unclear'' if the method cannot be identified.
\end{enumerate}
\textcolor{routePromptOutput}{\textbf{Output (main fields).}} Return a JSON list with one record containing
the response ID, route status (one route, a route change, no route, or unclear),
how the primary route was selected, starting approach, problem representation,
main solution step, short strategy summary, and supporting text linked to
each claim.
\end{minipage}}
\par\medskip
\noindent\fcolorbox{routePromptB!55}{routePromptB!5}{%
\begin{minipage}{\dimexpr\linewidth-2\fboxsep-2\fboxrule\relax}
\small\raggedright
\textcolor{routePromptB}{\textbf{Stage B: group responses into routes (shortened)}}
\par\medskip
\textcolor{routePromptInput}{\textbf{Input.}} The question and its ID, plus the Stage A strategy descriptions
for all base and RL responses to that question.
\begin{enumerate}[font=\color{routePromptB}\bfseries,leftmargin=*,itemsep=2pt,parsep=0pt,topsep=5pt]
\item Use these descriptions to group responses by their primary strategy.
Do not infer model identity or correctness.
\item Group responses under the same route when they differ only in notation,
calculation errors, completeness, wording, or whether the same method is carried
out by hand or in code. Separate routes only when the initial approach, problem
representation, or main solution step differs.
\item Assign every response ID to exactly one group. Choose group members
before naming routes, and describe the strategy shared by every member.
\item Place responses marked ``no route'' or ``unclear'' in a separate group
for responses without an identifiable route.
\end{enumerate}
\textcolor{routePromptOutput}{\textbf{Output.}} Return a JSON record containing the question ID and a list
of route groups. For each group, give its name, a one-sentence strategy
description, and the IDs of its responses. Copy every response ID exactly.
\end{minipage}}
\par
\endgroup

%% file: arxiv/appendix/empirical_case_studies.tex
\section{Empirical Analysis of Theoretical Predictions}
\label{app:empirical-cases}

This appendix examines empirical evidence for the transfer, coverage, and
scaling predictions of \Cref{sec:transfer,sec:passn,sec:scaling}, respectively.

\subsection{Transfer and forgetting across tasks and models}
\label{app:transfer-output-analysis}

We examine the task- and model-dependent transfer and forgetting predicted
by \Cref{sec:transfer}, using cases that illustrate differences in
target-task requirements and teacher--student compatibility.

\paragraph{Protocol violations in tool use.}
ToolUse requires a specified \texttt{Action} name and structured
\texttt{Action Input} arguments~\citep{shenfeld2026selfdistillation}.
One prompt requests five facts about the physical characteristics of
axolotls, together with source URLs. Qwen3-14B-Base produces one
\texttt{Action: getAxolotlFacts} call, with arguments specifying the category
``physical characteristics'' and a limit of five, and is graded correct.
Qwen3-Math-RL first produces the same call, then adds ``Please provide the
output from the getAxolotlFacts tool,'' simulates further interaction,
and repeats calls. These additional calls cause the response to be graded
incorrect. The change from issuing a single requested call to simulating
further interaction turns a correct initial action into an unsuccessful
response. This example helps explain the ToolUse decline in
\Cref{tab:transfer-forgetting}: additional generated interaction can reduce
performance on a task requiring a precise tool call, even as mathematical
performance improves.

\paragraph{Errors in factual verification.}
HaluEval requires a ``Yes'' or ``No'' judgment on whether a candidate
answer contains hallucinated information~\citep{li2023halueval}.
One item includes the question ``Which magazine was started first Arthur's
Magazine or First for Women?'' The accompanying background information
identifies \emph{Arthur's Magazine} as a periodical published in 1844--1846.
The candidate answer is ``First for Women was started first,''
whereas the dataset's reference answer is ``Arthur's Magazine.''
Qwen3-14B-Base correctly responds ``Yes'', while Qwen3-Math-RL incorrectly
responds ``No''. Both outputs satisfy the required label format; the error
is the acceptance of a false chronological comparison. This gives a concrete
example of the HaluEval decline in \Cref{tab:transfer-forgetting}: gains in
mathematical problem solving can coexist with poorer judgment of a supplied
answer's factual accuracy, reflecting the different requirements of the
target task.

\paragraph{Compatibility between teacher and student models.}
Qwen3-14B-Base and Gemma-3-12B-IT are trained with SFT on the same 40K
mathematics demonstrations obtained by rejection sampling from Qwen3-32B.
Gemma-3-SFT declines on all 14 OOD benchmarks, whereas Qwen3-SFT declines
on ten and improves on four (\Cref{tab:transfer-forgetting}).
Gemma's losses are also substantially larger on several benchmarks. The difference therefore concerns both the breadth and severity
of forgetting across tasks.

In \Cref{eq:sft-selected-modes}, the teacher policy determines the
demonstrated strategies, while the student's feature overlaps determine
their support on a target prompt. Identical demonstrations can therefore
lead to different sets $\mathcal S_E(x')$ for the two students.
\Cref{thm:sft-transfer} relates forgetting to whether these sets retain
strategies valid for the target, and \Cref{eq:sft-limit-policy} shows that
the probabilities within each set also depend on the student's initial
strategy preferences on training and target prompts.
The shared Qwen model family offers a plausible explanation for better
compatibility with the Qwen student: the teacher's demonstrated strategies
may be closer to its initial preferences. Imitating these demonstrations
can then preserve strategies useful beyond mathematics for Qwen while
reducing their probability in Gemma, explaining the selective transfer of
Qwen3-SFT and the broader, more severe forgetting of Gemma-3-SFT.

\subsection{Prompt-level Pass@$N$ analysis}
\label{app:passn-question-analysis}

We examine the finite-budget coverage predictions of \Cref{sec:passn},
focusing on how gains and losses on questions with rare successes change
benchmark coverage.

For the benchmark distribution $\mathcal B$, any policy $\pi$
satisfies the exact identity
\begin{equation}
\operatorname{Pass@}N(\pi)-\operatorname{Pass@}N(\pi_0)
=\E_{x'\sim\mathcal B}\left[
(1-R_{\pi_0}(x'))^N-(1-R_\pi(x'))^N
\right].
\label{eq:passn-tail-identity}
\end{equation}
Thus the aggregate reversals in \Cref{fig:passn-qwen-gemma} reflect
changes in the distribution of per-question success probabilities.
We examine per-question records to show how gains and losses on individual
questions produce these benchmark-level reversals.

\paragraph{Benchmark dependence: AIME 2024 versus AIME 2025.}
In the Qwen Pass@$N$ evaluation, 1,024 responses were sampled per question
and reused to estimate coverage at different budgets.
\Cref{tab:aime-boundary-items} lists the questions accounting for the
observed AIME 2024 and AIME 2025 coverage changes at $N=1024$.

\input{arxiv/tables/aime_boundary_items}

Under Math RL, some questions lose all observed successes while others gain
them, producing opposite coverage shifts in the two sets. On AIME 2024, Math RL has no
correct responses on two combinatorics questions for which the base has
three and one, respectively, lowering coverage from $27/30$ to $25/30$.
On AIME 2025, Math RL has 24 correct responses on an area question and one
on a recurrence question for which the base has none, raising coverage
from $23/30$ to $25/30$. All other questions retain the same observed
coverage status. Three of the four changes depend on only one to three
correct responses. These sampled success probabilities are of order $1/N$,
the scale governing coverage sensitivity in \Cref{thm:passn-regimes}.
The averaging in \Cref{eq:passn-tail-identity} explains how the same
model can shift coverage in opposite directions when the two question
sets contain different gains and losses. The exact discordant-pair test
gives a two-sided $p=0.5$ for each set. These coverage estimates are
sensitive to rare successes: for a question with zero successes in 1,024
independent draws, the one-sided 95\% upper bound on its success
probability is approximately $0.29\%$.

\paragraph{Training-data dependence: CodeContests.}
On CodeContests, the Qwen base model and Math RL model produce at
least one correct response for $238$ and $240$ of the $377$ questions,
respectively; Math-Code RL does so for $228$ questions. The base model
solves $22$ questions for which Math-Code RL produces no correct response,
while Math-Code RL solves $12$ questions for which the base produces no
correct response.
The following examples illustrate the two outcomes:
\begin{itemize}
\item \emph{Loss of coverage.}
\texttt{codecontests-1192} requires summing pairwise absolute differences
for up to $10^5$ inputs. The base answers it correctly in $375$ of
$1{,}024$ samples. One correct program generated by the base model sorts
the inputs and computes the sum in $O(n\log n)$ time.
None of the $1{,}024$ Math-Code RL responses is correct; every generated
program times out.
\item \emph{Gain of coverage.}
\texttt{codecontests-1332} requires finding the minimum number of subset
toggles needed to turn on the lamps in each prefix of an array.
Math-Code RL answers it correctly in $74$ of $1{,}024$ samples,
whereas none of the base responses is correct.
\end{itemize}
\Cref{thm:pg-transfer} provides an interpretation of these outcomes in
terms of strategy selection. For \texttt{codecontests-1192}, a correct
base response uses a sorting strategy $z_{\rm sort}\in\mathcal M(x')$.
By \Cref{eq:pg-transfer-identity}, training changes its odds relative to
competing strategies, with each training prompt's contribution weighted by
its feature overlap with the target. Training on code problems
can therefore reduce the probability of selecting this efficient algorithm
if these weighted updates favor competing strategies.
For \texttt{codecontests-1332}, the same mechanism can increase the
probability of selecting a strategy that solves the lamp problem.
\Cref{eq:pg-target-correct-mass} expresses both effects in the vanishing-KL
limit by conditioning the base policy on $\mathcal S(x')$: this can
increase or decrease the probability assigned to valid strategies in
$\mathcal M(x')$.

Pass@1 nevertheless increases from $19.7\%$ to $31.6\%$
(\Cref{tab:passn-full}). With only $12$ newly solved questions among $377$,
most of this increase must come from higher average accuracy on the
questions solved by both models.

\paragraph{Base-model dependence: AIME 2025 and BoxNet.}
Under Math RL, sampled AIME 2025 coverage rises from $23/30$ to
$25/30$ for Qwen but falls from $20/30$ to $18/30$ for Gemma.
In the vanishing-KL limit of the transfer model, RL conditions the base
policy on strategies favored by training (\Cref{eq:pg-target-correct-mass}).
The probability of selecting
a valid strategy rises or falls according to whether valid strategies have
a larger or smaller share of probability within this set than in the
original policy. Different initial strategy preferences and feature overlaps
can therefore produce opposite changes under the same training mixture.

BoxNet also differs across base models (\Cref{tab:passn-full}): Qwen
improves Pass@1 from $3.8\%$ to $9.9\%$ and observed Pass@1024 coverage
from $34/100$ to $51/100$, while Gemma lowers Pass@1 from $9.1\%$
to $7.3\%$ but expands coverage from $16/100$ to $30/100$.
For Gemma, the 16 questions solved by both policies lose $2{,}819$
correct responses, exceeding the $1{,}011$ gained on 14 newly solved
questions. Thus successes become spread across more questions but less
frequent overall, explaining the opposing changes in the two metrics
(\Cref{eq:passn-tail-identity}).

In \texttt{boxnet-0098}, the prompt requires moving red, blue, and green
boxes to their targets, with the blue box first moved to an adjacent
square~\citep{stojanovski2026reasoning}. The base has $0/1024$ successful
responses, whereas Gemma RL has $749/1024$. In one matched pair, the base
plan delivers only the red box, while the RL plan completes all three
deliveries, including the blue box's intermediate move. Both outputs are
valid fenced JSON. This example makes the target's requirements concrete: a
useful planning strategy must account for the prerequisite move and
all three goals (\Cref{thm:pg-transfer}).

Together, these records show different question-level outcomes behind the
aggregate curves. The same Qwen Math RL model loses observed coverage
on two AIME 2024 questions but gains it on two AIME 2025 questions.
Math-Code RL increases accuracy on some CodeContests questions and
decreases it on others. Gemma's Math RL model broadens coverage on BoxNet
while lowering accuracy on questions solved by both models.
These records illustrate changes in coverage across tasks and models,
with differences in runtime outcomes, plan completeness, and the
distribution of correct responses across questions.

\subsection{Scaling fit results and interpretation}
\label{app:scaling-fit-interpretation}

We examine the empirical implications of the scaling analysis in
\Cref{sec:scaling}, focusing on the local validity of fitted laws and the
interpretation of their coefficients.

The log-sigmoid and log-linear curves in
\Cref{fig:qwen-scaling-extrapolation-gpu-hours} both agree well with the observations
over the range of compute used for fitting. Their similarity there does not
determine how scores will change beyond that range: the log-linear law maintains a constant slope
against $\log C$, whereas the log-sigmoid law bends toward $A$. The later
evaluations distinguish the two curve shapes; which law extrapolates better
varies across training runs and benchmarks. The five-benchmark average is
smoother than some individual trajectories, so its fit alone does not
establish benchmark-level predictability.

The coefficients in \Cref{tab:qwen-scaling-five} describe local score
trends. The reported $100B_{\rm lin}$ is the fitted change in score
percentage points per unit increase in $\log C$. Math-Code RL has a larger
log-linear slope on all five benchmarks; for AIME 2024 it is $9.52$ rather
than $5.66$ points. This difference characterizes the fitted range, not a
sustained rate of improvement. For the log-sigmoid law, writing
$q(C)=[1+(C_{\rm mid}/C)^B]^{-1}$ gives
$\mathrm{d}R/\mathrm{d}\log C=(A-R_0)Bq(C)(1-q(C))$; the exponent $B$
alone does not determine the local slope. The fitted $C_{\rm mid}$ is the
compute level halfway between $R_0$ and $A$, not a measured onset of
saturation.

The fitted asymptote also requires care. On MATH500, Math RL reaches the
imposed bound $A=1$ and has $C_{\rm mid}=41.7$ in units of $10^2$
GPU-hours, near the end of the fitting range; Math-Code RL has $A=0.904$
and $C_{\rm mid}=5.9$. These values describe different fitted curve shapes
but do not establish long-run performance ceilings. The
fits in \Cref{tab:qwen-scaling-five-steps} use optimizer steps to measure compute,
so their coefficients should be interpreted on that scale.

$B$ and $B_{\rm lin}$ describe the shapes of the fitted score curves.
The gain-weighted feature overlap in the theoretical log-linear result
enters the approximation bound \Cref{eq:benchmark-linear-scaling-formal}.
Longer-range forecasts require separate validation.

%% file: arxiv/tables/aime_boundary_items.tex
\begin{table}[t]
\centering
\small
\caption{AIME questions accounting for the changes in Qwen's observed
Pass@1024 coverage after Math RL. Each row gives a question's evaluation
index and a brief problem summary. The Base and Math RL columns give the
number of correct responses among 1,024 samples per question; zero
indicates no correct response.}
\label{tab:aime-boundary-items}
\begin{tabular}{@{}llp{0.56\linewidth}rr@{}}
\toprule
Benchmark & Index & Question summary & Base & Math RL \\
\midrule
AIME24 & 8 & Count maximal placements of two colors of chips on a $5\times5$ grid. & 3 & 0 \\
AIME24 & 25 & Count octagon colorings satisfying a rotation condition. & 1 & 0 \\
AIME25 & 11 & Find the area of a bounded region defined by nonlinear inequalities in the plane. & 0 & 24 \\
AIME25 & 27 & Compute a modular remainder from a long rational recurrence. & 0 & 1 \\
\bottomrule
\end{tabular}
\end{table}